\def\eqref#1{equation~\ref{#1}}
\def\Eqref#1{Equation~\ref{#1}}
\def\1{\bm{1}}
\def\rn{{\mathbb{R}^d}}
\DeclareMathAlphabet{\mathsfit}{\encodingdefault}{\sfdefault}{m}{sl}
\SetMathAlphabet{\mathsfit}{bold}{\encodingdefault}{\sfdefault}{bx}{n}
\newcommand{\E}{\mathbb{E}}
\newcommand{\R}{\mathbb{R}}
\DeclareMathOperator*{\argmax}{arg\,max}
\DeclareMathOperator*{\argmin}{arg\,min}
\newtheorem{theorem}{Theorem}
\newtheorem{lemma}{Lemma}
\newtheorem{assumption}{Assumption}
\newcommand{\D}{\mathcal{D}}
\newcommand{\F}{\mathcal{F}}
\newcommand{\pr}[1]{\mathbb{P}\left(#1\right)}
\newtheorem{repeatthm@}{Theorem}
\newenvironment{repeatthm}[1]{%
    \def\therepeatthm@{\ref{#1}}
    \repeatthm@
}
{\endrepeatthm@}
\newtheorem{repeatlem@}{Lemma}
\newenvironment{repeatlem}[1]{%
    \def\therepeatlem@{\ref{#1}}
    \repeatlem@
}
{\endrepeatlem@}
\author{\name Juliette Achddou \email juliette.achdou@gmail.com \\
      \addr Department of Computer Science\\
      Università degli Studi di Milano
      \AND
      \name Olivier Cappé \email olivier.cappe@cnrs.fr \\
      \addr Department of Computer Science\\
      ENS Paris
      \AND
      \name Aurélien Garivier \email aurelien.garivier@ens-lyon.fr\\
      \addr Department of Mathematics \\
      ENS Lyon}
\begin{document}

\title{Stochastic Direct Search Methods\\ for Blind Resource Allocation}
\maketitle

\begin{abstract}
Motivated by programmatic advertising optimization, we consider the task of
sequentially allocating budget across a set of resources. At every time step,
a feasible allocation is chosen and only a corresponding random return is observed.
The goal is to maximize the cumulative expected sum of returns. This is a realistic
model for budget allocation across subdivisions of marketing campaigns, with the
objective of maximizing the number of conversions. 
We study direct search (also known as pattern search) methods for linearly constrained
and derivative-free optimization in the presence of noise, which apply in particular 
to sequential budget allocation. These algorithms, which do
not rely on hierarchical partitioning of the resource space, are easy to implement; they
respect the operational constraints of resource allocation by avoiding
evaluation outside of the feasible domain; and they are also compatible with warm start by
being (approximate) descent algorithms. However, they have not yet been analyzed from the
perspective of cumulative regret. We show that direct search methods achieves finite regret
in the deterministic and unconstrained case. In the presence of evaluation noise and linear
constraints, we propose a simple extension of direct search that achieves a regret upper-bound
of the order of $T^{2/3}$. We also propose an accelerated version of the algorithm, relying on
repeated sequential testing, that significantly improves the practical behavior of the
approach.

\end{abstract}

\section{Introduction}
\subsection{Motivation: Blind Resource Allocation}
In the field of programmatic marketing, advertisers are
given daily budgets that they are required to entirely distribute
across a number of predefined subdivisions of a campaign. Their goal
is to maximize some notion of cumulative reward during the lifetime of the
campaign, corresponding to the number of clicks or purchases generated
by the campaign. The expected reward generated by each
subdivision every day is an unknown function of the daily budget
allocated to that campaign.  We focus on the context in which, when
choosing a specific allocation, the advertiser only observes a noisy
version of the total reward.  The optimization task faced by the
advertiser can thus be formalized as a continuous resource allocation
problem, under zeroth order and noisy feedback. Indeed, a key
operational constraint 
is the impossibility to directly
access higher-order (derivative) information. Furthermore, every noisy
evaluation of the objective function has a cost related to the value
of the function, that needs to be accounted for in the performance
criterion. 
The cumulative reward is thus more
relevant than alternative traditional measures of performance based,
for instance, on the distance to the optimum or the norm of the
gradient of the objective function reached after some iterations. Note that the resource allocation task that we consider is different from that in which the resource constraints are cumulative, i.e. where the budget spans the whole period instead of one day or time-step. Cumulative and step-level constraints lead to distinct optimization problems, none of which being a reduction of the other.
  
The blind resource allocation task may be seen as a specific instance of the more general model
of zeroth-order linearly constrained optimization considered in this paper. 
For resource allocation, we assume that the learner has access to $d+1\in \mathbb{N}$
different resources. To keep up with the dominant convention in
optimization, we consider a minimization problem for which the
costs may be thought of as minus the rewards. At each round
$t \in \{1,\ldots T\}$, the learner is allowed to choose her level of
consumption of each resource, on a continuous scale
from $0$ to $1$.  We impose that the consumption levels of all the
resources sum to $1$ (corresponding to the constraint of spending all the daily budget in the advertising context). The use of resource $i \in \{1,\ldots, d+1\}$ to
a level of $x_t^{(i)}$ generates an expected marginal cost
$w_i(x_t^{(i)})$.  Overall, the expected one-step cost of the learner
is given by $\sum_{i =1}^{d+1} w_i(x_t^{(i)})$, where the set of all
possible consumption levels $(x_t^{(1)}, \ldots, x_t^{(d+1)})$
corresponds to the $d$ dimensional simplex.
The goal of the learner is to sequentially minimize the expected
cumulative cost over $T$ evaluations of the function,
having access only to a noisy version of the expected cost associated
to the allocation tried at step $t$. Not only are the cost functions
$w_1$ to $w_{d+1}$ unknown, but one cannot observe their individual
outputs.\looseness-1
%
%

\subsection{Model}\label{sec:model}

While the main application of interest to us is resource allocation, our results are valid for more general
linearly constrained optimization problems. 
We  consider a generic optimization domain $\mathcal{D}$ that is a subset of
$\mathbb{R}^d $ defined by linear constraints:
$\mathcal{D} = \{x \in \mathbb{R}^d, A_I x \leq u \}$ with
$m \in \mathbb{N}$, $A_I \in \R^{m \times d},$ and $u \in \R^{m} $.
At each round $t \in \{1,\ldots T\}$, the learner selects
$x_t \in \mathcal{D}$ and incurs a cost $f(x_t) + \epsilon_t$, where
$\epsilon_t$ is assumed to be a centered $\sigma$-subgaussian noise,
with $\sigma$ known to the
learner. \looseness-1 

The goal of the learner is to minimize the cumulative cost over $T$
evaluations of the function, or equivalently to \emph{minimize the
  cumulative regret}
$$R_T = \sum_{t=1}^T f(x_t) - f(x_\star)\;, $$
denoting  by $x_\star = \argmin_{x \in \mathcal{D}} f(x)$ the optimal allocation. We make the following regularity assumption on $f$.\looseness-1

\begin{assumption}\label{ass:reg}
$f$ is continuously differentiable,  $\beta$-smooth and $a$-strongly convex on $\mathcal{D}$.
\end{assumption}
Note  that under Assumption \ref{ass:reg}, $f$
has a unique minimum and is bounded from below. Assumption
\ref{ass:reg} is  common in online
optimization due to the difficulty of controlling the cumulative cost
without this assumption.\looseness-1

In programmatic advertising and economics, it is common to observe marginal 
returns that decrease as the level of a resource increases 
(following the so-called law of diminishing returns). Assuming convexity 
of $f$ on $\mathcal{D}$ is therefore reasonable, since 
Assumption \ref{ass:reg} implies that when $f$ has the form
$ f(x_t^{(1)}, \ldots x_t^{(d)}) = \sum_{i=1}^{d} w_i(x_t^{(i)}) +
w_{d+1} \left(1 - \sum_{i=1}^{d}x_t^{(i)}\right)$, the marginal cost
functions $w_1$ to $w_{d+1}$ are also convex and hence satisfy the law
of diminishing return (viewing $-w_i$ as the marginal utility
associated to the $i$-th resource).\looseness-1

\subsection{Related Works}\label{sec:rel_works}
The discrete counterpart of the resource allocation model in which the
resources can only be used up to discrete consumption levels, is a
celebrated model of operations research with multiple
applications. Its properties have first been discussed by
\cite{koopman} who proposed the first algorithmic solution for this
problem. Koopman's works have further been extended by
\cite{gross,katoh1979polynomial} who propose more efficient algorithms
under specific assumptions on the number of resources and the total
consumption budget. The range of applications is wide, including
experimental design, load management in an industrial context,
computer scheduling and, more recently, the \emph{adwords} problem
introduced by
\cite{mehta2007adwords}. 
Recently, \cite{Agrawal:2015} studied online and offline resource
allocation, motivated by the latter
task. 
With the same motivation, \cite{fontaine2020adaptive} focus on the
online and continuous version of resource allocation in which the
learner accesses the derivatives. The method studied by
\cite{fontaine2020adaptive} extends the bisection method in dimension
$d>1$.


The broader problem of derivative-free optimization in noisy
environments has been considered by researchers coming from different
horizons. A relevant stream of works originates from the bandit
community, which considered this task as an extension of the more
traditional multi-armed bandit problem \citep[see
e.g][]{auer2002finite}. The class of $\mathcal{X}$-armed bandits
models focuses on the case where a learner can select actions in a
generic measurable space and the mean-payoff function is regular. In
\citep{bubeck2011x}, for example, the mean payoff function is supposed
to be locally Lipschitz with respect to some dissimilarity
measure. \cite{bubeck2011x} and \cite{munos2014bandits} adopt the
approach of hierarchical optimization, in which the optimization
domain is iteratively partitioned, resulting in finer and finer
partitions, that are required to be balanced in some sense. The
learner maintains an upper confidence bound of the goal function that
is constant on each cell defined by the finest partition.  The
algorithm proposed by \cite{bubeck2011x} achieves a regret of the
order of $\sqrt{T}$ when the learner knows the exact order of the
smoothness at the optimum. However, partitioning the domain in a
hierarchical and balanced way is relatively easy when the domain is an
hypercube, but is a computational problem in itself when the domain
has a more complex form. We also mention that knowing the smoothness
is considered a challenge most of the time in black-box optimization,
so that several methods have been introduced that are adaptive to the
smoothness
(\cite{locatelli2018adaptivity,valko2013stochastic,shang2019general}).
We mention that concurrently to HOO based on hierarchical
partitioning, \cite{agarwal2011stochastic} has also proposed a
different strategy for $\mathcal{X}$-armed bandits, but this time
convex, with ellipsoid methods, that also result in $O(\sqrt{T})$
regret. \looseness-1


The extension of more traditional first-order optimization methods has
also been considered. When the function
evaluation is not perturbed by any noise, \cite{nesterov2017random}
consider random gradient descent based on finite differences to
estimate the gradient. \cite{flaxman2004online} consider a
version of stochastic gradient descent with a one-point estimate of
the gradient for the adversarial setting introduced by
\cite{zinkevich2003online} in which at each time step, a new goal
function is chosen by an adversary, making it impossible to rely on a
two-point estimate of the gradient. In this setting,
\cite{flaxman2004online} show an adversarial regret bound of the order
of $T^{5/6}$.  Later on, \cite{hazan2014bandit},
\cite{hazan2016optimal}, \cite{bubeck2017kernel} propose new methods
for the same setting, but with adversarial convex or strongly-convex
functions, showing improved regret bounds, as low as $\sqrt{T}$.  In a
stochastic setting that is closer to ours,
\cite{akhavan2020exploiting,bach2016highly} consider a version of
stochastic gradient descent with unbiased estimates of the gradient,
obtained by finite differences. While they provide an analysis in term
of the regret, they focus on a restricted notion of regret that is
different from the one considered in this work. The algorithms that
they propose rely on a number of samples used to estimate the
gradient at each iteration of the gradient descent algorithm. But the
regret only accounts for the cost incurred by the iterates of the
gradient descent algorithm and ignores the regret incurred by the
samples used for the estimation of the gradient. Moreover, the
constraints also do not apply to those samples, meaning that the
algorithm is allowed to get samples outside of the feasible domain in
order to estimate the gradient. The authors prove an upper bound on their version of the
regret, which is of the order of $\sqrt{T}$ when Assumption
\ref{ass:reg} is satisfied. In Section~\ref{sec:illustration} below we
will see how evaluations outside of the feasible domain can be avoided,
using for instance ideas of \cite{bravo2018bandit}, at the price of
an increased regret rate. \looseness-1


\subsection{Contribution}

In this paper, we focus instead on a class of simple but mathematically
well grounded algorithms known as direct or pattern search methods.  Direct search
\citep{kolda2003optimization} makes use
of the well-known fact that if the objective function is continuously
differentiable, then at least one of the directions of any
\emph{positive spanning set} (a set that spans the space with
non-negative coefficients, abbreviated as PSS in what follows) is a
descent direction. It explores the space by evaluating the function at new points that are located in a
number of predefined search-directions from the current iterate, at a
distance from that iterate that varies with time. The algorithm moves to a new iterate
only if this iterate yields a sufficient improvement of the value of the
function (there exist other versions of the algorithm where the
sufficient decrease condition is replaced with a constraint on the
choice of the trial directions).  The sample-complexity of such an algorithm
has been analyzed in \citep{vicente2013worst} in the deterministic and
unconstrained setting. 
\cite{lewis2000pattern} study direct search in linearly
constrained domains. Handling the constraints in direct search is
quite simple, as it consists in testing only the directions in the set
of search-directions that are feasible.  \cite{gratton2015direct}
analyzed direct search with random sets of search-directions instead
of predefined ones and later extended the analysis to the case of
linearly-constrained domains
\citep{gratton2019direct}. \cite{dzahini2022expected} extended their
work by analyzing a similar algorithm in the presence of noise, but
without constraints.  \cite{dzahini2022expected} relies on an
assumption on the decrease of the noise. In this paper, we will study
direct search algorithms that rely on a number of samples at each
point to build tight estimates of the function at the trial points,
which can be understood as a way to decrease the
noise. \cite{dzahini2022expected} analyzes some notion of sample
complexity of direct search, which only takes the iterates into
account rather than the number of function evaluations needed, which
is not appropriate in our setting.

Our purpose is to study these methods that are suitable for the blind
resource allocation model, i.e. in particular, compatible with zeroth-order feedback,
computationally tractable and that do not require to sample points outside of the feasible domain.
Besides satisfying the above requirements,
these algorithms have the advantage of being approximate descent algorithms with high probability,
a guarantee that is useful in practice, allowing, for instance, warm start from previously tested allocations. 
The adaptation of direct search to the noisy case is achieved by performing enough sampling to
ensure that the algorithm moves to a new iterate only if it results in
a sufficient improvement, with high probability. We propose two ways
of doing so: the first method (termed FDS-Plan) simply computes the
number of necessary evaluations ahead of time, whereas the second one,
FDS-Seq, uses a sequential testing strategy to interrupt sampling as
early as possible.
The algorithms are specified in Section \ref{sec:algorithms}. An illustration of the behavior of the proposed algorithms can be found in this same section, alongside an illustration of other baseline strategies, which allows for understanding the specifics of direct search. We
analyze the cumulative regret of these algorithms in Section
\ref{sec:analysis}, providing an upper bound of their regret of the
order of $T^{2/3}$ (up to logarithmic factors), when the optimum is in
the interior of the feasible domain. A significant technical challenge for the analysis in terms of regret is that, while in traditional analyses of direct-search, the number of rounds is fixed and the analysis proceeds by looking at the distance to the optimum at each round, here, the number of rounds is random (the indexing of the regret is the actual number of function evaluations instead of the number of rounds). We start Section
\ref{sec:analysis} by the simpler case in which there is neither noise
nor constraints, showing that in this basic setup the regret of direct
search is bounded by a constant.\looseness-1

\section{Algorithms} \label{sec:algorithms}
\subsection{Description of the Algorithms}
In Algorithm \ref{alg:dir_search} below, we start by describing the most
common version of the direct search method used for deterministic and
unconstrained optimization. It requires the setting of an initial
point $x_0$ and an initial parameter $\alpha_0$. The learner also
specifies a PSS $\mathbb{D}$, that is, a set of directions that
spans $\mathbb{R}^d$ with non-negative coefficients.
At each iteration, the algorithm sequentially tests
points at a distance $\alpha_k$ from the current iterate and in the
directions defined by $\mathbb{D}$. If none of the test points results
in a sufficient decrease of the function's value, the iteration is
declared unsuccessful and the trial radius $\alpha_k$ shrinks by a
factor $\theta<1$, otherwise, the iteration is declared successful
and the iterate $x_k$ is moved to the first trial point that results
in a sufficient improvement. A decrease is considered to be sufficient
if it is larger than some predefined forcing function of $\alpha_k$,
that we take here to be quadratic, with a coefficient that can be set
by the learner. \looseness-1

\begin{algorithm}
Choose $x_0\in\rn$, $\alpha_0>0$, $\theta< 1$, $c>0$, $\rho(u)= c u^2$ and a PSS $\mathbb{D}$\\
\For {$k = 0 \ldots K$}{
  Set \textit{UnsuccessfulSearch} $\gets$ True\\
  \For {$v \in  \mathbb{D}$}{
    Evaluate $f(x_k + \alpha_k v)$\\
    \If {$f(x_k)- f(x_k+\alpha_k v) \geq  \rho(\alpha_k)$}{
      Set $x_{k+1}\gets x_k+ \alpha_k v$ and $\alpha_{k+1}\gets\alpha_k$\\
      Set \textit{UnsuccessfulSearch} $\gets$ False\\
      \Break
    }
  }
  \If{UnsuccessfulSearch}{
    Set $x_{k+1}\gets x_k$ and $\alpha_{k+1}\gets \theta\alpha_k$
  }
}
\caption{Direct Search with sufficient decrease}\label{alg:dir_search}
\end{algorithm}

The analysis can also be adapted to the presence of a growth factor
$\phi \geq 1$ by which the trial radius $\alpha_k$ expands at
successful iterations. For simplicity, we choose to focus on the case where $\phi  = 1$, as this parameter does not modify the regret rates obtained
in Section~\ref{sec:analysis}. Also note that Algorithm \ref{alg:dir_search} is a descent algorithm with respect to the iterates $x_k$, i.e., the sequence
$(f(x_k))_k$ is decreasing.

Obviously, different choices of PSS result in different
trajectories of the algorithm.  Setting $\mathbb{D}$ as the
set of $2d$ vectors of the positive and negative coordinate directions
results in the algorithm known as coordinate or compass search. Other
frequently considered choices include random directions, as in
\citep{gratton2015direct,gratton2019direct,dzahini2022expected}. \looseness-1

\begin{minipage}{0.49\textwidth}
\vspace{-8.em}
\begin{algorithm}[H]
\caption{FDS-Plan}
Choose $x_0\in\rn$, $\alpha_0>0$, $\theta< 1$, $c>0$, $\rho(u)= c u^2$\\
\For {$k = 0 \ldots K$}{
  Set \textit{UnsuccessfulSearch} $\gets$ True\\
  Select a set of directions $\mathbb{D}_k$ \\
  Set $N_k= \frac{32\sigma^2 \log (2/\delta)}{\rho(\alpha_k)^{2}}$.\\
  Estimate $f(x_k)$ by making $N_k$ samples at $x_k$ and setting $\hat{f}(x_k)=  \frac{1}{N_k}\sum_{j=1}^{N_k}f(x_k)+ \epsilon_j$\\
  \For {$v \in  \mathbb{D}_k$ such that $x_k + \alpha_k v \in \mathcal{D}$}{
    Estimate  $f(x_k + \alpha_k v)$ by
    making $N_k$ samples at $x_k+ \alpha_k v$ and setting $\hat{f}(x_k+ \alpha_k v)=  \frac{1}{N_k}\sum_{j=1}^{N_k}f(x_k+ \alpha_k v)+ \epsilon_j'$ \\
    \If {$\hat f(x_k)- \hat f(x_k+\alpha_k v) \geq  \rho(\alpha_k)$}{
      Set $x_{k+1}\gets x_k+ \alpha_k v$ and $\alpha_{k+1}\gets\alpha_k$\\
      Set \textit{UnsuccessfulSearch} $\gets$ False\\
      \Break
    }
  }
  \If{UnsuccessfulSearch}{
    Set $x_{k+1}\gets x_k$ and $\alpha_{k+1}\gets \theta\alpha_k$
  }
}
\label{alg:con_dir_search_f_samples}
\end{algorithm} 
\end{minipage}  \hfill
\begin{minipage}{0.49\textwidth}
\begin{algorithm}[H]
\caption{FDS-Seq}
Choose $x_0\in\rn$, $\alpha_0>0$, $\delta>0$, $\theta < 1$, $c>0$,  $\rho(u)= c u^2$\\
		Set \textit{UnsuccessfulSearch} $\gets$ True\\
		\For {$k = 0 \ldots K$}{
    Select a set of directions $\mathbb{D}_k$ \\
	 \For{$v$ in  $\mathbb{D}_k$ such that $x_k + \alpha_k v \in \mathcal{D}$}{ 
		 \While{Condition \ref{eq:cond_test} is not satisfied 
	 }{ 
	 \If{$n_{v,k}\leq n_{0,k}$}{Sample at $x_k + \alpha_k v$ and update $n_{v,k}$ and the empirical mean $\hat{f}_{n_{v,k}}(x_k + \alpha_k v)$.} 
	 \Else{Sample at $x_k $ and update $n_{0,k}$ and  the empirical mean $\hat{f}_{n_{0,k}}(x_k )$
	 }}
	 \If{$\hat{f}_{n_{0,k}}(x_k)- \hat{f}_{n_{v,k}}(x_k+\alpha_k v) \geq  \rho(\alpha_k)$
	 }{Set  $x_{k+1}\gets x_k+ \alpha_k v$, and $\alpha_{k+1}\gets  \alpha_k$.
	 \\
	 \textit{UnsuccessfulSearch} $\gets$ False\\
	 \textbf{Break.}}
	 }
	\If {UnsuccessfulSearch} {
	Set $x_{k+1}\gets x_k$ and $\alpha_{k+1}\gets \theta\alpha_k$ 
	}
	}
	\label{alg:f_dir_search_f_tests}
\end{algorithm}
\end{minipage}

We apply three sorts of modifications to Algorithm \ref{alg:dir_search}
in order to adapt it to the more general model introduced in Section
\ref{sec:model}.  The first one consists in sampling a trial
point only if it is feasible. The second consists in allowing changes
in the set of directions $\mathbb{D}_k$ considered. This is to account for the fact 
that the change in search-radius at every round impacts the set of admissible directions, denoted $\mathcal{A}_k$.
We thus only need to sample directions that span $\mathcal{A}_k$ positively,
and not the whole optimization domain $\mathcal{D}$. The third modification consists in
introducing estimation stages that allow building reliable estimates
of $f$ at the trial points. We propose two ways of doing so, that
result in two different algorithms.  The first algorithm that we study
is a plug-in version of Algorithm \ref{alg:dir_search} in which we
replace $f(x_k)$ and $f(x_k+ \alpha_k v)$ with their empirical estimates,
consisting of means computed from
$N_k= \frac{32\sigma^2 \log (2/\delta)}{\rho(\alpha_k)^{2}}$ samples.
This number of samples guarantees that with high probability, the
estimation gap is smaller than $\rho(\alpha_k)/4$, which in turn
ensures that an iteration is declared successful only when it leads to
a decrease of $f(x_k)$ by at least $\rho(\alpha_k)/2$ and that an
unsuccessful iteration cannot occur if there exists a direction $v$ in
$\mathbb{D}_k$ such that the decrease achieved by moving to
$x_k + \alpha_k v $ is larger than $3\rho(\alpha_k)/2$.  The resulting
algorithm is termed Feasible Direct Search with a planned number of
samples (FDS-Plan) and described in Algorithm
\ref{alg:con_dir_search_f_samples}.
We also propose a faster algorithm, Feasible Direct Search with
Sequential Tests (FDS-Seq) described in Algorithm
\ref{alg:f_dir_search_f_tests}. For any $v\in\mathbb{D}_k$, instead of
planning the number of samples at $x_k + \alpha_k v$ ahead of time, it
samples at $x_k+ \alpha_k v$ and $x_k$ until either
\begin{equation}\label{eq:cond_test}
\begin{cases} \left|
 \hat{f}_{n_{0,k}}(x_k) - \hat{f}_{n_{v,k}}(x_k+ \alpha_k v) - \rho(\alpha_k) 
 \right| 
 \leq 
\sqrt{
2 \sigma^2\log(1/\delta)\Big(
\frac{1}{n_{0,k}}+ \frac{1}{n_{v,k}}
\Big)}\;,\\
\text{or } \left( n_{0,k} \geq N_k \text{ and } n_{v,k} \geq N_k \right)\;,
\end{cases}
\end{equation}
$n_{0,k}$ and $n_{v,k}$ denoting the number of samples at $x_k$
and $x_k + \alpha_k v$ and $\hat{f}_{n_{0,k}}(x_k)$ and
$\hat{f}_{n_{v,k}}(x_k + \alpha_k v)$  the resulting empirical
means. Successful and unsuccessful iterations are defined as in
FDS-Plan and trigger the same actions.\looseness-1

The sequential stopping rule is designed to achieve early detection of
sufficient decrease, but also to detect as early as possible the cases
in which the trial point cannot lead to a sufficient decrease.  The
first test in Condition \ref{eq:cond_test} is a consequence of the fact that
the estimation gap at $x_k$ (respectively $x_k + \alpha_k v$) is
$\sigma^2/ n_{0,k}$ subgaussian (respectively $\sigma^2/ n_{v,k}$
subgaussian). The second test of Condition \ref{eq:cond_test}
corresponds to a safeguard preventing from waiting too long when
the decrease induced by the trial point is very close to
the sufficiency threshold.  At worst, the number of evaluations needed
is the same as in Algorithm \ref{alg:con_dir_search_f_samples}.
Essentially, the sequential stopping rule reduces the number of
evaluations needed for each iteration but maintains the desirable
property that with high probability, an iteration is declared
successful only if it leads to a decrease of at least
$\rho(\alpha_k)/2$ and that an iteration cannot be declared
unsuccessful if there exists a direction $v$ in $\mathbb{D}_k$ such that
the decrease achieved by moving to $x_k + \alpha_k v $ is larger than
$3\rho(\alpha_k)/2$.

\subsection{Illustration}
\label{sec:illustration}

 \begin{figure}[ht]
\centering
\subfigure[FDS-Plan]{
\includegraphics[width=0.23\linewidth]{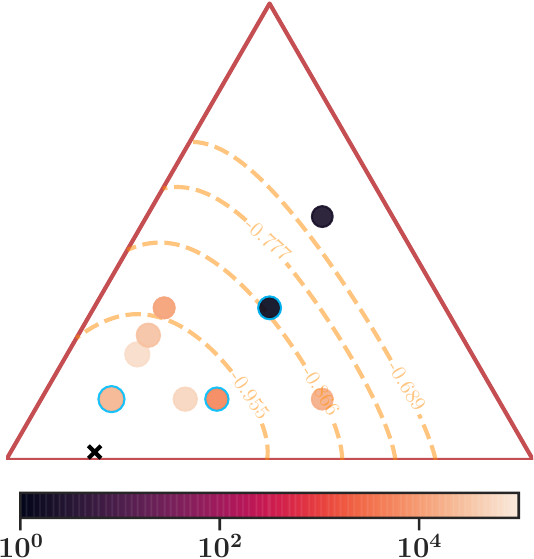}
\label{fig:FDS-Plan_2}
}\quad
\subfigure[HOO]{%
\includegraphics[width=0.23\linewidth]{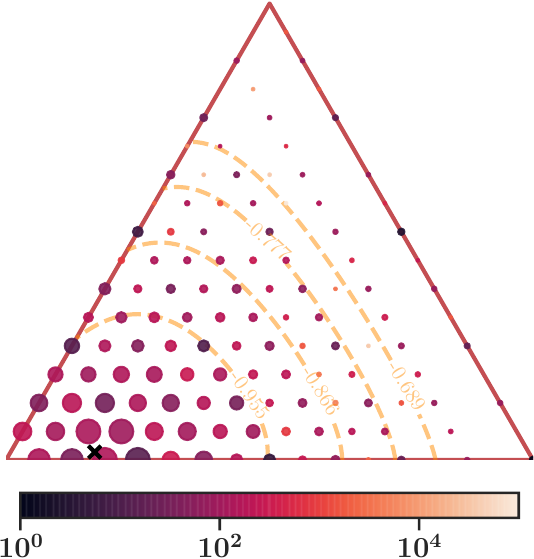}
\label{fig:HOO}
} \quad
\subfigure[Gradient Descent with two-points estimate]{%
\includegraphics[width=0.23\linewidth]{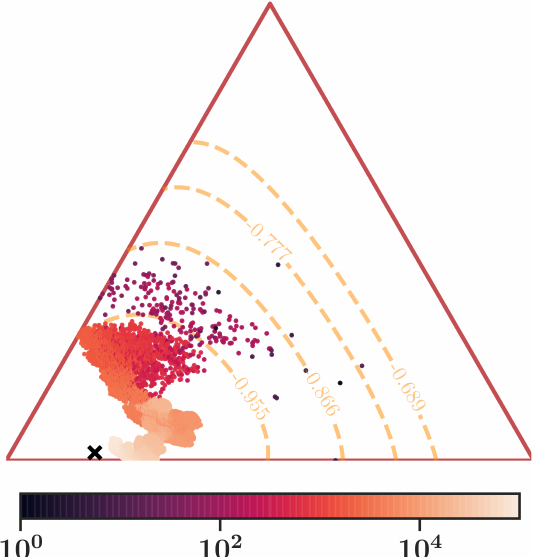}
\label{fig:GD}
} \\
\subfigure[FDS-Seq]{%
\includegraphics[width=0.23\linewidth]{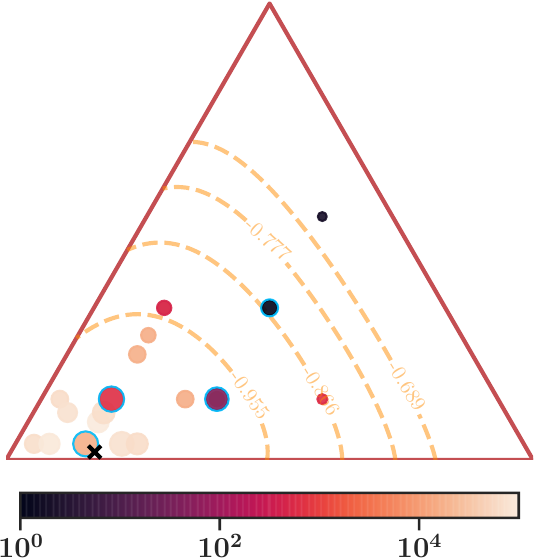}
\label{fig:FDS-Seq_2}}
\quad
\subfigure[UCB]{%
\includegraphics[width=0.23\linewidth]{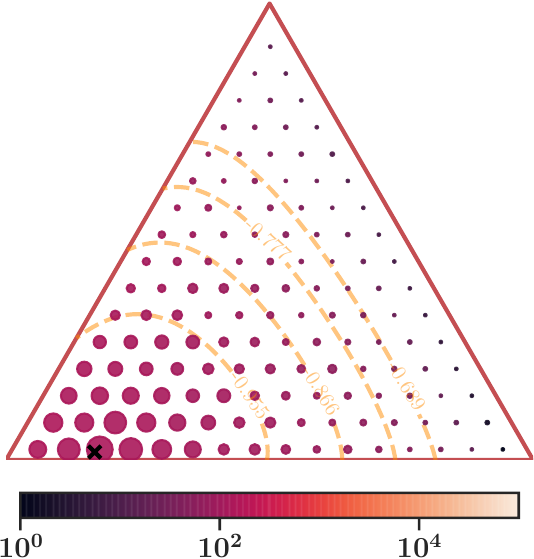}
\label{fig:UCB}}
\quad
\subfigure[Gradient Descent with one-point estimate]{%
\includegraphics[width=0.23\linewidth]{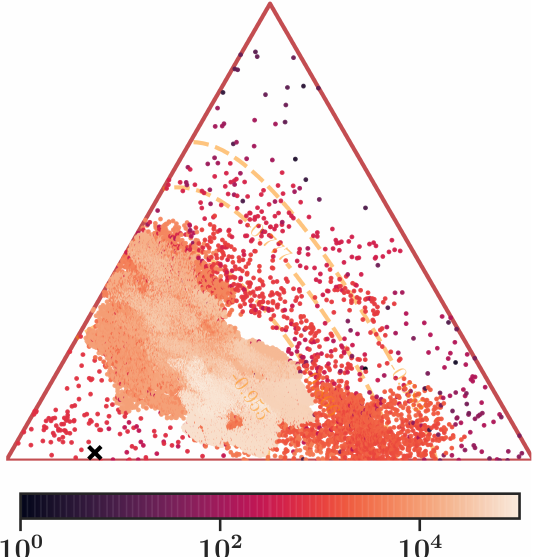}
\label{fig:GD_2}
}
\caption{Single trajectories}
\label{fig:trajectories}
\end{figure}

In order to illustrate graphically the behavior of the proposed
methods, we show on Figure \ref{fig:FDS-Plan_2} and
\ref{fig:FDS-Seq_2} their trajectories in the case where there are
3 resources ($d=2$) and the loss functions associated to each resource
$i \in \{1, \dots,d+1\}$ are of the form
$w_i(x)= -\tau_i \frac{\log(1+ \gamma x)} {\log(1+ \gamma )}$ with
$\gamma=2$, $\tau_1= 1$, $\tau_2= 0.45$, and $\tau_3=0.95$.
We set the horizon to $T=100,000$ and use a Gaussian noise with
standard deviation $\sigma= 0.1$, a realistic value for budget
allocation problems.  In the symmetric representation of Figure
\ref{fig:trajectories}, the three vertices correspond to the points
where one of the resource is fully saturated (equal to 1) and the
edges correspond to linear paths along which one of the resources is set to 0.
The contour lines of the target function are materialized by orange
lines and the location of the minimum is marked by a black cross.  The
size of each point is a logarithmically growing function of the number
of samples made at this point, and its color is a function of the
index of the first round at which it has been sampled.  Finally,
points at which a successful iteration of FDS-Plan (and
FDS-Seq) occurred are circled in blue. The parameters of both versions
of feasible direct search are $\alpha_0=0.2$, $c=5$ and $\theta=0.7$,
and the initial point corresponds to the allocation
$x(0)=(1/3, 1/3, 1/3)$ (center of the simplex). To make the figure more interpretable, we choose to set a fixed $\mathbb{D}$. The set of directions
chosen for these algorithms are the 6 directions that support the
edges of the simplex (in both directions).

In a first phase, the algorithm proceeds rapidly by testing
directions until a sufficient descent direction is found. Afterwards,
when the iterates get closer to the minimizer,
the search area iteratively shrinks as finding descent directions becomes harder. In the first phase, the
trajectory is similar to a descent path that would result from a
gradient descent algorithm 
while the second phase is closer to
the behavior of bandit algorithms based on hierarchical partitions, like HOO
\citep{bubeck2011x}. In order to illustrate the
differences with such algorithms, we also plot the
trajectories of baseline methods, either related to gradient descent
or bandits with hierarchical partitioning.
Before turning to these other algorithms, it is important to note the
difference between FDS-Plan and FDS-Seq. Figure \ref{fig:FDS-Seq_2}
shows that FDS-Seq is faster than FDS-Plan, as it spends less time on
the first iterations, in which it is easy to determine whether the
trial points lead to a sufficient decrease. The FDS-Seq algorithm can thus
perform more iterations than FDS-Plan.
A common point of these two algorithms that is illustrated on the figure is that they are approximate-descent algorithms with high probability, an interesting quality for practitioners interested in interpretability.


Let us now comment on Figure \ref{fig:HOO}, that represents the
trajectory of a version of HOO. It is not straightforward to apply
algorithms for $\mathcal{X}-$armed bandits like HOO on the simplex,
because it implies constructing balanced hierarchical partitions of
the simplex.  
We thus explain our implementation of HOO in Appendix \ref{app:HOO}.
On Figure \ref{fig:HOO}, we observe that this algorithm explores
the partition tree in a way that favors the cells close to the
optimum while persistently visiting cells that are clearly far
from the minimizer.
The behavior of algorithms based on direct search can thus be preferred because it makes
warm-start possible, in the sense that prior belief on the location of the
minimizer can be used for setting the initial point, which is not possible
for HOO. The fact that suboptimal points will keep being
sampled until the end of the experiment can also be difficult to
accept for practitioners such as advertisers for example.

We also illustrate the behavior of UCB on a discretization of the
space, which is an interesting strategy, especially in dimension 2.
The discretization used for UCB consists of points arranged in a
regular grid of $[0,1]^2$, from which the points lying outside of the
feasible domain have been removed. The step parameter of the grid is
taken as $T^{-1/4}$ as suggested by~\cite{combes2014unimodal}.
Although simpler than HOO, this algorithm results in similar sampling
patterns, as shown on Figure \ref{fig:UCB}, and
hence shares some of its drawbacks. The performance of UCB is good in
dimension 2, as the regret can be proved to be of the order of
$\sqrt{T}$ with this choice of step-size (see
\cite{combes2014unimodal}) but it will worsen in higher dimension due
to the difficulty of simultaneously controlling the distance between
grid points and the overall number of points in the grid. In fact, the
optimal step in this case is of the order of $T^{-1/(d+2)}$ and the
resulting regret is of the order of $T^{d/d+2}$,
 which only works in favor of UCB for small values of $d$.
Note that  while \cite{combes2014unimodal} also provide weaker regret guarantees under more general assumptions, the assumptions required to obtain this order of regret are similar to Assumption \ref{ass:reg}. They are only less constraining than Assumption \ref{ass:reg} in that they require a quadratic upper and lower bound on the function locally near the optimum, whereas Assumption \ref{ass:reg} should hold uniformly on the domain.

Lastly, we illustrate the behavior of two methods related to
stochastic gradient descent. The first method is related to that
proposed by \cite{akhavan2020exploiting}. Without constraints, this
method would estimate the gradient of the function at $x_t$, by
evaluating the function at $y^+_t = x_t + h_t Z_t$ and
$y^-_t = x_t - h_t Z_t$, where $Z_t$ is a random vector of the sphere
of radius $1$, and use $\frac{f(y^+_t)- f(y^-_t)}{h_t}Z_t$ as an
estimation of the gradient. The method in itself does not take
constraints into account, but a slight modification results in an
algorithm that is feasible in the presence of constraints. This
modification consists in performing a homothetic perturbation
\citep{bravo2018bandit} on the evaluation points $y_t^+$ and $y_t^-$:
instead of using these points, the algorithm evaluates the function at
$\tilde{y_t}= y_t + h_t/r (c-x_t)$, where $c$ is a point in the
interior of the simplex such that
$\mathcal{B}(c,r)\subset \mathcal{D}$. We use
$\tilde{y}^+_t = y^+_t + h_t/r (c-x_t)$ and
$\tilde{y}^-_t = y^-_t - h_t/r (c-x_t)$, where $c$ is a point in the
interior of the simplex such that
$\mathcal{B}(c,r)\subset \mathcal{D}$.  This ensures that the
evaluation point belongs to the constrained domain, provided that
$x_t \in \mathcal{D}$, but adds a bias which is proportional to $h_t$,
under suitable regularity assumptions on $f$. To ensure that
$x_t \in \mathcal{D}$, we also project the result of the gradient
descent step on $\mathcal{D}$.  We use an estimation step $h_t$ equal
to $(t/2)^{-1/3}$ and a learning rate decreasing as $1/(2.5 t)$. This
choice is justified by the following reasoning: with this choice of
value for the learning rate and $h_t$ set to $t^{-1/4}$,
$\sum_{t=1}^{T} f(x_t)-f(x^*)$ would be bounded by $O(T^{1/2})$, thanks
to the analysis of \cite{akhavan2020exploiting}; but we have to add a
term related to the sum of evaluation steps $\sum_{t=1}^{T/2}(f(y^+_t)+f(y^-_t)-2f(x^*))$ to bound the actual regret, which under suitable assumptions is of order $\sum_{t=1}^T h_t$; so that setting $h_t$ to
$(t/2)^{-1/3}$ allows to bound both terms by $O(T^{2/3})$. On Figure
\ref{fig:GD}, we see one trajectory of this method when the starting
iterate is on the center of the simplex. The convergence speed is rather fast
at the beginning but the speed is limited by the homothetic
perturbation.\looseness-1
 
The second method is inspired by \cite{flaxman2004online}. This paper
proposes to use gradient descent with a one-point gradient
estimation. In order to evaluate the gradient of the function at
$x_t$, the algorithm evaluates the function at $y_t = x_t + h_t Z_t$,
where $Z_t$ is a random vector of the sphere of radius $1$, and uses
$f(y_t)/h_t z_t$ as an estimation of the gradient. As for the previous
method, we apply an homothetic perturbation to $y_t$.  We use an
estimation step $h_t$ equal to $t^{-1/3}$ and a learning rate
decreasing as $1/(2.5 t)$. The trajectory that we see on Figure
\ref{fig:GD_2} is not very indicative of the average performance of
the algorithm, since this method comes with a very high variance. We
see that the algorithm generates a trajectory that roughly gets closer to the
minimizer, but that is far from being a descent path because of the
poor estimation of the gradient. This method, which has been designed
for adversarially evolving objective functions, is clearly not
advisable for static objectives with stochastic perturbations.

\section{Regret Analysis}
\label{sec:analysis}
As discussed in the introduction, the regret criterion takes into
account the number $T$ of function evaluations instead of focusing on
the number $K$ of iterations, as in the more traditional analysis. In
the noiseless case of Algorithm~\ref{alg:dir_search}, $T$ and $K$
differ by a factor of at most $|\mathbb{D}|+1$ and this is not an
issue. However, in the case of
Algorithms~\ref{alg:con_dir_search_f_samples}
and~\ref{alg:f_dir_search_f_tests}, the situation is very different as
the number of function evaluations per iteration is stochastic and
typically increases as the algorithm converges. In this case, it is
not possible to predict in advance the evolution of $T$ as a function
of $K$ because it depends on the function and starting point. A
significant part of the analysis is indeed devoted to quantifying this
phenomenon. In practice, it means that in order to comply with a
 number $T$ of function evaluations set in advance, the algorithms are run
without a fixed number of rounds $K$, instead they are run until the
number of function evaluations reaches $T$.

In the following, we  analyze the proposed algorithms and show
that in the constrained and noisy set-up of interest, FDS-Plan and
FDS-Seq have a regret of the order of $(\log T)^{2/3} T^{2/3}$ under
some further assumptions on $f$ and on the chosen direction set
$\mathbb{D}_k$, provided that the optimal point lies in the interior of
the feasible domain.
To provide some intuition on the proofs, we start with the analysis of
Algorithm \ref{alg:dir_search} in the unconstrained and deterministic
setting, thereby providing the first regret bound of any direct search
algorithm.\looseness-1

\subsection{Warm-up: the Unconstrained and Deterministic Setting} 
The choice of $\mathbb{D}$ is decisive for the performance of 
both FDS-Plan and FDS-Seq.
In the sequel, we make the following assumption
on $\mathbb{D}$.\looseness-1

\begin{assumption}\label{ass:kappa} The vectors of $\mathbb{D}$ have unit norm and the cosine measure of $\mathbb{D}$ is lower-bounded, i.e, there exists $\kappa>0$ such that
	$$cm(\mathbb{D}):= \min_{u \in \rn, u\neq 0} \max_{v \in \mathbb{D}} \frac{u^T v}{\|u\| \|v\|} > \kappa\;.$$ 
\end{assumption}

Assumption \ref{ass:kappa}, common in direct search's literature, guarantees 
that at each
iteration $k$, the cosine similarity between at least one direction in
$\mathbb{D}$ and $-\nabla f (x_k)$ is larger than $\kappa$.
If $\mathbb{D}$
is a PSS there exists a $\kappa$ satisfying it.\looseness-1
 
\begin{theorem}\label{th:direct_search_reg_bound}
  Under Assumptions \ref{ass:reg} and \ref{ass:kappa}, the cumulative
  regret of Algorithm \ref{alg:dir_search} satisfies
  \begin{align*}
  R_T \leq & \frac{|\mathbb{D}|+1}{c}\Bigg[\left(\frac{1}{1- \theta^2}\left(f(x_{0}) - f(x_\star) +\rho(\alpha_0)\right)\right)\left(\left(1+ \frac{\eta}{a}\right) \eta + \beta \right) \\
                    &+(f(x_{0}) - f(x_\star))
                      \left(\frac{\beta}{a \alpha_0}\|\nabla f(x_0) \| 
                      + \beta \right)\Bigg]\;,
  \end{align*}
  where
  $\eta :=\frac{\beta}{a}\frac{1}{\kappa \theta} (c + \frac{\beta}{2})
  $.
\end{theorem}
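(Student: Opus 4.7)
The plan is to bound the per-iteration regret by a quadratic in the step size $\alpha_k$ and then to control $\sum_k \alpha_k^2$ along the trajectory. At iteration $k$ the algorithm performs at most $|\mathbb{D}|+1$ function evaluations, at $x_k$ and at points $x_k + \alpha_k v$ for $v\in\mathbb{D}$, each of which by $\beta$-smoothness incurs an instantaneous regret at most $f(x_k) - f(x_\star) + \alpha_k\|\nabla f(x_k)\| + \beta\alpha_k^2/2$. Strong convexity in the form $f(x_k) - f(x_\star) \leq \|\nabla f(x_k)\|^2/(2a)$ then shows that it suffices to control $\|\nabla f(x_k)\|$ and $\alpha_k$ at every iteration.

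The central technical step is to establish that $\|\nabla f(x_k)\| \leq \eta\alpha_k$ for every iteration $k$ that follows at least one unsuccessful iteration. First, at any unsuccessful iteration $k''$, Assumption~\ref{ass:kappa} yields a direction $v\in\mathbb{D}$ with $v^{\top}\nabla f(x_{k''}) \leq -\kappa\|\nabla f(x_{k''})\|$; combining the failure of the sufficient-decrease test, $f(x_{k''}) - f(x_{k''} + \alpha_{k''} v) < c\alpha_{k''}^2$, with the upper smoothness inequality along this direction gives $\|\nabla f(x_{k''})\| \leq (c + \beta/2)\alpha_{k''}/\kappa$. Second, for a later iteration $k$ I would let $k''$ denote the most recent unsuccessful iteration preceding $k$ (so that $\alpha_k = \theta\alpha_{k''}$) and propagate the gradient bound using the monotone function decrease: strong convexity yields $\|x_k - x_\star\|^2 \leq 2(f(x_{k''})-f(x_\star))/a \leq \|\nabla f(x_{k''})\|^2/a^2$, and smoothness combined with $\nabla f(x_\star) = 0$ yields $\|\nabla f(x_k)\| \leq \beta\|x_k - x_\star\| \leq (\beta/a)\|\nabla f(x_{k''})\|$, whence $\|\nabla f(x_k)\| \leq (\beta/a)(c+\beta/2)/(\kappa\theta)\cdot\alpha_k = \eta\alpha_k$. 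Chaining the smoothness bound at unsuccessful iterations with strong-convexity propagation through the monotone decrease is the main obstacle.

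To finish I would group iterations into phases of constant $\alpha$, each phase closing with an unsuccessful iteration of step size $\theta^j\alpha_0$. Summing the sufficient-decrease bound on successful iterations gives $\sum_{\text{succ}} c\alpha_k^2 \leq f(x_0) - f(x_\star)$, while the geometric tail of the unsuccessful iterations contributes at most $\alpha_0^2/(1-\theta^2)$, producing a bound on $\sum_k \alpha_k^2$ of the form appearing in the first bracket of the theorem. Plugging $\|\nabla f(x_k)\|\leq\eta\alpha_k$ into the per-iteration regret decomposition yields a coefficient of order $(1+\eta/a)\eta + \beta$ in front of $\sum\alpha_k^2$, accounting for the contribution of the iterations that come after the first unsuccessful one. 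For ``early'' iterations, those preceding any unsuccessful one, the count is bounded by $(f(x_0)-f(x_\star))/\rho(\alpha_0)$ via the sufficient-decrease condition; on this regime $\alpha_k=\alpha_0$ and the same monotone-decrease argument applied from $x_0$ gives $\|\nabla f(x_k)\|\leq(\beta/a)\|\nabla f(x_0)\|$, producing the second bracketed term. Multiplying the total by the per-iteration factor $|\mathbb{D}|+1$ yields the stated bound.
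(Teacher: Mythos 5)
Your proposal is correct and follows essentially the same route as the paper's proof: the same decomposition into iterations before and after the first unsuccessful one, the same gradient bound $\|\nabla f(x_k)\|\leq\eta\alpha_k$ obtained by combining the cosine-measure/smoothness bound at unsuccessful iterations with the descent-plus-strong-convexity propagation (the paper's Lemmas \ref{lem:gradnorm_failures} and \ref{lem:bound_increase_grad}), and the same control of $\sum_k\alpha_k^2$ via the sufficient-decrease condition (your direct split into successful and unsuccessful iterations even gives a marginally tighter constant than the paper's block-wise Lemma \ref{lem:sumsteps}). No gaps.
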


This result shows that under Assumption \ref{ass:reg}, the asymptotic
behavior of the regret of direct search can be compared to that of the
more traditional gradient descent algorithm, whose regret is also
bounded under this assumption \citep[see Theorem 3.6
of][]{bubeck2008online}.
 

\subsubsection{Elements of Proof}
The proof of Theorem \ref{th:direct_search_reg_bound} combines two
well-known properties of direct search and the following lemma
holding for any descent algorithm. %
\begin{lemma}\label{lem:bound_increase_grad} If $f$ satisfies Assumption \ref{ass:reg}, then
$\displaystyle{\forall k'>k, \|\nabla f(x_{k'}) \|\leq \frac{\beta}{a}\|\nabla f(x_{k}) \|}$.
\end{lemma}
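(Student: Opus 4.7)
The plan is to exploit the two classical one-sided quadratic bounds linking the suboptimality gap $f(x) - f(x_\star)$ to $\|\nabla f(x)\|^2$ that follow from Assumption \ref{ass:reg}, and then use the descent property $f(x_{k'}) \leq f(x_k)$ to bridge the iterates $x_k$ and $x_{k'}$.

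First, I would record the two key inequalities. By $a$-strong convexity, minimizing the quadratic lower bound $f(y) \geq f(x) + \nabla f(x)^T(y-x) + \frac{a}{2}\|y-x\|^2$ over $y$ yields
\[
f(x) - f(x_\star) \;\leq\; \frac{1}{2a}\,\|\nabla f(x)\|^2 \qquad \text{for all } x \in \mathcal{D}.
\]
Symmetrically, by $\beta$-smoothness, evaluating the quadratic upper bound at $y = x - \frac{1}{\beta}\nabla f(x)$ gives
\[
f(x) - f(x_\star) \;\geq\; \frac{1}{2\beta}\,\|\nabla f(x)\|^2 \qquad \text{for all } x \in \mathcal{D}.
\]

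The second step is to invoke the descent property of the algorithm. For Algorithm \ref{alg:dir_search}, each successful iteration strictly decreases $f$ by at least $\rho(\alpha_k)$ and unsuccessful iterations leave $x_k$ unchanged, so $f(x_{k'}) \leq f(x_k)$ for all $k' \geq k$. Chaining the two bounds above through this inequality,
\[
\frac{1}{2\beta}\,\|\nabla f(x_{k'})\|^2 \;\leq\; f(x_{k'}) - f(x_\star) \;\leq\; f(x_k) - f(x_\star) \;\leq\; \frac{1}{2a}\,\|\nabla f(x_k)\|^2,
\]
which after taking square roots yields $\|\nabla f(x_{k'})\| \leq \sqrt{\beta/a}\,\|\nabla f(x_k)\|$. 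Since $\beta \geq a$ implies $\sqrt{\beta/a} \leq \beta/a$, the claimed bound follows.

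There is no real obstacle here: the proof is a textbook two-line consequence of strong convexity, smoothness, and monotone decrease of $f(x_k)$, and the constant $\beta/a$ is in fact somewhat loose (the argument gives the sharper $\sqrt{\beta/a}$). The only thing worth flagging is the appeal to the descent property, which should be noted explicitly since the lemma is stated for a generic descent algorithm rather than specifically Algorithm \ref{alg:dir_search}.
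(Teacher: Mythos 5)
Your proof is correct and takes a genuinely different, more streamlined route than the paper's. You work entirely with function-value gaps: the strong-convexity (Polyak--{\L}ojasiewicz-type) bound $f(x)-f(x_\star)\leq \frac{1}{2a}\|\nabla f(x)\|^2$ and its smoothness counterpart $f(x)-f(x_\star)\geq \frac{1}{2\beta}\|\nabla f(x)\|^2$, chained through the descent property $f(x_{k'})\leq f(x_k)$. The paper instead routes the argument through the distance $\|x-x_\star\|$: it bounds $\|\nabla f(x_{k'})\|$ by $\beta\|x_{k'}-x_\star\|$, converts that to $\sqrt{f(x_{k'})-f(x_\star)}$ via strong convexity, applies the descent property, and then uses convexity, Cauchy--Schwarz, and $a\|x_k-x_\star\|\leq\|\nabla f(x_k)\|$ to return to $\|\nabla f(x_k)\|$. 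Your version is shorter and yields the sharper constant $\sqrt{\beta/a}$ (which, as you note, implies the stated $\beta/a$ since $\beta\geq a$). One caveat worth making explicit: your smoothness-based lower bound on the gap evaluates $f$ at $x-\frac{1}{\beta}\nabla f(x)$ and compares it to $f(x_\star)=\min_{y\in\mathcal{D}}f(y)$, which requires either that the domain be unconstrained or that the trial point be feasible (equivalently that $\nabla f(x_\star)=0$). This is harmless in the warm-up setting where the lemma is invoked, and the paper's own proof carries the same implicit assumption through the step $\|\nabla f(x_{k'})\|=\|\nabla f(x_{k'})\|-\|\nabla f(x_\star)\|\leq\beta\|x_{k'}-x_\star\|$, so the two arguments are on equal footing here. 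Your explicit flagging of the descent property as the essential hypothesis is also the right emphasis.
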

In the proof of Theorem \ref{th:direct_search_reg_bound}, Lemma
\ref{lem:bound_increase_grad} is used in conjunction with the
following well-known property \citep[see e.g ][]{vicente2013worst} of
direct search.

\begin{lemma}\label{lem:gradnorm_failures}
  If $f$ satisfies Assumptions \ref{ass:reg} and \ref{ass:kappa} and
  iteration $k$ corresponds to an unsuccessful iteration, then
  $\displaystyle{\|\nabla f(x_k) \|\leq
    \frac{1}{\kappa}\left(\frac{\beta}{2} \alpha_k +
      \frac{\rho(\alpha_k)}{\alpha_k}\right)=
    \frac{1}{\kappa}\left(\frac{\beta}{2} + c \right) \alpha_k.}$
\end{lemma}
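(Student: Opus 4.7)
The plan is to combine the unsuccessful-iteration condition with $\beta$-smoothness and then exploit Assumption~\ref{ass:kappa} to extract the gradient norm. This is the standard strategy for bounding the stationarity measure at unsuccessful iterations in direct search.

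First I would write out what an unsuccessful iteration means: for every $v \in \mathbb{D}$, the trial step fails the sufficient decrease test, i.e.
\[
    f(x_k) - f(x_k + \alpha_k v) < \rho(\alpha_k).
\]
Next, I would invoke $\beta$-smoothness of $f$ (from Assumption~\ref{ass:reg}), which gives the quadratic upper model
\[
    f(x_k + \alpha_k v) \leq f(x_k) + \alpha_k \nabla f(x_k)^T v + \frac{\beta}{2}\alpha_k^2,
\]
where I use $\|v\| = 1$ from Assumption~\ref{ass:kappa}. Rearranging yields $-\alpha_k \nabla f(x_k)^T v - \frac{\beta}{2}\alpha_k^2 \leq f(x_k) - f(x_k + \alpha_k v)$, and combining this with the unsuccessful-iteration bound above gives
\[
    -\nabla f(x_k)^T v \leq \frac{\rho(\alpha_k)}{\alpha_k} + \frac{\beta}{2}\alpha_k \qquad \text{for every } v \in \mathbb{D}.
\]

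To convert this bound into a bound on $\|\nabla f(x_k)\|$, I would apply Assumption~\ref{ass:kappa} with the choice $u = -\nabla f(x_k)$ (assuming it is nonzero; otherwise the conclusion is trivial). This yields a direction $v^\star \in \mathbb{D}$ such that $-\nabla f(x_k)^T v^\star \geq \kappa \, \|\nabla f(x_k)\|$. Substituting into the previous inequality gives
\[
    \kappa \|\nabla f(x_k)\| \leq \frac{\rho(\alpha_k)}{\alpha_k} + \frac{\beta}{2}\alpha_k,
\]
from which division by $\kappa$ gives the first stated form. Finally, plugging in $\rho(\alpha_k) = c\alpha_k^2$ produces $\rho(\alpha_k)/\alpha_k = c\alpha_k$, which yields the closed-form expression $\frac{1}{\kappa}\bigl(\frac{\beta}{2} + c\bigr)\alpha_k$.

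The proof has no real obstacle: it is a straightforward chain of inequalities. The only subtlety worth flagging is the use of the cosine measure with the specific choice $u = -\nabla f(x_k)$, which is precisely the property that Assumption~\ref{ass:kappa} was designed to enable; the unit-norm condition on the directions of $\mathbb{D}$ simplifies the smoothness step and removes any need to track $\|v\|$.
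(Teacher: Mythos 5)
Your proof is correct and follows essentially the same route as the paper's: both negate the sufficient-decrease test for all directions, apply the $\beta$-smoothness quadratic upper bound (the paper derives it via the integral form of the gradient, you invoke the descent lemma directly), and then use the cosine measure with $u = -\nabla f(x_k)$ to extract $\kappa\|\nabla f(x_k)\|$. No gaps.
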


The above lemma follows from the definition of the cosine measure of
$\mathbb{D}$, that results in a bound of $v_k^T (-\nabla f(x_k))$,
where $v_k$ is the direction in $\mathbb{D}$ maximizing the gap
$f(x_k)- f(x_k + \alpha_k v)$, and from the smoothness assumption on
$f$.  Thanks to Lemma \ref{lem:bound_increase_grad}, this lemma also
means that when iteration $k$ is unsuccessful, we can bound all
subsequent gradients by
$\frac{\beta}{\kappa a}(\frac{\beta}{2} + c ) \alpha_k$.  We can
deduce that for any $k'$ following the first unsuccessful iteration,
$\| \nabla f(x_{k'}) \| \leq \eta \alpha_{k'}$, where
$\eta :=\frac{\beta}{a}\frac{1}{\kappa \theta} (c + \frac{\beta}{2})
$.  Indeed, if $k'$ is the index of an unsuccessful iteration, Lemma
\ref{lem:bound_increase_grad} suffices to prove
$\| \nabla f(x_{k'}) \| \leq \eta \theta \alpha_{k'}$.  In contrast,
when $k'$ is the index of a successful iteration, one should consider
the last unsuccessful iteration $k$.  Since
$\alpha_{k'} \geq \theta \alpha_{k}$, there has been at most one
reduction of the step-size since iteration $k$
and \begin{equation}\label{eq:bound_increase_grad_unif}\|\nabla
  f(x_{k'})\|\leq \frac{\beta}{\kappa a \theta}\left(\frac{\beta}{2} +
    c \right) \alpha_{k'} = \eta \alpha_{k'}\;.\end{equation} The
following general argument on direct search is the final key element
of the proof of Theorem \ref{th:direct_search_reg_bound}, that links
the sum of the squared search-radius to the initial sub-optimality gap
$f(x_{0}) - f(x_\star)$.

\begin{lemma}\label{lem:sumsteps} If $f$ satisfies Assumptions \ref{ass:reg} and \ref{ass:kappa},
$$ \sum_{k=0}^{\infty}\rho(\alpha_k)= \sum_{k=0}^{\infty} c\alpha_k^2\leq \frac{1}{1- \theta^2}(f(x_{0}) - f(x_\star) +\rho(\alpha_0))\;. $$
\end{lemma}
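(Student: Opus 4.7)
The plan is to partition the iterations into \emph{blocks}, each beginning with a successful iteration (or at index $0$), so that within a block the radii $\alpha_k$ form a geometric sequence with ratio $\theta$. The sum of squared radii in each block is then bounded by the first term of the block times $1/(1-\theta^2)$, and the first term of each ``successful block'' can be paid for by the corresponding function decrease via the sufficient decrease rule $f(x_k)-f(x_{k+1})\geq \rho(\alpha_k)=c\alpha_k^2$.

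More precisely, let $k_1<k_2<\dots$ denote the indices of successful iterations and set $B_0 = \{0,1,\dots,k_1-1\}$ (possibly empty) and $B_i = \{k_i,k_i+1,\dots,k_{i+1}-1\}$ for $i\geq 1$ (with $k_{i+1}=\infty$ if there are only finitely many successful iterations). Since $\alpha$ is left unchanged on a successful iteration and multiplied by $\theta$ on an unsuccessful one, within $B_0$ the radii are $\alpha_0,\theta\alpha_0,\theta^2\alpha_0,\dots$, so
\[
\sum_{k\in B_0}\alpha_k^2 \leq \alpha_0^2\sum_{j=0}^{\infty}\theta^{2j} = \frac{\alpha_0^2}{1-\theta^2},
\]
and analogously for $i\geq 1$, the radii in $B_i$ begin with $\alpha_{k_i}$ (the successful iteration) followed by $\theta\alpha_{k_i},\theta^2\alpha_{k_i},\dots$, yielding
\[
\sum_{k\in B_i}\alpha_k^2 \leq \frac{\alpha_{k_i}^2}{1-\theta^2}.
\]

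Next I would add up the contributions of the successful iterations. By the sufficient decrease rule, $c\alpha_{k_i}^2 \leq f(x_{k_i})-f(x_{k_i+1})$. Because $x_{k+1}=x_k$ on unsuccessful iterations, the sum over successful iterations telescopes:
\[
\sum_{i\geq 1} c\alpha_{k_i}^2 \leq \sum_{i\geq 1}\bigl(f(x_{k_i})-f(x_{k_i+1})\bigr) = f(x_0)-\lim_{k\to\infty}f(x_k) \leq f(x_0)-f(x_*),
\]
where the last step uses that $(f(x_k))_k$ is non-increasing (Algorithm~\ref{alg:dir_search} is a descent method) and bounded below by $f(x_*)$ (Assumption~\ref{ass:reg}).

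Combining the block bounds and multiplying by $c$ gives
\[
\sum_{k=0}^{\infty} c\alpha_k^2 = \sum_{i\geq 0}\sum_{k\in B_i} c\alpha_k^2 \leq \frac{c\alpha_0^2}{1-\theta^2} + \frac{1}{1-\theta^2}\sum_{i\geq 1} c\alpha_{k_i}^2 \leq \frac{1}{1-\theta^2}\bigl(\rho(\alpha_0) + f(x_0)-f(x_*)\bigr),
\]
which is the claimed inequality. There is no real obstacle in this proof; the only subtlety is the careful bookkeeping of the block decomposition and handling the (possibly empty) initial block $B_0$ of unsuccessful iterations before the first successful one separately, since its leading radius $\alpha_0$ is not accounted for by any function decrease and therefore contributes the extra $\rho(\alpha_0)$ term on the right-hand side.
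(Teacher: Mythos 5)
Your overall strategy---group the iterations into blocks separated by successes, bound each block by a geometric series, and pay for the leading term of each block with the sufficient decrease at the corresponding success---is exactly the argument the paper uses. However, there is an off-by-one error in the block bookkeeping that invalidates your per-block bound. In Algorithm \ref{alg:dir_search} the step size is left \emph{unchanged} at a successful iteration ($\alpha_{k+1}\gets\alpha_k$) and is only multiplied by $\theta$ at unsuccessful ones. Hence in your block $B_i=\{k_i,\dots,k_{i+1}-1\}$, which \emph{begins} with the success $k_i$, the radii are $\alpha_{k_i},\ \alpha_{k_i},\ \theta\alpha_{k_i},\ \theta^2\alpha_{k_i},\dots$ (the value $\alpha_{k_i}$ occurs at both indices $k_i$ and $k_i+1$), not $\alpha_{k_i},\theta\alpha_{k_i},\theta^2\alpha_{k_i},\dots$ as you claim. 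Consequently $\sum_{k\in B_i}\alpha_k^2$ can be as large as $\alpha_{k_i}^2\bigl(1+\tfrac{1}{1-\theta^2}\bigr)$, which exceeds your claimed bound $\tfrac{\alpha_{k_i}^2}{1-\theta^2}$; for instance a block of length $2$ contributes $2\alpha_{k_i}^2$, while $\tfrac{1}{1-\theta^2}<2$ whenever $\theta^2<1/2$. Propagating the corrected per-block estimate only yields the constant $\tfrac{2-\theta^2}{1-\theta^2}$, not the $\tfrac{1}{1-\theta^2}$ asserted by the lemma, so the statement as written is not established.

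The fix is to shift the block boundaries so that each block \emph{ends} with a successful iteration, i.e.\ take the blocks $\{k_i+1,\dots,k_{i+1}\}$ with the convention $k_0=-1$, which is what the paper does. Then the radii within a block form a clean geometric sequence $\alpha_{k_i},\theta\alpha_{k_i},\theta^2\alpha_{k_i},\dots$ starting from the value inherited from the previous success, the per-block sum is at most $\tfrac{\rho(\alpha_{k_i})}{1-\theta^2}$, and your telescoping argument giving $\sum_{i\ge 1}\rho(\alpha_{k_i})\le f(x_0)-f(x_*)$ (which is correct as written, since $x_{k_i+1}=x_{k_{i+1}}$ and $x_{k_1}=x_0$) delivers exactly the claimed constant, with the $i=0$ block contributing the extra $\rho(\alpha_0)$ term.
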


Lemma \ref{lem:sumsteps} can be explained by the fact that
$\rho(\alpha_k)$ decreases geometrically by a ratio $\theta^2$ between
two successive successful iterations, so that the contribution to the
sum of these iterations boils down to multiplying the remainder of the
sum by a factor of $\frac{1}{1- \theta^2}$. The sum on successful
iterations cannot be too large, because by definition of successful
iterations, $f(x_k)- f(x_\star)$ is lower bounded by this sum plus the
initial sub-optimality gap $f(x_{0}) - f(x_\star)$.  Bringing Lemma
\ref{lem:sumsteps} and the bound of
\Eqref{eq:bound_increase_grad_unif} together results in a bound on the
squared norm of the gradients $\|\nabla f(x_{k})\|^2$ after the first
unsuccessful iteration. Using the regularity conditions of Assumption
\ref{ass:reg} suffices to relate the regret to the squared norm of the
gradients, which in turn yields Theorem
\ref{th:direct_search_reg_bound}. The complete proof can be found in
Appendix \ref{sec:app_det_unc}.

\subsection{The Constrained and Noisy Setting}
We now turn to the noisy and constrained case described in Section \ref{sec:model}.
We further impose the following assumptions on the domain. 
\begin{assumption}\label{ass:bounded_D}
$\D$ is contained in a ball of radius $b$.
\end{assumption}

This assumption, together with assumption \ref{ass:reg}, implies that
$f(x) -f(x_\star)$ is bounded.  
It also follows from these two assumptions that
$\nabla f$ is bounded in norm by a constant, denoted by $B$, on   the feasible set $\D$.

While in the unconstrained case, the chosen PSS $\mathbb{D}$ only
needed to satisfy Assumption \ref{ass:kappa}, a stronger assumption is
required in the presence of linear constraints. Indeed, Assumption
\ref{ass:kappa} was a way to ensure that there was at least one trial
direction $v$ in $\mathbb{D}$ satisfying
$\frac{- \nabla f(x_k)^Tv}{\|v\| \|\nabla{f}(x_k) \|}\geq \kappa$.
This property is not sufficient in the constrained case, because in
this case, the directions of interest at iteration $k$ in $\mathbb{D}_k$
are those that are feasible. A problem that might arise for example,
is that a sufficient descent direction is not detected even in a
situation where $x_k - \alpha_k \nabla f(x_k)$ is feasible, because
the set of feasible directions in $\mathbb{D}_k$ does not positively
span the feasible region.  To avoid such cases, we impose a constraint
on $\mathbb{D}_k$ that involves the notion of approximate tangent
cones. Approximate tangent cones at a point $x$ are the polar cones of
the cones that are generated by the $\alpha$-binding constraints at
$x$ as defined by \cite{kolda2007stationarity} (see in particular Figure 
2.1 of \cite{kolda2007stationarity} for an illustration of the notion
of approximate tangent cones).

Let $a_i^T$ be the $i$-th row of the constraint matrix $A_I$ and let
$\mathcal{C}_i = \{y, \text{ s.t}~ a_i^T y=u_i\}$ denote the sets
where the $i$-th constraint are binding. If there exists a point of
$\mathcal{C}_i$ at a distance smaller than $\alpha$ from $x$, then the
$i$-th constraint is said to be $\alpha$-binding. The indices of
$\alpha$-binding constraints at $x$ are denoted
$I (x, \alpha)= \{i, ~ \mathrm{dist}(x,\mathcal{C}_i) \leq \alpha \},$
where $\mathrm{dist}$ is induced by the Euclidian distance. We define
the approximate normal cone $N(x,\alpha)$ to be the cone generated by
the set $\{a_i, ~ \text{s.t}. ~ i \in I(x, \alpha)\}\cup \{0\}$. The
approximate tangent cone $T(x,\alpha)$ is the polar of $N(x,\alpha)$ ,
which means that
$\displaystyle{T(x,\alpha) = \big\{v: y^Tv \leq 0 , ~\forall y \in
  N(x,\alpha) \big\}}.$ Informally $T(x,\alpha)$ is the cone inside of
the boundaries generated by the $\alpha$-binding constraints at
$x$. 
We highlight that since the number of constraints $m$ is finite, there
can only be a finite number, smaller than $2^m$, of tangent
cones. Consequently, Assumption \ref{ass:gk} is rather mild.

\begin{assumption}\label{ass:gk}
For $k \in \{1 \ldots K\}$, $\mathbb{D}_k$ contains a set $\mathcal{G}_k$ of positively generating directions of $T(x, \alpha)$ included in $T(x, \alpha)$, for any $x \in \D$ and $\alpha \in \R_{+}$.
\end{assumption}

 In the following, we denote by $\mathcal{G}_k$ such a set. Assumption \ref{ass:gk} was already necessary in \citep{kolda2003optimization}
and \cite{lewis2000pattern}, while in
\citep{gratton2019direct}, the descent set at iteration $k$ is assumed
to be contained in $T(x, \alpha)$ and to generate it. We explain the
purpose of this assumption in the following. While the purpose of
Assumption \ref{ass:kappa} was to ensure that the maximal cosine
similarity of a vector in $\mathbb{D}_k$ with $- \nabla{f}(x_k)$ was
bounded away from $0$, we focus on a different kind of measure of
similarity to $- \nabla{f}(x_k)$ defined as\looseness-1
$$\begin{cases}
\max_{v \in \mathcal{G}_k} \frac{- \nabla{f}(x_k)^T v}{\|P_{T(x, \alpha)}(-\nabla{f}(x_k))\| \|v\|} \text{ if }P_{T(x, \alpha)}(- \nabla{f}(x_k))\neq 0 \;,\\ 1 \text{ otherwise.}
\end{cases}$$ If $P_{T(x, \alpha)}(- \nabla{f}(x_k))$ gets close to
$0$, this measure of similarity to $- \nabla{f}(x_k)$ does not necessarily
become small, although $- \nabla{f}(x_k)^T v$ is small for any $v$ in
$\mathcal{G}_k$.  In order to bound this measure of similarity between
a vector of $\mathcal{G}_k$ and $-\nabla f (x_k)$ away from $0$, we
define the following approximate cosine measure:
 $$ cm_{T(x_k, \alpha_k)}(\mathcal{G}_k):= \inf_{u \in \rn,  P_{T(x_k, \alpha_k)}(u)\neq 0} \max_{v \in \mathcal{G}_k} \frac{u^T v}{\|P_{T(x_k, \alpha_k)}(u)\| \|v\|}\;.$$ 
As proved by \cite{lewis2000pattern} and recalled by
\cite{gratton2019direct}, if $\mathcal{C}$ is a set of cones $c_j$
that are respectively positively generated from a set of vectors
$G(c_j)$, then
$$\lambda(\mathcal{C}):= \min_{c_j \in \mathcal{C}}\left\{\inf_{u \in
    \rn, P_{c_j}(u)\neq 0} \max_{v \in G(c_j)} \frac{u^T
    v}{\|P_{c_j}(u)\| \|v\|}\right\}>0\;,$$ which guarantees that
$ cm_{T(x_k, \alpha_k)}(\mathcal{G}_k)$ is bounded by some constant
$\kappa_{min}>0$, under Assumption \ref{ass:gk}.  Under the above assumptions, we can bound the
regret of FDS-Plan 
when the optimum lies in the interior of the feasible set.
 Note that assuming optimal allocation in the interior of the feasible set is crucial for analysis, but we believe the opposite would not be harmful in practice, as supported by simulations (see Appendix \ref{app:suppl_exp}).

\begin{theorem}\label{th:FDS-Plan_hor}
   Under Assumptions \ref{ass:reg}, \ref{ass:bounded_D},
   and \ref{ass:gk},  if
   $x_\star \in \text{int}(\D)$ and if $|\mathbb{D}_k|$ is bounded by a constant $S_{\mathbb{D}}$, the cumulative regret $R_T$ of FDS-Plan
   (respectively FDS-Seq) after the first $T$ evaluations of $f$
   satisfies
$$\E[R_T]= O\big(\log(T)^{2/3}T^{2/3}\big)$$
for the choice $\delta = T^{-4/3}$ (respectively $\delta = T^{-10/3}$
for FDS-Seq).
\end{theorem}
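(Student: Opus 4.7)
The plan is to reduce the noisy, constrained setting to the clean analysis of Section~3.1 by conditioning on a high-probability good event $\mathcal{E}$ on which every estimate $\hat f$ built by the algorithm is within $\rho(\alpha_k)/4$ of $f$. For FDS-Plan this is immediate from the sub-Gaussian tail applied with $N_k = 32\sigma^2\log(2/\delta)/\rho(\alpha_k)^2$ samples; for FDS-Seq it requires a time-uniform (peeling) concentration inequality compatible with the random sample sizes $n_{0,k},n_{v,k}$ in condition \eqref{eq:cond_test}, which is precisely what forces the stricter $\delta=T^{-10/3}$. A union bound over the (at most $K|\mathbb{D}|$) tests together with these values of $\delta$ yields $\mathbb{P}(\mathcal{E}^c)=O(1/T)$, and Assumptions~\ref{ass:bounded_D}--\ref{ass:bounded_g} bound the one-step cost by a constant, so the $\mathcal{E}^c$ contribution to $\mathbb{E}[R_T]$ is $O(1)$.

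On $\mathcal{E}$, successful iterations correspond to a true decrease of $f$ by at least $\rho(\alpha_k)/2$, and unsuccessful iterations mean that no feasible direction induces a true decrease larger than $3\rho(\alpha_k)/2$. Mimicking the argument of Lemma~\ref{lem:gradnorm_failures}, but with the projected cosine measure $\kappa_{\min}$ from Assumption~\ref{ass:gk} replacing $\kappa$, I get $\|P_{T(x_k,\alpha_k)}\nabla f(x_k)\|\leq \eta'\alpha_k$ at every unsuccessful iteration, and, following the propagation argument leading to~\eqref{eq:bound_increase_grad_unif}, at every iteration after the first unsuccessful one. Lemma~\ref{lem:sumsteps} adapts in the same way, giving $\sum_k \rho(\alpha_k)\leq C_1$ with $C_1$ depending on $f(x_0)-f(x_*)$, $\theta$ and $\rho(\alpha_0)$. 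The interior assumption $x_\star\in\mathrm{int}(\mathcal{D})$ enters twice: first, $\nabla f(x_\star)=0$ means strong convexity gives $f(x_k)-f(x_\star)\leq \|\nabla f(x_k)\|^2/(2a)$; second, since $(f(x_k))_k$ is non-increasing on $\mathcal{E}$ and $\alpha_k\to 0$, after finitely many iterations $x_k$ is far enough from every facet that $T(x_k,\alpha_k)=\mathbb{R}^d$ and the projection can be dropped, upgrading the previous bound to the full $\|\nabla f(x_k)\|\leq \eta'\alpha_k$.

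The crux of the noisy case is the passage from iteration count $K$ to evaluation count $T$. Since $N_k\leq M_k\leq(|\mathbb{D}|+1)N_k$ with $N_k=\Theta(\log(1/\delta)/\alpha_k^4)$, we have $T\geq C_3\log(1/\delta)\sum_k 1/\alpha_k^4$. The Cauchy--Schwarz inequality applied as $K=\sum_k \alpha_k\cdot(1/\alpha_k)\leq\sqrt{\sum_k\alpha_k^2}\sqrt{\sum_k 1/\alpha_k^2}$ and then once more to $\sum_k 1/\alpha_k^2$ versus $\sum_k 1/\alpha_k^4$ yields $\sum_k 1/\alpha_k^4\geq K^3/C_1^2$, so $K=O((T/\log(1/\delta))^{1/3})$. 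Plugging the gradient bound into $\beta$-smoothness, every evaluated point $x_k+\alpha_k v$ satisfies $f(x_k+\alpha_k v)-f(x_\star)=O(\alpha_k^2)$, so iteration $k$ contributes $O(M_k\alpha_k^2)=O(\log(1/\delta)/\alpha_k^2)$ to the regret. Summing and using Cauchy--Schwarz once more,
\begin{equation*}
\sum_k \frac{1}{\alpha_k^2}\ \leq\ \sqrt{K}\,\sqrt{\sum_k \frac{1}{\alpha_k^4}}\ \leq\ \sqrt{K\cdot\frac{T}{C_3\log(1/\delta)}}\ =\ O\!\left(\Big(\tfrac{T}{\log(1/\delta)}\Big)^{2/3}\right),
\end{equation*}
so the regret on $\mathcal{E}$ is $O(\log(1/\delta)^{1/3}T^{2/3})$; substituting $\delta=T^{-4/3}$ (resp.\ $T^{-10/3}$) and absorbing the lower-order contribution of the $O(\log T)$ successful iterations preceding the first failure gives the claimed $O((\log T)^{2/3}T^{2/3})$ bound.

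The main obstacle is the FDS-Seq variant: because $n_{0,k},n_{v,k}$ are adaptive, a naive sub-Gaussian bound no longer controls the probability of a false test, and one must invoke a mixture/peeling argument that forces the more conservative $\delta=T^{-10/3}$ and more careful bookkeeping of logarithmic factors. A secondary technical nuisance is to handle the initial regime in which either $\alpha_k$ has not yet shrunk or $x_k$ is still close enough to the boundary that $T(x_k,\alpha_k)\neq \mathbb{R}^d$; this is dealt with by showing that only a constant number of such iterations occur and by bounding their contribution crudely via Assumptions \ref{ass:bounded_D}--\ref{ass:bounded_g}.
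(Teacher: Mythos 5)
Your overall strategy matches the paper's: condition on the good event where all estimates are within $\rho(\alpha_k)/4$ of the truth, transfer Lemmas \ref{lem:bound_increase_grad}--\ref{lem:sumsteps} to the noisy setting with the weakened decrease thresholds $\rho(\alpha_k)/2$ and $3\rho(\alpha_k)/2$, observe that each iteration after the first failure contributes $O(N_k\alpha_k^2)=O(\log(1/\delta)\,\alpha_k^{-2})$ to the regret, and convert to the evaluation count $T$ via the summability of $\alpha_k^2$. Your double Cauchy--Schwarz (through the bound $K=O((T/\log(1/\delta))^{1/3})$) is an equivalent substitute for the paper's single H\"older step $\sum_k\alpha_k^{-2}\leq(\sum_k\alpha_k^2)^{1/3}(\sum_k\alpha_k^{-4})^{2/3}$, and your treatment of FDS-Seq (union bound over the possible adaptive sample sizes, forcing $\delta=T^{-10/3}$) is the paper's mechanism in spirit. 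One small arithmetic slip: with $\delta=T^{-4/3}$ and a union bound over $O(T)$ tests you get $\pr{\mathcal{E}^c}=O(T^{-1/3})$, not $O(1/T)$, so the bad-event contribution is $O(UT^{2/3})$ rather than $O(1)$; this is exactly how $\delta$ is calibrated in the paper and does not change the rate.

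The genuine gap is in the constrained part. You assert that since $(f(x_k))_k$ is non-increasing and $\alpha_k\to0$, the iterates eventually land far enough from every facet that $T(x_k,\alpha_k)=\R^d$. Monotonicity of $f(x_k)$ does not give this: the sequence could a priori stall near a boundary where the approximate tangent cone is a proper cone, and the bound $\|P_{T(x_k,\alpha_k)}(-\nabla f(x_k))\|\lesssim\alpha_k$ that you derive at unsuccessful iterations says nothing about the normal component of the gradient, hence nothing about the distance to $x_*$. The missing ingredient is the criticality measure $\chi(x)=\max_{x+v\in\D,\,\|v\|\leq1}-\nabla f(x)^{\top}v$: via the Moreau decomposition and the estimate $\|P_{N(x,\alpha)}[v]\|\leq\alpha/\eta_{\min}$ for feasible $v$ (Proposition B.1 of \citealp{lewis2000pattern}), one gets $\chi(x_k)\leq L_2\alpha_k$ at unsuccessful iterations, where $L_2$ absorbs the gradient bound $B$ of Assumption \ref{ass:bounded_g}; combined with the strong-convexity inequality $a\|x_k-x_*\|\leq\chi(x_k)$ and a propagation lemma analogous to Lemma \ref{lem:bound_increase_grad}, this yields an explicit iteration index after which $\|x_k-x_*\|\leq\Delta/4$ and $\alpha_k\leq\Delta/2$, whence $T(x_k,\alpha_k)=\R^d$. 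This is precisely where Assumption \ref{ass:bounded_g} and the interiority of $x_*$ are consumed, and where the complexity bound of \citet{gratton2019direct} enters to show that the pre-interior phase contributes only a $T$-independent number of iterations; without it your claim that ``only a constant number of such iterations occur'' is unsubstantiated.
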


In the absence of a lower bound, the optimality of such a regret rate
is unsure. It is difficult to compare it to other known bounds, as the
performance of related algorithms is often not evaluated in the same
way. In particular, the performance of the version of stochastic
gradient descent proposed by \cite{akhavan2020exploiting} is analyzed
with respect to a different notion of regret, $\tilde{R}_T$, that does
not take into account the samples needed for the estimation of each
gradient.  Their analysis yields $\tilde{R}_T = O(\sqrt{T})$. It is
important to note that the algorithm by \cite{akhavan2020exploiting}
takes advantage of the fact that in the setting of the latter paper,
sampling points outside of the feasible domain is possible.  When
using homothetic perturbation as we did for the illustration in Figure
\ref{fig:GD}, the regret of such a method is of the order of
$T^{2/3}$, as explained in Section \ref{sec:illustration}.\looseness-1

Black-box algorithms such as HOO or StoOO
\citep{bubeck2011x,munos2014bandits} are other possible baselines. When given a balanced hierarchical partition of
$\mathcal{X}$, and the smoothness of the function around its optimum,
these algorithms would incur a regret of the order of $\sqrt{T}$.  The
regret rate of FDS-Plan appears to be larger than that of HOO
instantiated with the right parameters. However, HOO relies on a
partition of the feasible domain that is computationally difficult to
achieve with arbitrary linearly constrained domains.


The assumption that $|\mathbb{D}_k|$ be bounded by a constant $S_{\mathbb{D}}$ is actually not constraining at all, since one way of satisfying Assumption \ref{ass:gk} is to set
$\mathbb{D}_k$ to the constant set of vectors corresponding to edges of optimization domains, which amounts to $2m(m-1)$ directions, where $m$ is the number of constraints.
Depending on the optimization domains, there may be smarter ways of choosing $|\mathbb{D}_k|$ that lead to smaller constants $S_{\mathbb{D}}$. The motivational case of resource allocation, where the feasible domain is the simplex, is an example of that.

In that case, the above method for choosing $\mathbb{D}_k$ yields $S_{\mathbb{D}}=2d(d+1)$ whereas recomputing the direction set at every round can spare us a factor $d$. An intuitive way to understand this is to consider the simplex of dimension $d=2$. When the iterate is in the interior of the simplex, and the step-size $\alpha_k$ is such that the admissible directions in $T(x_k, \alpha_k)$ form $\R^2$, we only need $d+1=3$ vectors (an angle of $2\pi/3$ apart). When $T(x_k, \alpha_k)$ is smaller, then minimal sets $G_k$ are formed by even fewer vectors. An efficient method for recomputing the set of directions at every round is described in \cite{griffin2008asynchronous}. It is possible to verify that for the simplex, this method provides less than $2d$ directions at each round. \looseness-1

This is particularly important, because the regret bound is
proportional to the number of directions contained in $\mathbb{D}_k$
(see the proof of Theorem \ref{th:FDS-Plan_hor}, in Appendix
\ref{app:interior_proof}), so that the dependence of the regret with
respect to $d$ is linear. For the sake of comparison,  the regret of the algorithm by \cite{akhavan2020exploiting} is quadratic in $d$, whereas, when HOO is perfectly parameterized, the dependence on $d$ of the regret of HOO disappears. Recall however, that on arbitrary linearly constrained domain, or even on the simplex in high dimension, HOO might be computationally intractable.

\subsubsection{Elements of Proof}
After some finite number of iterations that depends on
$\Delta:=\min_{i \in \{1 \ldots m\}}\mathrm{dist}(x_\star, \mathcal{C}_i)$,
the distance from $x_k$ to the boundaries of $\D$ is smaller than
$\Delta/4$ with high probability, thanks to the analysis of
\cite{gratton2019direct}. Waiting for another number of iterations,
$\alpha_k$ gets small enough for the approximate tangent cone
$T(x_k, \alpha_k)$ to describe the whole space $\R^d$. Then, the
trajectory of the algorithm is the same as in the unconstrained
setting.  In the unconstrained setting, the following elements provide
an intuition of why the regret is of the order of $T^{2/3}$.  With
similar arguments to those of the proof of Theorem
\ref{th:direct_search_reg_bound}, i.e Lemmas
\ref{lem:bound_increase_grad}, \ref{lem:gradnorm_failures} and
\ref{lem:sumsteps}, it is easy to see that the instantaneous regret
incurred at iteration $k$ of the algorithm is proportional to the sum
of $\alpha_k^{-2}$ (up to logarithmic factors), whereas it was
proportional to $\alpha_k^2$ in the deterministic case: indeed,
iteration $k$ now involves $N_k$ times more evaluations than in the
noiseless case and $N_k$ is proportional to $\alpha_k^{-4}$ (up to
logarithmic factors).  Thanks to Lemma \ref{lem:sumsteps}, we know
that $\alpha_k^2$ is summable.  Then, thanks to H\"{o}lder's
inequality applied to the sum of $\alpha_k^{-2}$ written as
$\left(\alpha_{k}\right)^{2/3}\left(\alpha_{k}\right)^{-2/3-2}$, the
regret is proportional to the total number of evaluations to the power
of $2/3$, up to logarithmic factors.  The complete proof can be found
in Appendix \ref{sec:app_noi_c}. \looseness-1

\section{Experiments}\label{sec:simu}

In our experiments, we focus on the case in which there are seven resources ($d=6$), and the loss functions are of the same form as in Section \ref{sec:illustration}, and 
$w_i(x)= -\tau_i \frac{\log(1+ \gamma x)} {\log(1+ \gamma )}$ with
$\gamma=2$, $\tau_1= 1$, $\tau_2= \tau_3 = \tau_4 = 0.75$, $\tau_5 = 0.89$, and $\tau_6 = \tau_7= 0.95$.
On Figure \ref{fig:expes}, we compare FDS-Seq and FDS-Plan to UCB on a discretization of the space, and gradient descent with an homothetic perturbation. Both methods are explained in Section \ref{sec:illustration}. The comparison with HOO is made impossible by the numerical complexity of HOO.  We set the horizon to $T = 500, 000$ and use a Gaussian noise with standard deviation $\sigma = 0.1$. The set of directions used in FDS-Seq and FDS-Plan are chosen with the method of \cite{griffin2008asynchronous}. The step parameter of the grid of UCB is set as $T^{-1/(d+2)}=T^{-1/8}$.

\begin{figure}[ht]
    \centering
    \includegraphics[width=0.6\linewidth]{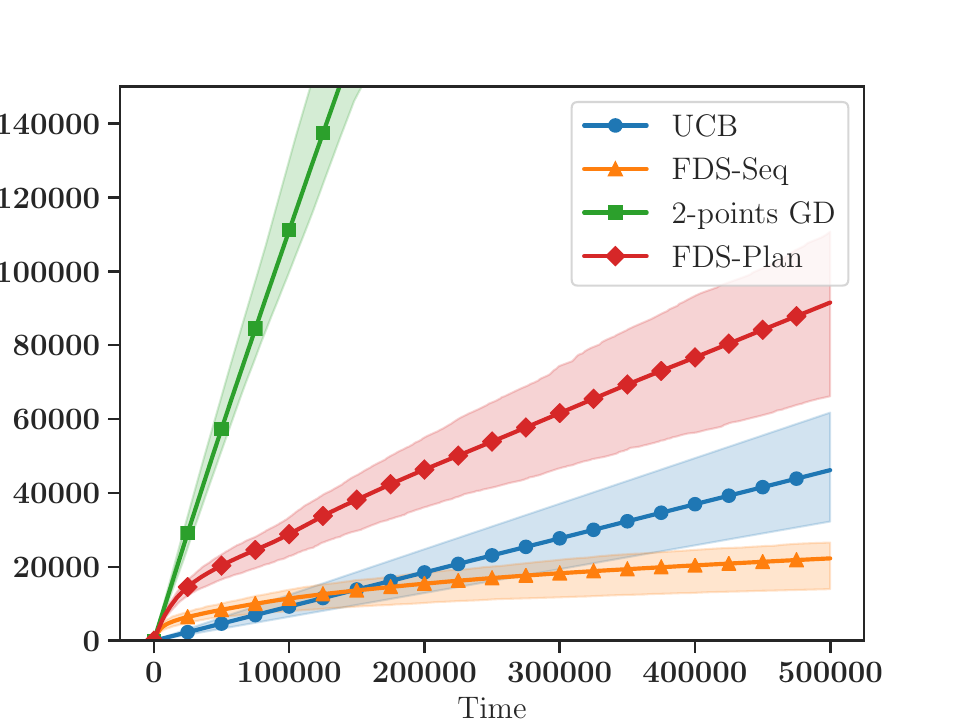}
    \caption{Regret plots of various strategies for resource allocation}
    \label{fig:expes}
\end{figure}

The regrets of the algorithms strongly depend on the chosen function. In the case
of UCB, the position of the maximizer with respect to the grid that
it relies on is important. To alleviate this issue, we plot the mean regret of
all the algorithms, when randomly shifting the loss function by a random vector
whose coordinates are in $[0.05, 0.05]$. We use 1200 Monte Carlo repetitions. The shaded area represents the region between the first and third quartile.

Clearly, FDS-Seq, by
reducing the number of samples needed at the beginning of the run (when moves
corresponds to significant drops of the target function), dominates FDS-Plan. 
The  unsatisfying 
performance of the gradient descent algorithm can be explained both by the homothetic 
perturbation that harms the convergence to the optimizer, and by the bad dependence of 
this algorithm on the dimension. FDS-Seq also clearly outperforms UCB. 
Note that the performance of UCB will worsen in
higher dimension due to the difficulty of simultaneously controlling the distance
between grid points and the overall number of points in the grid. \looseness-1

\section{Conclusion}
We have studied extensions of direct search algorithms designed for
linearly constrained zeroth-order optimization in the stochastic
setting. We have shown that these algorithms, though being fairly
simple, suffer a regret of the order of $T^{2/3}$, which is quite
satisfactory when compared to other options with comparable
implementation cost, like those inspired by
finitely-armed bandit algorithms or by gradient descent schemes. There
is still a performance gap, in terms of regret rate, when compared to some continuously-armed bandit
approaches that are however computationally much more heavy, even in low-dimensional
instances of the resource allocation model, such as the one considered in Section~\ref{sec:simu}.
We do not believe that the analysis of the algorithms proposed in this paper can be
refined so as to obtain the $\sqrt{T}$ regret rate. However, an interesting open question for
future work is to know whether this bound could be achieved by other sampling allocation
schemes.

\clearpage

\bibliography{ref_allocation}

\begin{thebibliography}{32}
\providecommand{\natexlab}[1]{#1}
\providecommand{\url}[1]{\texttt{#1}}
\expandafter\ifx\csname urlstyle\endcsname\relax
  \providecommand{\doi}[1]{doi: #1}\else
  \providecommand{\doi}{doi: \begingroup \urlstyle{rm}\Url}\fi

\bibitem[Agarwal et~al.(2011)Agarwal, Foster, Hsu, Kakade, and
  Rakhlin]{agarwal2011stochastic}
Alekh Agarwal, Dean~P Foster, Daniel~J Hsu, Sham~M Kakade, and Alexander
  Rakhlin.
\newblock Stochastic convex optimization with bandit feedback.
\newblock \emph{Advances in Neural Information Processing Systems}, 24, 2011.

\bibitem[Agrawal \& Devanur(2015)Agrawal and Devanur]{Agrawal:2015}
Shipra Agrawal and Nikhil~R. Devanur.
\newblock Fast algorithms for online stochastic convex programming.
\newblock In \emph{Proceedings of the Twenty-sixth Annual ACM-SIAM Symposium on
  Discrete Algorithms}, SODA '15, 2015.

\bibitem[Akhavan et~al.(2020)Akhavan, Pontil, and
  Tsybakov]{akhavan2020exploiting}
Arya Akhavan, Massimiliano Pontil, and Alexandre Tsybakov.
\newblock Exploiting higher order smoothness in derivative-free optimization
  and continuous bandits.
\newblock \emph{Advances in Neural Information Processing Systems},
  33:\penalty0 9017--9027, 2020.

\bibitem[Auer et~al.(2002)Auer, Cesa-Bianchi, and Fischer]{auer2002finite}
Peter Auer, Nicolo Cesa-Bianchi, and Paul Fischer.
\newblock Finite-time analysis of the multiarmed bandit problem.
\newblock \emph{Machine Learning}, 47\penalty0 (2-3):\penalty0 235--256, 2002.

\bibitem[Bach \& Perchet(2016)Bach and Perchet]{bach2016highly}
Francis Bach and Vianney Perchet.
\newblock Highly-smooth zero-th order online optimization.
\newblock In \emph{Conference on Learning Theory}, pp.\  257--283. PMLR, 2016.

\bibitem[Bravo et~al.(2018)Bravo, Leslie, and Mertikopoulos]{bravo2018bandit}
Mario Bravo, David Leslie, and Panayotis Mertikopoulos.
\newblock Bandit learning in concave n-person games.
\newblock \emph{Advances in Neural Information Processing Systems}, 31, 2018.

\bibitem[Bubeck et~al.(2008)Bubeck, Stoltz, Szepesv{\'a}ri, and
  Munos]{bubeck2008online}
S{\'e}bastien Bubeck, Gilles Stoltz, Csaba Szepesv{\'a}ri, and R{\'e}mi Munos.
\newblock Online optimization in x-armed bandits.
\newblock \emph{Advances in Neural Information Processing Systems}, 21, 2008.

\bibitem[Bubeck et~al.(2017)Bubeck, Lee, and Eldan]{bubeck2017kernel}
S{\'e}bastien Bubeck, Yin~Tat Lee, and Ronen Eldan.
\newblock Kernel-based methods for bandit convex optimization.
\newblock In \emph{Proceedings of the 49th Annual ACM SIGACT Symposium on
  Theory of Computing}, pp.\  72--85, 2017.

\bibitem[Bubeck et~al.(2011)Bubeck, Munos, Stoltz, and
  Szepesv{\'a}ri]{bubeck2011x}
Sébastien Bubeck, Rémi Munos, Gilles Stoltz, and Csaba Szepesv{\'a}ri.
\newblock X-armed bandits.
\newblock \emph{Journal of Machine Learning Research}, 12\penalty0 (5v), 2011.

\bibitem[Combes \& Proutiere(2014)Combes and Proutiere]{combes2014unimodal}
Richard Combes and Alexandre Proutiere.
\newblock Unimodal bandits: Regret lower bounds and optimal algorithms.
\newblock In \emph{International Conference on Machine Learning}, pp.\
  521--529. PMLR, 2014.

\bibitem[Dzahini(2022)]{dzahini2022expected}
Kwassi~Joseph Dzahini.
\newblock Expected complexity analysis of stochastic direct-search.
\newblock \emph{Computational Optimization and Applications}, 81\penalty0
  (1):\penalty0 179--200, 2022.

\bibitem[Flaxman et~al.(2004)Flaxman, Kalai, and McMahan]{flaxman2004online}
Abraham~D Flaxman, Adam~Tauman Kalai, and H~Brendan McMahan.
\newblock Online convex optimization in the bandit setting: gradient descent
  without a gradient.
\newblock \emph{arXiv preprint cs/0408007}, 2004.

\bibitem[Fontaine et~al.(2020)Fontaine, Mannor, and
  Perchet]{fontaine2020adaptive}
Xavier Fontaine, Shie Mannor, and Vianney Perchet.
\newblock An adaptive stochastic optimization algorithm for resource
  allocation.
\newblock In \emph{Algorithmic Learning Theory}, pp.\  319--363. PMLR, 2020.

\bibitem[Gratton et~al.(2015)Gratton, Royer, Vicente, and
  Zhang]{gratton2015direct}
Serge Gratton, Cl{\'e}ment~W Royer, Lu{\'\i}s~Nunes Vicente, and Zaikun Zhang.
\newblock Direct search based on probabilistic descent.
\newblock \emph{SIAM Journal on Optimization}, 25\penalty0 (3):\penalty0
  1515--1541, 2015.

\bibitem[Gratton et~al.(2019)Gratton, Royer, Vicente, and
  Zhang]{gratton2019direct}
Serge Gratton, Cl{\'e}ment~W Royer, Lu{\'\i}s~Nunes Vicente, and Zaikun Zhang.
\newblock Direct search based on probabilistic feasible descent for bound and
  linearly constrained problems.
\newblock \emph{Computational Optimization and Applications}, 72\penalty0
  (3):\penalty0 525--559, 2019.

\bibitem[Griffin et~al.(2008)Griffin, Kolda, and
  Lewis]{griffin2008asynchronous}
Joshua~D Griffin, Tamara~G Kolda, and Robert~Michael Lewis.
\newblock Asynchronous parallel generating set search for linearly constrained
  optimization.
\newblock \emph{SIAM Journal on Scientific Computing}, 30\penalty0
  (4):\penalty0 1892--1924, 2008.

\bibitem[Gross(1956)]{gross}
O~Gross.
\newblock A class of discrete-type minimization problems.
\newblock Technical report, 1956.

\bibitem[Hazan \& Levy(2014)Hazan and Levy]{hazan2014bandit}
Elad Hazan and Kfir Levy.
\newblock Bandit convex optimization: Towards tight bounds.
\newblock \emph{Advances in Neural Information Processing Systems}, 27, 2014.

\bibitem[Hazan \& Li(2016)Hazan and Li]{hazan2016optimal}
Elad Hazan and Yuanzhi Li.
\newblock An optimal algorithm for bandit convex optimization.
\newblock \emph{arXiv preprint arXiv:1603.04350}, 2016.

\bibitem[Katoh et~al.(1979)Katoh, Ibaraki, and Mine]{katoh1979polynomial}
Naoki Katoh, Toshihide Ibaraki, and Hisashi Mine.
\newblock A polynomial time algorithm for the resource allocation problem with
  a convex objective function.
\newblock \emph{Journal of the Operational Research Society}, 30\penalty0
  (5):\penalty0 449--455, 1979.

\bibitem[Kolda et~al.(2003)Kolda, Lewis, and Torczon]{kolda2003optimization}
Tamara~G Kolda, Robert~Michael Lewis, and Virginia Torczon.
\newblock Optimization by direct search: New perspectives on some classical and
  modern methods.
\newblock \emph{SIAM review}, 45\penalty0 (3):\penalty0 385--482, 2003.

\bibitem[Kolda et~al.(2007)Kolda, Lewis, and Torczon]{kolda2007stationarity}
Tamara~G Kolda, Robert~Michael Lewis, and Virginia Torczon.
\newblock Stationarity results for generating set search for linearly
  constrained optimization.
\newblock \emph{SIAM Journal on Optimization}, 17\penalty0 (4):\penalty0
  943--968, 2007.

\bibitem[Koopman(1953)]{koopman}
Bernard~O Koopman.
\newblock The optimum distribution of effort.
\newblock \emph{Journal of the Operations Research Society of America},
  1\penalty0 (2):\penalty0 52--63, 1953.

\bibitem[Lewis \& Torczon(2000)Lewis and Torczon]{lewis2000pattern}
Robert~Michael Lewis and Virginia Torczon.
\newblock Pattern search methods for linearly constrained minimization.
\newblock \emph{SIAM Journal on Optimization}, 10\penalty0 (3):\penalty0
  917--941, 2000.

\bibitem[Locatelli \& Carpentier(2018)Locatelli and
  Carpentier]{locatelli2018adaptivity}
Andrea Locatelli and Alexandra Carpentier.
\newblock Adaptivity to smoothness in x-armed bandits.
\newblock In \emph{Conference on Learning Theory}, pp.\  1463--1492. PMLR,
  2018.

\bibitem[Mehta et~al.(2007)Mehta, Saberi, Vazirani, and
  Vazirani]{mehta2007adwords}
Aranyak Mehta, Amin Saberi, Umesh Vazirani, and Vijay Vazirani.
\newblock Adwords and generalized online matching.
\newblock \emph{Journal of the ACM (JACM)}, 54\penalty0 (5):\penalty0 22, 2007.

\bibitem[Munos(2014)]{munos2014bandits}
R{\'e}mi Munos.
\newblock From bandits to monte-carlo tree search: The optimistic principle
  applied to optimization and planning.
\newblock 2014.

\bibitem[Nesterov \& Spokoiny(2017)Nesterov and Spokoiny]{nesterov2017random}
Yurii Nesterov and Vladimir Spokoiny.
\newblock Random gradient-free minimization of convex functions.
\newblock \emph{Foundations of Computational Mathematics}, 17\penalty0
  (2):\penalty0 527--566, 2017.

\bibitem[Shang et~al.(2019)Shang, Kaufmann, and Valko]{shang2019general}
Xuedong Shang, Emilie Kaufmann, and Michal Valko.
\newblock General parallel optimization a without metric.
\newblock In \emph{Algorithmic Learning Theory}, pp.\  762--788. PMLR, 2019.

\bibitem[Valko et~al.(2013)Valko, Carpentier, and Munos]{valko2013stochastic}
Michal Valko, Alexandra Carpentier, and Rémi Munos.
\newblock Stochastic simultaneous optimistic optimization.
\newblock In \emph{International Conference on Machine Learning}, pp.\  19--27.
  PMLR, 2013.

\bibitem[Vicente(2013)]{vicente2013worst}
Lu{\'\i}s~Nunes Vicente.
\newblock Worst case complexity of direct search.
\newblock \emph{EURO Journal on Computational Optimization}, 1\penalty0
  (1):\penalty0 143--153, 2013.

\bibitem[Zinkevich(2003)]{zinkevich2003online}
Martin Zinkevich.
\newblock Online convex programming and generalized infinitesimal gradient
  ascent.
\newblock In \emph{Proceedings of the 20th international conference on machine
  learning (icml-03)}, pp.\  928--936, 2003.

\end{thebibliography}
\bibliographystyle{tmlr}

\clearpage
\appendix
\appendix
{\Large\textbf{Supplementary Material}}\vspace*{1em}\\

\paragraph{Outline.} 
Appendix \ref{app:broader_impact}, contains general considerations about 
the potential repercutions
of this (methodological) work.
We prove in Appendix \ref{sec:app_det_unc} 
all the results pertaining to the noiseless, unconstrained case.
 In Appendix \ref{sec:noisy_unconstrained},
 we provide an additional result for the noisy but unconstrained  case. The analysis of direct search in the latter case paves the way for the proof of Theorem \ref{th:FDS-Plan_hor} 
 whose proof is deferred to Appendix \ref{sec:app_noi_c}.
Appendix \ref{app:HOO} contains further explanations about simulations in Section \ref{sec:illustration} and Appendix \ref{app:suppl_exp} contains additional experiments.

\section{Broader Impact Statement} \label{app:broader_impact}
This paper is mostly a methodological paper that is unlikely to have a direct societal impact.

However, it explores the idea that direct search algorithms, akin to approximate descent algorithms, can provide explicability in the context of budget allocation for advertising. 
Advertising practitioners who use these algorithms can explain their actions to their clients by guaranteeing that with
high probability, the latter result in an increase of the desired performance indicator. This work is thus part of a collective effort to reach explicability in machine learning, which is crucial as it
allows for more transparency.

From an even broader perspective, setting budgets for advertising campaigns is still a manual task in many companies, which could be replaced by algorithms such as those we propose here. Note that this would still leave the task of setting the scope of the campaign (which users to target, on which inventories, etc.) to marketing professionals. It is not clear which impact on employment the automation of budget allocation could have. 
However, for now, digital marketing is a flourishing sector where employment seems to have increased steadily in the last few years.

\section{Deterministic and unconstrained set-up}\label{sec:app_det_unc}
\subsection{Preliminary Results}

\begin{repeatlem}{lem:bound_increase_grad} If $f$ satisfies Assumption \ref{ass:reg}, 
$$\forall k'>k, \|\nabla f(x_{k'}) \|\leq \frac{\beta}{a}\|\nabla f(x_{k}) \|$$
\end{repeatlem}

\begin{proof}
First observe that because of strong convexity,
$$(\nabla f(x_k) - \nabla f({x_\star}))^{\top}(x_k-{x_\star})\geq a \|x_k-{x_\star}\|^2, $$ and $$a \| x_k- x^*\|^2\leq \|\nabla f(x_k)\| \| x_k- x^*\| ,$$
which implies that
\begin{equation}\label{eq:strong_cvx}
 a\| x_k- x^*\|\leq \|\nabla f(x_k)\|\;.\end{equation}
 Hence, 
\begin{align*}
\|\nabla f(x_{k'})\| &\leq \beta \|x_{k'} - {x_\star}\|\\
& \leq \frac{\beta}{\sqrt{a}} \sqrt{f(x_{k'} )- f({x_\star})} \leq \frac{\beta}{\sqrt{a}} \sqrt{f(x_{k} )- f({x_\star})} 
\\
& \leq \frac{\beta}{\sqrt{a}} \sqrt{\nabla f(x_k)
^\top (x_k - x^*)}
\leq \frac{\beta}{\sqrt{a}} \sqrt{\|\nabla f(x_k)\| \|
(x_k - x^*)\|}\\
& \leq \frac{\beta}{a} \nabla \|f(x_{k})\|\;,
\end{align*}
where the first inequality comes from the smoothness (in fact $\| \nabla f(x_{k})\|= \| \nabla f(x_{k})\|- \| \nabla f(x^*)\|  \leq \beta \|x_k - {x_\star}\|$), the second inequality is a result of  the strong convexity,  the third one ensues from  the fact that the algorithm is a descent algorithm, the fourth one arises as a resut of  convexity and the fifth one comes from the strong convexity property of \Eqref{eq:strong_cvx}.

\end{proof}

\begin{repeatlem}{lem:gradnorm_failures}
If $f$ satisfies Assumption \ref{ass:reg} and  iteration $k$ corresponds to an unsuccessful iteration then 
$$\|\nabla f(x_k) \|\leq \frac{1}{\kappa}\left(\frac{\beta}{2} \alpha_k + \frac{\rho(\alpha_k)}{\alpha_k}\right)= \frac{1}{\kappa}\left(\frac{\beta}{2}  + c \right)\alpha_k\;.$$
\end{repeatlem}
This lemma is already well-known \citep[see e.g. ][]{vicente2013worst}, we only prove it here for completeness.
\begin{proof}
Since $cm(\mathbb{D}_k):= \min_{u \in \rn, u\neq 0} \max_{v \in \mathbb{D}_k} \frac{u^T v}{\|u\| \|v\|} > \kappa$,
there exists $v \in \mathbb{D}_k$ such that 
$$-\nabla f(x_k)^{\top}v\geq \kappa\|\nabla f(x_k)\|\; .$$
Since the iteration is an unsuccessful iteration, we have 
$f(x_k)- f(x_k+\alpha_k v) \leq  \rho(\alpha_k) = c \alpha_k^2$.
Then 
\begin{align*}
 \kappa \alpha_k \|\nabla f(x_k)\|  - \rho(\alpha_k) &\leq - \nabla f(x_k)^{\top}v + f(x_k+\alpha_k v)- f(x_k) \\
 &\leq \int_{0}^{\alpha_k} \nabla f(x_k + u v)^{\top}v - \nabla f(x_k)^{\top}v du\\
 &\leq \int_{0}^{\alpha_k} \|\nabla f(x_k + u v) - \nabla f(x_k)\| \|v\| du\\
 &\leq \beta \int_0^{\alpha_k} u du \leq \frac{\beta}{2} \alpha_k^2 \;,
 \end{align*}
 which yields
 $\|\nabla f(x_k) \|\leq \frac{1}{\kappa}\left(\frac{\beta}{2} \alpha_k + \frac{\rho(\alpha_k)}{\alpha_k}\right)= \frac{1}{\kappa}\left(\frac{\beta}{2} \alpha_k + c \alpha_k\right).$
\end{proof}

\begin{repeatlem}{lem:sumsteps}
$$\sum_{k=0}^{\infty}\rho(\alpha_k)\leq \frac{1}{1- \theta^2}(f(x_{1}) - f({x_\star}) +\rho(\alpha_0))\;. $$
\end{repeatlem}
This lemma is  also a common element of the analysis of direct search algorithms \citep[see e.g. ][]{gratton2019direct}, we only prove it here for completeness.

We assume that there are infinitely many successful iterations as it is trivial to adapt the argument otherwise.
Let $k_i$ be the index of the $i$th successful iteration ($i \geq 1$).  Define $k_0= -1$
and $\alpha_{-1}=  \alpha_0 $ for convenience.  Let us rewrite
$\sum_{k=0}^{\infty}\rho(\alpha_k)$ as $\sum_{i=0}^{\infty}\sum_{k = k_i+1}^{k_{i+1}}\rho(\alpha_k)$ and study first $\sum_{k = k_i+1}^{k_{i+1}}\rho(\alpha_k).$
Thanks to the definition of the update on a successful iteration and on unsuccessful iterations, 
  \begin{align*}\sum_{k = k_i+1}^{k_{i+1}}\rho(\alpha_k)=
  \sum_{k = k_i+1}^{k_{i+1}}\rho(\theta^i  \alpha_{k_i}) =
  \sum_{k = k_i+1}^{k_{i+1}}c\theta^{2 i} \rho(\alpha_{k_i})\leq \frac{1}{1- \theta^2} \rho(\alpha_{k_i})\;.
  \end{align*}
  Since on successes $\rho(\alpha_{k_i})\leq f(x_{k_i}) -f(x_{k_i}+ \alpha_{k_i}) = f(x_{k_i}) -f(x_{k_{i+1}}) $,
$$\sum_{i=1}^{\infty}\rho(\alpha_{k_i})\leq f(x_{1}) - f({x_\star})\;.$$
Hence $$\sum_{k=0}^{\infty}\rho(\alpha_k)\leq \frac{1}{1- \theta^2}(f(x_{1}) - f({x_\star}) +\rho(\alpha_0))\;. $$

\begin{lemma}\label{lem:k_f}The index $k_f$  of the first unsuccessful iteration satisfies:
$$k_f\leq  \frac{f(x_{0})-f(x_{*})}{\rho(\alpha_0)} \;.$$
\end{lemma}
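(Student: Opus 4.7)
The plan is to exploit the fact that, by definition of $k_f$, every iteration with index $k < k_f$ is successful. Since the algorithm uses $\gamma = 1$ (successful iterations do not expand the step size) and unsuccessful iterations are the only ones that shrink it, the step size is constant on $\{0, 1, \ldots, k_f\}$: specifically $\alpha_k = \alpha_0$ for all such $k$.

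Next, I would use the definition of a successful iteration: at each successful iteration, the iterate moves to a point $x_{k+1} = x_k + \alpha_k v$ satisfying $f(x_k) - f(x_{k+1}) \geq \rho(\alpha_k)$. Combined with the step-size remaining constant, this gives $f(x_k) - f(x_{k+1}) \geq \rho(\alpha_0)$ for every $k \in \{0, \ldots, k_f - 1\}$.

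Then I would telescope the sum:
\begin{align*}
f(x_0) - f(x_{k_f}) = \sum_{k=0}^{k_f - 1}\bigl(f(x_k) - f(x_{k+1})\bigr) \geq k_f \, \rho(\alpha_0)\;.
\end{align*}
Since $f(x_{k_f}) \geq f(x_*)$ (the minimum value of $f$ on the domain), we obtain $f(x_0) - f(x_*) \geq k_f \, \rho(\alpha_0)$, which, after dividing through by $\rho(\alpha_0) > 0$, yields the claimed bound.

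There is essentially no main obstacle here — the argument is a direct consequence of the sufficient-decrease condition together with the step-size update rule. The only subtlety to mention is that the bound relies on $\gamma = 1$ (the assumption made after Algorithm \ref{alg:dir_search}); otherwise the step could grow on successful iterations and the decrease $\rho(\alpha_k)$ would depend on $k$, though a similar but slightly more involved bound would still hold.
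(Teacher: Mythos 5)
Your proof is correct and follows essentially the same argument as the paper: constancy of $\alpha_k = \alpha_0$ before the first unsuccessful iteration, the sufficient-decrease condition at each successful iteration, a telescoping sum, and the bound $f(x_{k_f}) \geq f(x_*)$. The remark about the role of $\gamma = 1$ is a sensible additional observation but does not change the substance.
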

Another version of this lemma is due to \cite{gratton2015direct}.
\begin{proof}
Before the first unsuccessful iteration, $\alpha_k = \alpha_0$. So by definition of a successful iteration, $\forall k, \text{ such that } 0<k\leq k_f$
$$f(x_{k-1}) - f(x_{k}) \geq \rho(\alpha_0)\;.$$

\noindent By summing, 
$$f(x_{k_0}) - f(x_{k_f}) \geq k_f \rho(\alpha_0).$$
The left hand-side of this inequality is upper-bounded by $f(x_{0})-f(x_{*})$, which suffices to conclude the proof.
\end{proof}
\subsection{Regret Bound}
In this section, we prove in Theorem \ref{th:direct_search_reg_bound_app} below a result involving the regret at iteration $K$ of the algorithm  instead of the regret after $T$ function evaluations.
As $T\leq K(S_{\mathbb{D}}+1)$, Theorem \ref{th:direct_search_reg_bound_app} directly implies Theorem \ref{th:direct_search_reg_bound}.

Consider
  $$\tilde{R}_K =\sum_{k=0}^K \left(f(x_k) - f({x_\star}) + \sum_{v\in \mathbb{D}_k} f(x_k+ \alpha_k v) - f({x_\star})\right)$$ which is an upper bound of the cumulative regret suffered by the algorithm at iteration $K$, since it accounts for all directions in $\mathbb{D}_k$ at each round $k$, while not necessarily all of them will be tested. $\tilde{R}_K$ can be bounded as follows.

\begin{theorem}\label{th:direct_search_reg_bound_app}
Under Assumptions \ref{ass:reg} and \ref{ass:kappa},
\begin{align*}\tilde{R}_K &\leq (S_{\mathbb{D}}+1)\Big[\left(\frac{1}{c}\left(\frac{1}{1- \theta^2}(f(x_{0}) - f({x_\star}) +\rho(\alpha_0)\right)\right)\left(\left(1+ \frac{\eta}{a}\right) \eta + \beta \right) \\
&+\frac{f(x_{0}) - f({x_\star})}{\rho(\alpha_0)} \left(\frac{\beta}{a}\|\nabla f(x_0) \| \alpha_0 + \beta \alpha_0 ^2\right)\Big]\;,
\end{align*}
where $\eta :=\frac{\beta}{a}\frac{1}{\kappa \theta} (c + \frac{\beta}{2})   $.
\end{theorem}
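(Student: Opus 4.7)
The plan is to split the iteration range into an initial phase $k<k_f$ (before the first unsuccessful iteration) and an asymptotic phase $k\geq k_f$. The split is natural because before the first failure the step size is still $\alpha_0$ and the pointwise control $\|\nabla f(x_k)\|\leq\eta\alpha_k$ established in equation \eqref{eq:bound_increase_grad_unif} has not yet taken effect, whereas from $k_f$ onward it holds uniformly and every quantity of interest can be bounded in terms of $\alpha_k^2$.

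The starting point is a generic per-iteration bound. Strong convexity gives $f(x_k)-f(x_*)\leq\tfrac{1}{2a}\|\nabla f(x_k)\|^2$, and $\beta$-smoothness combined with Cauchy--Schwarz and $\|v\|=1$ gives $f(x_k+\alpha_k v)-f(x_k)\leq\alpha_k\|\nabla f(x_k)\|+\tfrac{\beta}{2}\alpha_k^2$. Summing over $v\in\mathbb{D}$ and the iterate $x_k$ itself, the contribution of iteration $k$ to $\tilde R_K$ is at most $(|\mathbb{D}|+1)(f(x_k)-f(x_*))+|\mathbb{D}|\bigl(\alpha_k\|\nabla f(x_k)\|+\tfrac{\beta}{2}\alpha_k^2\bigr)$.

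For the asymptotic phase I substitute $\|\nabla f(x_k)\|\leq\eta\alpha_k$ from \eqref{eq:bound_increase_grad_unif} into this display, bounding the per-iteration regret by $(|\mathbb{D}|+1)\bigl((1+\tfrac{\eta}{a})\eta+\beta\bigr)\alpha_k^2$ after absorbing the halved constants. Summing over $k\geq k_f$ and invoking Lemma \ref{lem:sumsteps}, which yields $\sum_k\alpha_k^2\leq\tfrac{1}{c(1-\theta^2)}(f(x_0)-f(x_*)+\rho(\alpha_0))$, produces the first line of the stated bound. For the initial phase, Lemma \ref{lem:k_f} bounds the number of such iterations by $k_f\leq(f(x_0)-f(x_*))/\rho(\alpha_0)$, while $\alpha_k=\alpha_0$ throughout and Lemma \ref{lem:bound_increase_grad} gives $\|\nabla f(x_k)\|\leq\tfrac{\beta}{a}\|\nabla f(x_0)\|$. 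Inserting these into the per-iteration bound and multiplying by $k_f$ produces the second line of the theorem; the $(f(x_k)-f(x_*))$ part of this contribution is absorbed into the asymptotic sum by extending Lemma \ref{lem:sumsteps} back to $k=0$, since $\rho(\alpha_k)=\rho(\alpha_0)$ already in these early steps. Theorem \ref{th:direct_search_reg_bound} itself then follows from $T\leq K(|\mathbb{D}|+1)$.

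The main technical obstacle is the cross term $\alpha_k\|\nabla f(x_k)\|$ in the per-iteration regret: this is where a first-order linearization of $f$ interacts with the trial step, and it can only be turned into an $O(\alpha_k^2)$ quantity once \eqref{eq:bound_increase_grad_unif} applies. Before $k_f$ one must fall back on the much cruder bound on $\|\nabla f(x_k)\|$ provided by Lemma \ref{lem:bound_increase_grad}, which is precisely the source of the separate $(f(x_0)-f(x_*))/\rho(\alpha_0)$ factor appearing in the final statement.
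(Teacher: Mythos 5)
Your decomposition, your per-iteration bounds (strong convexity for $f(x_k)-f(x_*)$, smoothness for $f(x_k+\alpha_k v)-f(x_k)$), and your use of Lemmas \ref{lem:k_f}, \ref{lem:bound_increase_grad}, \ref{lem:sumsteps} and of the uniform bound $\|\nabla f(x_k)\|\leq\eta\alpha_k$ after $k_f$ are exactly the paper's argument. There is, however, one step that does not go through as written: the claim that the $\sum_{k< k_f}\bigl(f(x_k)-f(x_*)\bigr)$ contribution ``is absorbed into the asymptotic sum by extending Lemma \ref{lem:sumsteps} back to $k=0$.'' Lemma \ref{lem:sumsteps} only controls $\sum_k\alpha_k^2$; to convert $f(x_k)-f(x_*)\leq\frac{1}{a}\|\nabla f(x_k)\|^2$ into an $O(\alpha_k^2)$ quantity you need $\|\nabla f(x_k)\|\leq\eta\alpha_k$, and that bound is established only for $k\geq k_f$ (it comes from Lemma \ref{lem:gradnorm_failures} at an unsuccessful iteration plus Lemma \ref{lem:bound_increase_grad}). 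Before the first unsuccessful iteration the only available control is $\|\nabla f(x_k)\|\leq\frac{\beta}{a}\|\nabla f(x_0)\|$, which bears no relation to $\alpha_0$, so these early suboptimality terms cannot be charged to $\sum_k\alpha_k^2$.

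The paper's proof handles this differently: in Lemma \ref{lem:reg_k_g} it simply bounds $f(x_k)-f(x_*)\leq f(x_0)-f(x_*)$ for $k\leq k_f$ (using that the algorithm is a descent) and pays $k_f\,(f(x_0)-f(x_*))\leq\frac{(f(x_0)-f(x_*))^2}{\rho(\alpha_0)}$, which is why the constant $C_1$ there contains an extra $(f(x_0)-f(x_*))$ inside the parenthesis that is absent from the displayed bound of the theorem (an apparent inconsistency between the paper's lemma and its theorem statement, independent of your argument). To repair your proof you should do the same: replace the absorption step by the trivial bound $f(x_k)-f(x_*)\leq f(x_0)-f(x_*)$ on the initial phase and accept the additional $\frac{(f(x_0)-f(x_*))^2}{\rho(\alpha_0)}$ term in the final constant. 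Everything else in your write-up, including the reduction $T\leq K(|\mathbb{D}|+1)$ to deduce Theorem \ref{th:direct_search_reg_bound}, is correct and matches the paper.
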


\begin{proof}
We decompose the regret as  
\begin{align}
\tilde{R}_K &= \sum_{k=0}^K \left(f(x_k) - f({x_\star}) + \sum_{v\in \mathbb{D}} f(x_k+ \alpha_k v) - f({x_\star})\right)\nonumber\\
&\leq  \sum_{k=0}^K \left(f(x_k) - f({x_\star}) + \sum_{v\in \mathbb{D}} f(x_k+ \alpha_k v) - f(x_k)+ f(x_k)- f({x_\star})\right)\nonumber\\
&\leq \sum_{k=0}^K \left(
(\mathbb{D}|+1)(f(x_k) - f({x_\star}))\right) + \sum_{k=0}^K\left(\sum_{v\in \mathbb{D}}  f(x_k+ \alpha_k v) - f(x_k)
\right)\nonumber\\
&\leq \sum_{k=0}^{k_f} \left((\mathbb{D}|+1)(f(x_k) - f({x_\star}))\right) + \sum_{k=0}^{k_f}\left(\sum_{v\in \mathbb{D}}  f(x_k+ \alpha_k v) - f(x_k)\right) \nonumber\\
& + \sum_{k=k_f}^K \left((\mathbb{D}|+1)(f(x_k) - f({x_\star}))\right) + \sum_{k=k_f}^K\left(\sum_{v\in \mathbb{D}}  f(x_k+ \alpha_k v) - f(x_k)\right)\label{eq:decomp_reg},
\end{align}
where $k_f $ is the iteration of the first unsuccessful iteration. The third inequality provides a decomposition of the regret in a first term that involves the suboptimality of the iterate, and a second term that involves the difference between values of $f$ at the iterate and at the trial points. 
As is usual for direct search algorithm, the behavior of the algorithm before the first unsuccessful iteration has to be studied separately, which explains the use of the decomposition of the fourth inequality.
We bound the regret due to the rounds preceding $k_f$ by:
\begin{lemma} \label{lem:reg_k_g}
The regret due to the rounds preceding $k_f$ is bounded by
$$\sum_{k=0}^{k_f} \left((\mathbb{D}|+1)(f(x_k) - f({x_\star}))\right) + \sum_{k=0}^{k_f}\left(\sum_{v\in \mathbb{D}}  f(x_k+ \alpha_k v) - f(x_k)\right)\leq C_1,$$
where we denote by $k_f$ is the index of the first unsuccessful iteration and by \\
$C_1 =\frac{f(x_0) - f({x_\star})}{c \alpha_0^2} \left((f(x_0) - f({x_\star})+ \frac{\beta}{a}\|\nabla f(x_0)\| \alpha_0 + \beta \alpha_0 ^2\right)$.
\end{lemma}
\begin{proof}
As until $k_f$,  $f(x_{k}) \leq f(x_0)$ and $\alpha_k = \alpha_0$, it holds that
\begin{align*}
&\sum_{k=0}^{k_f} \left((\mathbb{D}|+1)(f(x_k) - f({x_\star}))\right)+ \sum_{k=0}^{k_f}\left(\sum_{v\in \mathbb{D}}  f(x_k+ \alpha_k v) - f(x_k)\right)\\
&\leq \sum_{k=0}^{k_f} \left((\mathbb{D}|+1)(f(x_0) - f({x_\star}))\right)+ \sum_{k=0}^{k_f}\left(\sum_{v\in \mathbb{D}}  \|\nabla f(x_k)\| \alpha_k + \beta \alpha_k ^2\right)\\
&\leq k_f \left((\mathbb{D}|+1)(f(x_0) - f({x_\star}))\right) + (\mathbb{D}|+1)k_f\left(  \frac{\beta}{a}\|\nabla f(x_0)\| \alpha_0 + \beta \alpha_0 ^2\right)\\
&\leq \frac{f(x_0) - f({x_\star})}{c \alpha_0^2} \left((\mathbb{D}|+1)(f(x_0) - f({x_\star})+ \frac{\beta}{a}\|\nabla f(x_0)\| \alpha_0 + \beta \alpha_0 ^2)\right) \\
&= C_1\;,
\end{align*}
where Lemma \ref{lem:bound_increase_grad} is used for the second inequality and the third inequality comes from Lemma \ref{lem:k_f}. The first inequality results from the following property of convex and $\beta$-smooth functions:
$f(y)-f(x) \leq \nabla{f}(y)^T(x-y)\leq \| \nabla{f}(x)^T(x-y) \|+\beta \|x-y\|^2 $, applied to $x_k+ \alpha_k v$ and $x_k$.
\end{proof}

\begin{lemma}\label{lem:reg_2part} After $k_f$, 
$$\sum_{k=k_f}^K   f(x_k+ \alpha_k v) - f(x_k)\leq  C_2$$
where $k_f$ is the index of the first unsuccessful iteration and \\
 $C_2=\frac 1 c \left(
\eta + \beta
\right) \left(
\frac{1}{1- \theta^2}(f(x_{0}) - f({x_\star}) +\rho(\alpha_0)\right)\;.$

\end{lemma}
\begin{proof}
We take $k>k_f.$ Using  the property of convex and $\beta$-smooth functions that 
$f(y)-f(x) \leq \| \nabla{f}(x)^T(x-y) \|+\beta \|x-y\|^2 $, applied to $x_k+ \alpha_k v$ and $x_k$, as in Lemma \ref{lem:reg_k_g}, we get
$$
f(x_k+ \alpha_k v) - f(x_k)\leq  \alpha_k  \|\nabla f (x_k)\| + \beta \alpha_k^2\;.
$$
We note that if $k$ is the index of an unsuccessful iteration,
$$\|\nabla f (x_k)\|\leq \frac{1}{\kappa} \left(c + \frac{\beta}{2}\right)\alpha_k = \frac{1}{L_1'}\alpha_k\;,$$
by Lemma \ref{lem:gradnorm_failures}, if $\alpha_k \leq 1$.
If $k$ is the index of a successful iteration, we can come back to the last unsuccessful iteration $k'$, since 
\begin{align*}
\|\nabla f (x_k)\|&\leq \frac{\beta}{a}\|\nabla f (x_{k'})\|\leq\frac{\beta}{a} \frac{1}{\kappa} \left(c + \frac{\beta}{2}\right)\alpha_{k'}\leq\frac{\beta}{a} \frac{1}{\kappa} \left(c + \frac{\beta}{2}\right)\frac{\alpha_{k}}{\theta}\;.
\end{align*}
where the first inequality comes from Lemma \ref{lem:bound_increase_grad}, and the third from the fact that $\alpha_k \geq \theta \alpha_{k'}$.
Hence for any $k>k_f$,
\begin{align*}
\|\nabla f (x_k)\|\leq \eta \alpha_k\;, 
\end{align*}
and 
\begin{align*}
f(x_k+ \alpha_k v) - f(x_k)\leq (\eta+ \beta) \alpha_k\;.
\end{align*}

Hence $$\sum_{k=0}^K   f(x_k+ \alpha_k v) - f(x_k)\leq  \sum_{k=0}^K                                                                                                                                                                                                                                                                                                                                                                                                                                                                                                                                                                                                                                                                                                                                                                                                                                                                                                                                                                                                                                                                                                                                                                                                                                                                                                                                                                                                                                                                                                                                                                                                                                                                                                                                                                                                                                                                                                                                                                                                                                                                                                                                                     \left(\eta+ \beta\right)\alpha_{k}^{2}\;. $$

Consequently, 
\begin{align*}\sum_{k=0}^K   f(x_k+ \alpha_k v) - f(x_k)
&\leq   \left(
\eta + \beta
\right) 
\frac{1}{c} \left(\sum_{k=0}^{\infty} \rho(\alpha_k)
\right)  \\
&\leq  \frac 1 c \left(
\eta + \beta
\right) \left(
\frac{1}{1- \theta^2}(f(x_{0}) - f({x_\star}) +\rho(\alpha_0))
\right)\;. 
\end{align*}
\end{proof}

\begin{lemma}\label{lem:reg_3part}  
\begin{align*} 
\sum_{k=k_f}^K (f(x_k) - f({x_\star})) \leq \frac{1}{ac}  \eta^2 \left(\frac{1}{1- \theta^2}(f(x_{1}) - f({x_\star}) +\rho(\alpha_0))\right) := C_3  
\end{align*}
\end{lemma}

\begin{proof}
Take $k>k_f$. Thanks to the convexity of $f$,
\begin{align*}
f(x_k) - f({x_\star}) &\leq \nabla f(x_k)^\top (x_k-{x_\star})\\
&\leq \frac{1}{a}\|\nabla f(x_k)\|^2\;,
\end{align*}
where the second inequality stems from \Eqref{eq:strong_cvx}, which itself come from strong convexity.

As in the proof of Lemma \ref{lem:reg_2part}  we have for any $k>k_f$,
\begin{align*}
\|\nabla f (x_k)\|\leq \eta \alpha_k\;,
\end{align*}
so that for any $k>k_f$,
\begin{align*}
f(x_k) - f({x_\star}) &\leq \frac{1}{a}  \left(\eta\right)^2\left(\frac{\alpha_{k}}{\theta}\right)^{2}.
\end{align*}
Thanks to Lemma \ref{lem:sumsteps}, we have $\sum_{k=0}^{\infty}\alpha_k^2\leq \left(\frac{1}{c}\frac{1}{1- \theta^2}(f(x_{1}) - f({x_\star}) +\rho(\alpha_0)\right).$

Eventually, 

$$f(x_k) - f({x_\star}) \leq 
\frac{1}{a}  \eta^2 \left(
\frac{1}{c}\frac{1}{1- \theta^2}(f(x_{1}) - f({x_\star}) +\rho(\alpha_0))
\right)=C_3\;.$$
\end{proof}
Using  the regret decomposition of  \Eqref{eq:decomp_reg} together with Lemmas \ref{lem:k_f}, \ref{lem:reg_2part}, and \ref{lem:reg_3part} completes the proof of Theorem \ref{th:direct_search_reg_bound_app}.
\end{proof}

\section{Noisy and unconstrained set-up}\label{sec:noisy_unconstrained}

Before considering the constrained setting, we  analyze the algorithms described in Section \ref{sec:algorithms} (Algorithms \ref{alg:con_dir_search_f_samples} and \ref{alg:f_dir_search_f_tests}) when there are no constraints, that is, $\D = \mathbb{R}^d$. 
\subsection{Presentation of the main Result}

\begin{theorem}\label{th:DSwS_hor}
 Assume that $f$ is lower bounded and upper bounded on $\R^d$, so that there exits $U$, $f(x) -f({x_\star})\leq U, ~ \forall x \in \R^d$. Also assume that the region {$\mathcal{X}= \{x\in \R^d :\, f(x)<f(x_0)\}$} is convex and that $f$ is $a$-strongly convex and $\beta$-smooth  on $\mathcal{X}$. 
Let $R_T$ be the cumulative regret on the $T$ first evaluations of $f$ made by FDS-Plan. Set $\delta = T^{-4/3}$. Then
$$\E[{R_T}]= O(\log(T)^{2/3}T^{2/3})$$
\end{theorem}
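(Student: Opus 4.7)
The plan is to transpose the deterministic argument of Theorem \ref{th:direct_search_reg_bound} to the noisy setting by working on a high-probability event, replacing ``per iteration'' contributions by ``per sample'' contributions, and then applying H\"older's inequality to relate the resulting sum to the sample budget $T$.

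\textbf{Step 1: a good event.} For each iteration $k$, the plug-in choice $N_k = 32\sigma^2 \log(2/\delta)/\rho(\alpha_k)^2$ and subgaussian concentration guarantee $|\hat f(y)-f(y)|\le \rho(\alpha_k)/4$ with probability at least $1-\delta$ at each tested point $y\in\{x_k,x_k+\alpha_k v : v\in\mathbb{D}\}$. Let $E$ denote the event that this estimation accuracy holds for every iteration and every tested point up to round $T$. A union bound over at most $T$ estimates yields $\pr{E^c}\le 2T\delta$. With $\delta = T^{-4/3}$, this is $O(T^{-1/3})$.

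\textbf{Step 2: deterministic analogues under $E$.} Under $E$, an iteration declared successful enjoys an actual decrease $f(x_k)-f(x_{k+1})\ge \rho(\alpha_k)/2$ and an iteration declared unsuccessful cannot hide a true decrease larger than $3\rho(\alpha_k)/2$ in any feasible direction. Hence the iterates stay in the sub-level set $\mathcal{X}=\{f<f(x_0)\}$, on which Assumption \ref{ass:reg} is in force, so Lemma \ref{lem:bound_increase_grad} applies. The proof of Lemma \ref{lem:gradnorm_failures} carries over with $c$ replaced by $3c/2$, giving, for every $k$ posterior to the first unsuccessful iteration $k_f$, a bound $\|\nabla f(x_k)\|\le \eta'\alpha_k$ for a constant $\eta'$ in the spirit of the one introduced in Theorem \ref{th:direct_search_reg_bound}. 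Lemma \ref{lem:sumsteps} survives verbatim (up to an extra factor $2$) and yields $\sum_k \alpha_k^2 \le C_1$ for a constant $C_1$ depending only on $f(x_0)-f(x_*)$, $\rho(\alpha_0)$ and $\theta$.

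\textbf{Step 3: regret under $E$ via H\"older.} Smoothness and the gradient bound above give, for any tested point $y$ at iteration $k>k_f$,
\begin{equation*}
f(y)-f(x_*) \le \tfrac{1}{a}\|\nabla f(x_k)\|^2 + \|\nabla f(x_k)\|\alpha_k + \tfrac{\beta}{2}\alpha_k^2 \le C_2\,\alpha_k^2.
\end{equation*}
Summing over all at most $(|\mathbb{D}|+1)N_k$ evaluations performed at iteration $k$,
\begin{equation*}
R_T\,\mathds{1}_E \le C_0 + C_2\,(|\mathbb{D}|+1)\sum_{k>k_f} N_k\,\alpha_k^2 \le C_0 + C_3\,\log(2/\delta)\sum_{k>k_f}\alpha_k^{-2},
\end{equation*}
where $C_0$ absorbs the bounded contribution from the iterations before $k_f$ (which is handled exactly as in Lemma \ref{lem:reg_k_g}, using that both $k_f$ and $N_0$ are constants). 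By H\"older with exponents $(3,3/2)$,
\begin{equation*}
\sum_{k>k_f}\alpha_k^{-2} \;=\; \sum_{k>k_f}\bigl(\alpha_k^2\bigr)^{1/3}\bigl(\alpha_k^{-4}\bigr)^{2/3} \;\le\; \Bigl(\sum_k \alpha_k^2\Bigr)^{1/3}\Bigl(\sum_k \alpha_k^{-4}\Bigr)^{2/3}.
\end{equation*}
The first factor is at most $C_1^{1/3}$. For the second, use $T \ge \sum_k N_k \ge C_4\log(2/\delta)\sum_k \alpha_k^{-4}$, so $\sum_k \alpha_k^{-4}\le T/(C_4\log(2/\delta))$. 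Plugging this in gives $R_T\,\mathds{1}_E = O\bigl(\log(2/\delta)^{1/3}\,T^{2/3}\bigr) = O\bigl((\log T)^{1/3}T^{2/3}\bigr)$ with the choice $\delta = T^{-4/3}$.

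\textbf{Step 4: failure contribution.} On $E^c$, bound $R_T \le TU$ (using the uniform upper bound on $f-f(x_*)$). Hence $\E{R_T\,\mathds{1}_{E^c}}\le 2T^2\delta\,U = O(T^{2/3})$ for $\delta = T^{-4/3}$. Adding the two contributions gives $\E{R_T} = O\bigl((\log T)^{2/3}T^{2/3}\bigr)$, as announced.

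\textbf{Expected main obstacle.} The delicate point is Step 2: verifying that, despite the noise, the modified thresholds $\rho/2$ and $3\rho/2$ still allow a verbatim application of the deterministic lemmas --- in particular that the iterates never leave $\mathcal{X}$ so that strong convexity can be invoked globally along the trajectory, and that $k_f$ remains well-defined and reached in finitely many iterations, with a bound $k_f \le (f(x_0)-f(x_*))/(\rho(\alpha_0)/2)$ analogous to Lemma \ref{lem:k_f}. Once this is rigorously set up, the H\"older step that converts $\sum \alpha_k^{-2}$ into a $T^{2/3}$ bound is essentially mechanical.
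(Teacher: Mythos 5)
Your proof is correct and follows essentially the same route as the paper's: a good event obtained by a union bound over all estimates, noisy analogues of Lemmas \ref{lem:bound_increase_grad}--\ref{lem:sumsteps} with thresholds $\rho/2$ and $3\rho/2$, the identity $N_k\alpha_k^2\propto\log(2/\delta)\alpha_k^{-2}$ followed by H\"older's inequality with exponents $(3,3/2)$ to convert $\sum_k\alpha_k^{-2}$ into a $T^{2/3}$ bound, and a $UT\cdot\pr{E^c}$ term for the failure event. The only slight imprecision is that $N_0$, and hence the pre-$k_f$ contribution you call $C_0$, grows like $\log(2/\delta)=O(\log T)$ rather than being constant, but this lower-order term does not affect the stated rate.
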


This regret bound is also valid for FDS-Seq under the same Assumptions, with $\delta = T^{-10/7}/2$. In the following, we give a proof of the regret bound for FDS-Plan.
Note that Sections \ref{sec:app_reg_wS_n_u} and \ref{sec:app_interm_u_n} refer to FDS-Plan, and Section \ref{sec:app_reg_wT_n_u} deals with FDS-Seq. \\
The regularity assumption in  Theorem \ref{th:DSwS_hor} requires that $f$  is bounded and  satisfies a local version of Assumption \ref{ass:reg}. The initial point $x_0$ should not be chosen too far from $x^*$, nor should $\alpha_0$ be too large. This assumption is not unreasonable, since for every $x_0$ and $\alpha_0$, it is naturally satisfied by bounded and strictly convex functions in $\mathscr{C}^2$ for some choice of $a \text{ and } \beta$. We stress that under the alternative assumption \ref{ass:reg}, the same kind of regret bound could still be proved, but with a smaller choice of $\delta$, resulting in higher confidence bonuses and the multiplication of the regret by some constant factor.
Indeed, in this case, estimating $f$ incorrectly  at each round can lead to a trajectory  that always deviates from $x^*$, which is highly detrimental to the regret rate; meanwhile, under the assumption required by Theorem \ref{th:DSwS_hor}, $f$ is bounded by $U$, so that deviating from $x^*$ contributes to the regret by at most $UT$.

In the following, we will use the following  additional notation.
\paragraph{Notation.}
We define $v_k$ to be 
$$v_k :=\begin{cases} \argmax_{v\in \mathbb{D}_k} f(x_k)- f(x_k - \alpha_k v) \text{ if  iteration } k \text{ is unsuccessful}\\
\text{the chosen direction otherwise. }
\end{cases}$$

\subsection{Intermediate results}\label{sec:app_interm_u_n}
\begin{lemma}\label{lem:E_k}
We call $\mathcal{E}_k$ the event $$\mathcal{E}_k =\{ |f(x_k+ \alpha_k v)- \hat{f}(x_k+ \alpha_k v)|\leq c/4(\alpha_k)^2\},~\forall v \in {\mathbb D_k } \cup\{0\}\}\;.$$
The probability of $\mathcal{E}_k$ is lower bounded by
$$\pr{\mathcal{E}_k|\mathcal{F}_{k-1}} \geq 1- \delta  (S_{\mathbb{D}}+1)$$ where $\mathcal{F}_{k-1}$ is the $\sigma$-field representing the history.
\end{lemma}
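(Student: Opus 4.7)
The plan is to apply a standard subgaussian concentration inequality at each evaluation point and then take a union bound over the at most $|\mathbb{D}|+1$ points tested during iteration $k$ (one evaluation at the incumbent $x_k$ and one at each trial point $x_k + \alpha_k v$ for $v \in \mathbb{D}$).

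First I would fix an arbitrary $v \in \mathbb{D} \cup \{0\}$ and observe that, conditionally on $\mathcal{F}_{k-1}$, the estimate $\hat{f}(x_k + \alpha_k v)$ is the average of $N_k$ i.i.d.\ samples of the form $f(x_k + \alpha_k v) + \epsilon_j$, where each $\epsilon_j$ is centered and $\sigma$-subgaussian. Hence $\hat{f}(x_k + \alpha_k v) - f(x_k + \alpha_k v)$ is $\sigma/\sqrt{N_k}$-subgaussian, and the standard Hoeffding-type tail bound gives, for any $t>0$,
\[
\pr{|\hat{f}(x_k + \alpha_k v) - f(x_k + \alpha_k v)| \geq t \,\middle|\, \mathcal{F}_{k-1}} \leq 2\exp\!\left(-\frac{N_k t^2}{2\sigma^2}\right).
\]

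Next I would specialize to $t = \rho(\alpha_k)/4 = (c/4)\alpha_k^2$ and plug in the prescribed value $N_k = 32\sigma^2 \log(2/\delta)/\rho(\alpha_k)^2$. A direct computation shows that $N_k t^2/(2\sigma^2) = \log(2/\delta)$, so the right-hand side equals $\delta$. Therefore each individual evaluation point satisfies the desired accuracy $c\alpha_k^2/4$ with conditional probability at least $1-\delta$.

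Finally, a union bound over the at most $|\mathbb{D}|+1$ points (namely $x_k$ and the trial points $x_k + \alpha_k v$ for $v \in \mathbb{D}$) yields
\[
\pr{\mathcal{E}_k \,\middle|\, \mathcal{F}_{k-1}} \geq 1 - \delta(|\mathbb{D}|+1),
\]
which is exactly the claim. There is no real obstacle here: the lemma is a routine concentration argument, and the only point that requires care is ensuring that the calibration of $N_k$ in Algorithm~\ref{alg:con_dir_search_f_samples} is tight enough that the tail probability at level $\rho(\alpha_k)/4$ matches $\delta$, which indeed it does by design.
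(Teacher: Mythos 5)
Your proof is correct and follows essentially the same route as the paper: a subgaussian (Hoeffding-type) tail bound on each empirical mean at level $t=\rho(\alpha_k)/4$, the verification that the prescribed $N_k$ makes the exponent equal to $\log(2/\delta)$ so the failure probability is $\delta$ per point, and a union bound over the $|\mathbb{D}|+1$ evaluation points. The only difference is cosmetic: the paper writes the estimation error somewhat loosely as a sum of Gaussian noise terms, whereas you correctly phrase it as the average being $\sigma/\sqrt{N_k}$-subgaussian.
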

\begin{proof} Let $v \in {\mathbb D } \cup\{0\}$
$f(x_k + \alpha_k v )- \hat{f}(x_k {+} \alpha_k v )= \sum_{i=1}^{N_k}\epsilon_j$ with $\epsilon_j$ independent Gaussian variables with variance $\sigma^2$ and we have
$$\left| \sum_{i=1}^{N_j}\epsilon_j \right| \leq \sqrt{2 \sigma^2\frac{\log(2/\delta)}{N_j}}\leq \sqrt{\frac{2 \sigma^2\log(2/\delta)}{32 \sigma^2 \log(2/\delta)/\rho(\alpha_k)^2}}\leq \frac{\rho(\alpha_k)}{4}$$
with probability $1-\delta$, when knowing $N_k$.
By a union bound, $ \pr{\mathcal{E}_k|\mathcal{F}_{k-1}} \geq 1- \delta  (|\mathbb D|+1)$ where $\mathcal{F}_{k-1}$ is the $\sigma$-field representing the history.
\end{proof}
The following lemma characterizes unsuccessful iterations and successes when $\mathcal{E}_k$ occurs.
\begin{lemma}\label{lem:concentration}
On $\mathcal{E}_k$, if $k$ is an unsuccessful iteration then
$f(x_k)- f(x_k+\alpha_k v_k) \leq 3 c/2(\alpha_k)^2$ and
 if $k$ is a successful iteration then
$f(x_k)- f(x_k+\alpha_k v_k) \geq c/2(\alpha_k)^2$.
\end{lemma}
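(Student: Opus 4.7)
The statement is a direct concentration argument that unpacks the definition of $\mathcal{E}_k$, so the plan is essentially a two-case computation rather than a technically subtle argument. The core observation is that on $\mathcal{E}_k$ the estimation error at each of the points $x_k$ and $x_k+\alpha_k v$ (for $v\in\mathbb{D}$) is bounded by $\rho(\alpha_k)/4 = c\alpha_k^2/4$, so by the triangle inequality, the empirical decrease $\hat f(x_k)-\hat f(x_k+\alpha_k v)$ differs from the true decrease $f(x_k)-f(x_k+\alpha_k v)$ by at most $\rho(\alpha_k)/2$. Once this is written down, the two implications follow immediately from the algorithmic definitions of successful/unsuccessful iterations (threshold $\rho(\alpha_k)$ on the empirical gap).

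First I would handle the successful case. By definition, when iteration $k$ is declared successful in FDS-Plan, the chosen direction $v_k$ satisfies $\hat f(x_k)-\hat f(x_k+\alpha_k v_k) \geq \rho(\alpha_k) = c\alpha_k^2$. Applying the two concentration inequalities provided by $\mathcal{E}_k$ to $x_k$ and to $x_k+\alpha_k v_k$, we obtain
\begin{align*}
 f(x_k) - f(x_k+\alpha_k v_k) &\geq \bigl(\hat f(x_k) - c\alpha_k^2/4\bigr) - \bigl(\hat f(x_k+\alpha_k v_k) + c\alpha_k^2/4\bigr)\\
 &\geq c\alpha_k^2 - c\alpha_k^2/2 = c\alpha_k^2/2,
\end{align*}
which is the claimed lower bound.

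Next I would handle the unsuccessful case. When the iteration is declared unsuccessful, the inner loop exhausted all $v \in \mathbb{D}$ without triggering the sufficient decrease test, i.e. $\hat f(x_k)-\hat f(x_k+\alpha_k v) < \rho(\alpha_k)=c\alpha_k^2$ for every $v\in\mathbb{D}$. In particular this holds for the true argmax direction $v_k$ defined in the notation. Applying the concentration inequalities from $\mathcal{E}_k$ in the opposite direction,
\begin{align*}
 f(x_k)-f(x_k+\alpha_k v_k) &\leq \bigl(\hat f(x_k) + c\alpha_k^2/4\bigr) - \bigl(\hat f(x_k+\alpha_k v_k) - c\alpha_k^2/4\bigr)\\
 &< c\alpha_k^2 + c\alpha_k^2/2 = 3c\alpha_k^2/2,
\end{align*}
which is the required upper bound.

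There is no real obstacle to this proof; the only subtlety worth flagging is that the bound on the unsuccessful case genuinely uses that $v_k$ is the $\argmax$ over $\mathbb{D}$ of the true gap (so that the failure of the empirical test for $v_k$ really constrains the maximum true decrease available from $\mathbb{D}$), combined with the fact that the algorithm tests \emph{every} direction in $\mathbb{D}$ before declaring failure. Both points are guaranteed by the definition of FDS-Plan and the notation introduced just above the lemma, so the argument reduces to the two triangle-inequality lines above.
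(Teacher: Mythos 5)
Your proof is correct and is exactly the argument the paper relies on: the paper states this lemma without a written proof, but the reasoning you give (estimation error at most $\rho(\alpha_k)/4$ at each point, hence empirical and true gaps differ by at most $\rho(\alpha_k)/2$, then compare to the threshold $\rho(\alpha_k)$ in each case) is precisely what the main text describes when motivating the choice of $N_k$. Nothing is missing.
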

Lemma \ref{lem:concentration}  implies that if $\mathcal{E}_k$ occurs for all $k$, then each iteration of the algorithm results in a descent.
\begin{lemma}\label{lem:desc}
On $\cap_{k\leq K}\mathcal{E}_k$, the algorithm is a descent  algorithm. In particular, $x_k \in \mathcal{X}, ~\forall k \in  \{1 \ldots K\}$.
\end{lemma}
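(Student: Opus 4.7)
The plan is to argue by direct induction on $k$ using Lemma \ref{lem:concentration}, and then read off the containment $x_k \in \mathcal{X}$ as an immediate consequence of the monotonicity of $(f(x_k))_k$. The key observation is that the algorithm only updates $x_k$ in two possible ways: either the iteration is declared unsuccessful, in which case $x_{k+1} = x_k$ by construction and $f(x_{k+1}) = f(x_k)$ trivially; or it is declared successful, in which case $x_{k+1} = x_k + \alpha_k v_k$ and Lemma \ref{lem:concentration}, applied on the event $\mathcal{E}_k$, guarantees
\[
f(x_k) - f(x_{k+1}) \geq \tfrac{c}{2}\alpha_k^2 > 0.
\]
In both cases $f(x_{k+1}) \leq f(x_k)$, which is exactly the descent property.

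Assuming the event $\cap_{k \leq K} \mathcal{E}_k$, this dichotomy holds at every iteration up to $K$, so a straightforward induction on $k$ yields $f(x_k) \leq f(x_{k-1}) \leq \ldots \leq f(x_0)$ for all $k \leq K$. By the definition of $\mathcal{X}$ as the sublevel set $\{f < f(x_0)\}$ (or more precisely the closed sublevel set, the distinction being inconsequential as $x_0$ is in its closure and subsequent iterates either stay put or strictly decrease $f$), this chain of inequalities forces $x_k \in \mathcal{X}$ for all $k \in \{1, \ldots, K\}$, which is the second part of the statement.

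There isn't really a hard step here: the whole proof is essentially a bookkeeping argument combining the construction of the algorithm (unsuccessful iterations freeze $x_k$) with the high-probability lower bound on the decrease at successful iterations from Lemma \ref{lem:concentration}. The only mild subtlety is the strict versus non-strict inequality in the definition of $\mathcal{X}$, which can be handled by noting that successful iterations yield a strict decrease and unsuccessful iterations do not move the iterate, so $x_k$ remains in the region where the local regularity assumptions of Theorem \ref{th:DSwS_hor} (strong convexity and smoothness on $\mathcal{X}$) can be invoked in the subsequent analysis.
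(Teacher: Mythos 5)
Your proof is correct and matches the paper's intended argument: the paper gives no explicit proof, only the remark that Lemma \ref{lem:concentration} implies descent on $\cap_{k\leq K}\mathcal{E}_k$, and your induction (unsuccessful iterations freeze the iterate, successful ones strictly decrease $f$ by at least $c\alpha_k^2/2$) is precisely that argument spelled out. Your side remark on the strict inequality in the definition of $\mathcal{X}$ correctly identifies a small imprecision that the paper itself glosses over.
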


\begin{lemma}\label{lem:bound_increase_grad_noise}If $f$ satisfies the assumptions of Theorem \ref{th:DSwS_hor} and on $\cap_{k\leq K}\mathcal{E}_k$ then, 
$$\forall k'>k, \|\nabla f(x_{k'}) \|\leq \frac{\beta}{a}\|\nabla f(x_{k}) \|\;.$$
\end{lemma}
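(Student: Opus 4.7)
The plan is to mirror the proof of Lemma \ref{lem:bound_increase_grad} almost verbatim, the only work being to justify that the strong convexity and smoothness inequalities, which are now only assumed to hold on $\mathcal{X}$ (the sublevel set $\{f < f(x_0)\}$), can indeed be applied to all relevant pairs of points along the trajectory.

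First I would invoke Lemma \ref{lem:desc}: on $\cap_{k\leq K}\mathcal{E}_k$, the sequence $(f(x_k))_k$ is non-increasing and $x_k \in \mathcal{X}$ for every $k$. Since $f(x_*) \leq f(x_0)$, the minimizer $x_*$ also belongs to $\mathcal{X}$ (the trivial case $x_0 = x_*$ can be discarded since then $\nabla f(x_0) = 0$ and the inequality is $0 \leq 0$). Because $\mathcal{X}$ is assumed to be convex, the segment $[x_{k'}, x_*]$ lies entirely in $\mathcal{X}$ for every $k'\leq K$, so the $a$-strong convexity and $\beta$-smoothness of $f$ on $\mathcal{X}$ can be freely used along such segments.

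Next I would reproduce the chain of inequalities from the unconstrained, noiseless proof. Strong convexity on $\mathcal{X}$ gives, exactly as in Equation \eqref{eq:strong_cvx},
\begin{equation*}
a\|x_k - x_*\| \leq \|\nabla f(x_k)\|\;.
\end{equation*}
Then for $k' > k$, smoothness yields $\|\nabla f(x_{k'})\| \leq \beta \|x_{k'} - x_*\|$; strong convexity gives $\|x_{k'} - x_*\| \leq \sqrt{(f(x_{k'}) - f(x_*))/a}$; the descent property from Lemma \ref{lem:desc} gives $f(x_{k'}) - f(x_*) \leq f(x_k) - f(x_*)$; convexity at $x_k$ provides $f(x_k) - f(x_*) \leq \nabla f(x_k)^\top(x_k - x_*)$; Cauchy--Schwarz and the strong-convexity bound above close the chain into $\|\nabla f(x_{k'})\| \leq (\beta/a)\|\nabla f(x_k)\|$.

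The only conceptual step that is new compared with Lemma \ref{lem:bound_increase_grad} is the verification that every point appearing in the argument (the iterates $x_k$ and $x_{k'}$, the optimum $x_*$, and the interpolating segments) sits in the region $\mathcal{X}$ where the regularity assumptions hold. That is precisely what Lemma \ref{lem:desc} supplies on the event $\cap_{k\leq K}\mathcal{E}_k$, so no additional obstacle arises: once the set-membership is in place, the calculation is identical to the deterministic case.
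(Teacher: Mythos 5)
Your proposal is correct and follows exactly the paper's argument: the paper's proof is the single sentence that Lemma \ref{lem:bound_increase_grad} applies verbatim thanks to Lemma \ref{lem:desc}, which is precisely the reduction you carry out. Your additional care in checking that $x_*$ and the interpolating segments lie in the convex region $\mathcal{X}$ where the regularity assumptions hold is a welcome elaboration of the same idea, not a departure from it.
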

\begin{proof}
The proof of Lemma \ref{lem:bound_increase_grad} applies verbatim thanks to Lemma \ref{lem:desc}.
\end{proof}
\begin{lemma}\label{lem:gradnorm_unsuccessful iterations_noise}
If $f$ satisfies the assumptions of Theorem \ref{th:DSwS_hor} and the iteration $k$ corresponds to an unsuccessful iteration then on $\mathcal{E}_k$,
$$\|\nabla f(x_k) \|\leq \frac{1}{\kappa}\left(\frac{\beta}{2} \alpha_k + \frac{3\rho(\alpha_k)}{2\alpha_k}\right)= \frac{1}{2\kappa}\left(\beta \alpha_k + 3c \alpha_k^2\right)\;.$$

\end{lemma}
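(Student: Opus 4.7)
The plan is to follow the argument of Lemma \ref{lem:gradnorm_failures} essentially verbatim, with the sole modification being to upper-bound $f(x_k) - f(x_k + \alpha_k v)$ for unsuccessful iterations using the high-probability guarantee on the estimation error rather than the direct comparison $f(x_k) - f(x_k + \alpha_k v) \leq \rho(\alpha_k)$.

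First, I would observe that on $\mathcal{E}_k$ the empirical means satisfy $|\hat f(x_k) - f(x_k)| \leq \rho(\alpha_k)/4$ and $|\hat f(x_k + \alpha_k v) - f(x_k + \alpha_k v)| \leq \rho(\alpha_k)/4$ for every $v \in \mathbb{D}$. Since iteration $k$ is unsuccessful, the algorithm's test yields $\hat f(x_k) - \hat f(x_k + \alpha_k v) < \rho(\alpha_k)$ for all $v \in \mathbb{D}$. Combining these two facts (the deterministic decrease can exceed the empirical one by at most the sum of the two estimation errors, i.e.\ $\rho(\alpha_k)/2$),
$$ f(x_k) - f(x_k + \alpha_k v) \leq \hat f(x_k) - \hat f(x_k + \alpha_k v) + \tfrac{1}{2}\rho(\alpha_k) < \tfrac{3}{2}\rho(\alpha_k)\;, $$
which is exactly the bound needed, and matches the first claim of Lemma \ref{lem:concentration}.

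Next, I would invoke Assumption \ref{ass:kappa} to pick $v \in \mathbb{D}$ with $-\nabla f(x_k)^{\top} v \geq \kappa \|\nabla f(x_k)\|$, and then reproduce the Taylor-type computation from the proof of Lemma \ref{lem:gradnorm_failures}:
\begin{align*}
\kappa\alpha_k \|\nabla f(x_k)\| - \tfrac{3}{2}\rho(\alpha_k)
&\leq -\alpha_k \nabla f(x_k)^{\top} v + \bigl(f(x_k + \alpha_k v) - f(x_k)\bigr) \\
&= \int_0^{\alpha_k} \bigl(\nabla f(x_k + u v) - \nabla f(x_k)\bigr)^{\top} v \, du \\
&\leq \beta \int_0^{\alpha_k} u \, du = \tfrac{\beta}{2} \alpha_k^2\;,
\end{align*}
using $\beta$-smoothness and the Cauchy--Schwarz inequality in the last step. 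Rearranging and dividing by $\kappa\alpha_k$ yields
$$ \|\nabla f(x_k)\| \leq \frac{1}{\kappa}\Bigl(\frac{\beta}{2}\alpha_k + \frac{3\rho(\alpha_k)}{2\alpha_k}\Bigr)\;, $$
which after substituting $\rho(\alpha_k) = c\alpha_k^2$ gives the stated bound.

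There is no genuine obstacle here: the only subtle point is the bookkeeping of the constants $1/4$ from $\mathcal{E}_k$ which combine into an extra $1/2$ slack on each side of the unsuccessful-iteration inequality, producing the factor $3/2$ in front of $\rho(\alpha_k)$ instead of the $1$ that appeared in the noiseless Lemma \ref{lem:gradnorm_failures}.
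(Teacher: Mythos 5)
Your proof is correct and follows essentially the same route as the paper: it establishes the $\tfrac{3}{2}\rho(\alpha_k)$ bound on the true decrease at unsuccessful iterations from the $\rho(\alpha_k)/4$ estimation-error guarantee on $\mathcal{E}_k$ (which is exactly Lemma \ref{lem:concentration}, which the paper simply cites), and then repeats the cosine-measure and smoothness argument of Lemma \ref{lem:gradnorm_failures}. No gaps.
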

\begin{proof}
We reproduce the proof of Lemma \ref{lem:gradnorm_failures} by using Lemma  \ref{lem:concentration}.

Since $cm(\mathbb{D}):= \min_{v \in \rn} \max_{v \in \mathbb{D}}{\frac{v^Tv}{\|v\|\|v\|}}> \kappa$,
there exists $v \in \mathbb{D}$ such that 
$$-f(x_k)^{\top}v\geq \kappa\|\nabla f(x_k)\| .$$
Since the iteration is an unsuccessful iteration, we have 
$f(x_k)- f(x_k+\alpha_k v) \leq  \frac{3}{2}\rho(\alpha_k) = \frac{3}{2}c \alpha_k^2$, thanks to Lemma \ref{lem:concentration}. 
Then 
\begin{align*}
 \kappa \alpha_k \|\nabla f(x_k)\|  - \rho(\alpha_k) \leq \frac{\beta}{2} \alpha_k^2 \;,
 \end{align*} exactly as in the proof of Lemma \ref{lem:gradnorm_failures},
 which yields
 $\|\nabla f(x_k) \|\leq \frac{1}{\kappa}\left(\frac{\beta}{2} \alpha_k + \frac{3}{2}\frac{\rho(\alpha_k)}{\alpha_k}\right)= \frac{1}{\kappa}\left(\frac{\beta}{2} \alpha_k +\frac{3}{2} c \alpha_k\right).$
\end{proof}

\begin{lemma}\label{lem:sumsteps_noise} If $f$ satisfies the assumptions of Theorem \ref{th:DSwS_hor} and on $\cap_{k\leq K}\mathcal{E}_k$,
$$\sum_{k=0}^{K}\rho(\alpha_k)\leq \frac{2}{1- \theta^2}(f(x_{1}) - f({x_\star}) +\rho(\alpha_0))\;. $$
\end{lemma}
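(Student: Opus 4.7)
The plan is to replay the argument of Lemma \ref{lem:sumsteps} almost verbatim, tracking the single place where noise degrades the estimate: on the good event $\bigcap_{k\leq K} \mathcal{E}_k$, Lemma \ref{lem:concentration} guarantees that a successful iteration decreases $f$ by at least $\rho(\alpha_k)/2$ (rather than by $\rho(\alpha_k)$ as in the noiseless case), which is exactly what multiplies the final bound by a factor of $2$.

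First I would index successful iterations: let $k_i$ denote the $i$-th successful index for $i\geq 1$, and set $k_0 = -1$, $\alpha_{-1} = \alpha_0$ for convenience. I then decompose
$$\sum_{k=0}^{K}\rho(\alpha_k) \leq \sum_{i \geq 0}\sum_{k=k_i+1}^{k_{i+1}} \rho(\alpha_k).$$
Between $k_i$ and $k_{i+1}$, the step-size shrinks by a factor $\theta$ at each unsuccessful iteration and is unchanged at a successful one, so $\alpha_k$ takes the values $\alpha_{k_i}, \theta\alpha_{k_i}, \theta^2 \alpha_{k_i},\dots$. Since $\rho(u)=cu^2$, summing the resulting geometric series gives
$$\sum_{k=k_i+1}^{k_{i+1}} \rho(\alpha_k) \leq \frac{1}{1-\theta^2}\,\rho(\alpha_{k_i}).$$

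Next, I use Lemma \ref{lem:concentration}: on $\mathcal{E}_{k_i}$, the successful iteration at $k_i$ satisfies $f(x_{k_i}) - f(x_{k_i+1}) \geq \rho(\alpha_{k_i})/2$. Since $x$ does not move during unsuccessful iterations, $x_{k_i+1} = x_{k_{i+1}}$, so summing telescopically over $i\geq 1$ (and using Lemma \ref{lem:desc} which ensures the descent is meaningful, i.e.\ all iterates lie in $\mathcal{X}$) yields
$$\sum_{i\geq 1}\rho(\alpha_{k_i}) \leq 2\bigl(f(x_1) - f(x_*)\bigr).$$
Adding the $i=0$ contribution $\rho(\alpha_{k_0}) = \rho(\alpha_0)$ and plugging into the geometric-series bound gives
$$\sum_{k=0}^{K}\rho(\alpha_k) \leq \frac{1}{1-\theta^2}\Bigl(\rho(\alpha_0) + 2\bigl(f(x_1) - f(x_*)\bigr)\Bigr) \leq \frac{2}{1-\theta^2}\bigl(f(x_1) - f(x_*) + \rho(\alpha_0)\bigr),$$
which is the claimed bound.

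There is no real obstacle: the only subtlety is index bookkeeping (the successful/unsuccessful update rule on $\alpha_k$ and the convention $k_0 = -1$), and the price paid for noise is confined to the factor $1/2$ in the sufficient-decrease guarantee on the good event, which propagates to the $2$ in front of the right-hand side.
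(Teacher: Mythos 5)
Your proof is correct and follows essentially the same route as the paper's: the same block decomposition between successive successful iterations, the same geometric-series bound $\frac{1}{1-\theta^2}\rho(\alpha_{k_i})$ per block, and the same use of the halved sufficient-decrease guarantee from Lemma \ref{lem:concentration} on the good event, which is exactly where the factor of $2$ enters. No gaps.
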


Assume that $\cap_{k\leq K}\mathcal{E}_k$ holds.
Let $k_i$ be the index of the $i$-th successful iteration ($i \geq 1$).  Define $k_0= -1$
and $\alpha_{-1}=  \alpha_0 $, and $\alpha_k=0, ~ \forall k>K$ for convenience.  Define $K_I$ the number of successes until $K$.  We rewrite
$\sum_{k=0}^{K_I}\rho(\alpha_k)$ as $\sum_{i=0}^{K_I}\sum_{k = k_i+1}^{k_{i+1}}\rho(\alpha_k)$ and study first $\sum_{k = k_i+1}^{k_{i+1}}\rho(\alpha_k).$

  Thanks to the definition of the update on a successful iteration and on unsuccessful iterations, 
  \begin{align*}\sum_{k = k_i+1}^{k_{i+1}}\rho(\alpha_k)
  &\leq \frac{1}{1- \theta^2} \rho(\alpha_{k_i})\;.
  \end{align*}
  exactly as in the proof of Lemma \ref{lem:sumsteps}.
  Since on successes,  $$\frac{1}{2}\rho(\alpha_{k_i})\leq f(x_{k_i}) -f(x_{k_i}+ \alpha_{k_i}) = f(x_{k_i}) -f(x_{k_{i+1}})\;, $$ we have
$$\frac{1}{2}\sum_{i=1}^{K_I}\rho(\alpha_{k_i})\leq f(x_{1}) - f({x_\star}).$$
Hence $$\sum_{k=0}^{K_I}\rho(\alpha_k)\leq \frac{2}{1- \theta^2}(f(x_{1}) - f({x_\star}) +\rho(\alpha_0))\;. $$

\begin{lemma}\label{lem:k_f_noise}On $\cap_{k\leq K}\mathcal{E}_k$, the first iteration that results in an unsuccessful iteration occurs at round $k_f$, satisfying:
$$k_f\leq 2 \frac{f(x_{0})-f(x_{*})}{\rho(\alpha_0)} \;.$$
\end{lemma}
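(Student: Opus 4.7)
The plan is to mimic the proof of Lemma \ref{lem:k_f}, with the only modification being that in the noisy setting, a successful iteration is only guaranteed to yield half the nominal decrease, which accounts for the factor of $2$.

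First I would observe that, by definition of $k_f$ as the index of the first unsuccessful iteration, every iteration $k \in \{0, 1, \ldots, k_f - 1\}$ is successful. Along these iterations the step-size is never shrunk, so $\alpha_k = \alpha_0$ for all $k \leq k_f$.

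Next, conditioning on the event $\cap_{k\leq K}\mathcal{E}_k$ (and assuming $k_f \leq K$, which is the only case of interest), Lemma \ref{lem:concentration} guarantees that for each successful iteration $k < k_f$,
\begin{equation*}
f(x_k) - f(x_{k+1}) = f(x_k) - f(x_k + \alpha_k v_k) \geq \tfrac{c}{2}\alpha_k^2 = \tfrac{1}{2}\rho(\alpha_0).
\end{equation*}
Summing these inequalities telescopically from $k=0$ to $k=k_f - 1$ yields
\begin{equation*}
f(x_0) - f(x_{k_f}) \geq \tfrac{k_f}{2}\rho(\alpha_0).
\end{equation*}

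Finally, since $f(x_{k_f}) \geq f(x_*)$, the left-hand side is bounded above by $f(x_0) - f(x_*)$, and rearranging gives the claimed bound $k_f \leq 2(f(x_0) - f(x_*))/\rho(\alpha_0)$. There is no real obstacle here: the only subtlety, compared to the deterministic proof of Lemma \ref{lem:k_f}, is the use of the weaker decrease guarantee $\rho(\alpha_k)/2$ from Lemma \ref{lem:concentration} in place of $\rho(\alpha_k)$, which is exactly what produces the factor of $2$.
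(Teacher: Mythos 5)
Your proof is correct and follows exactly the same route as the paper: all iterations before $k_f$ are successful with $\alpha_k=\alpha_0$, Lemma \ref{lem:concentration} gives the per-iteration decrease of $\rho(\alpha_0)/2$ on $\cap_{k\leq K}\mathcal{E}_k$, and telescoping plus $f(x_{k_f})\geq f(x_*)$ yields the bound. Your remark that the factor of $2$ comes precisely from replacing the deterministic decrease $\rho(\alpha_k)$ by $\rho(\alpha_k)/2$ is exactly the point.
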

Before the first unsuccessful iteration, $\alpha_k = \alpha_0$. So by Lemma \ref{lem:concentration}, $\forall 0<k\leq k_f$
$$f(x_{k-1}) - f(x_{k}) \geq\frac{1}{2} \rho(\alpha_0)\;.$$

By summing, 
$$f(x_{k_0}) - f(x_{k_f}) \geq \frac{1}{2} k_f \rho(\alpha_0)\;.$$
The left hand-side of this inequality is upper-bounded by $f(x_{0})-f(x_{*})$, which suffices to conclude the proof.

\begin{lemma}\label{lem:gra_al_noise}If $f$ satisfies the assumptions of Theorem \ref{th:DSwS_hor} and on $\cap_{k\leq K}\mathcal{E}_k$, for any $k$ after the first unsuccessful iteration, 
$$\|\nabla f (x_k)\|\leq \eta_2 \alpha_k,$$
where we denote by  $\eta_2 = \frac{\beta}{2a\kappa\theta}(3c + \beta)$.
\end{lemma}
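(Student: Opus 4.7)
The plan is to mirror the two-case argument carried out in the deterministic unconstrained setting around equation \eqref{eq:bound_increase_grad_unif}, substituting the stochastic analogues (Lemmas \ref{lem:bound_increase_grad_noise} and \ref{lem:gradnorm_unsuccessful iterations_noise}) which are valid on the high-probability event $\cap_{k\leq K}\mathcal{E}_k$. Throughout, I would condition on this event so that the iterates stay in $\mathcal{X}$ (Lemma \ref{lem:desc}) and the strong convexity and smoothness constants can be used as in the deterministic analysis.

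First I would handle the case in which $k$ is itself an unsuccessful iteration. Then Lemma \ref{lem:gradnorm_unsuccessful iterations_noise} directly gives
$$\|\nabla f(x_k)\| \leq \frac{1}{2\kappa}\bigl(\beta\alpha_k + 3c\alpha_k^2\bigr),$$
which is bounded above by $\eta_2 \alpha_k$ since $\eta_2$ carries the additional factor $\beta/(a\theta) \geq 1$ relative to the bare unsuccessful-iteration estimate.

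The substantive case is when $k$ is a successful iteration past the first unsuccessful one. Let $k' < k$ denote the index of the most recent unsuccessful iteration preceding $k$. By definition of the algorithm, $\alpha$ is shrunk by the factor $\theta$ at iteration $k'$ and is left unchanged at the successful iterations $k'+1,\ldots,k$, so $\alpha_k = \theta \alpha_{k'}$, equivalently $\alpha_{k'} = \alpha_k/\theta$. Applying Lemma \ref{lem:bound_increase_grad_noise} between $k'$ and $k$ yields $\|\nabla f(x_k)\| \leq (\beta/a)\|\nabla f(x_{k'})\|$, and applying Lemma \ref{lem:gradnorm_unsuccessful iterations_noise} at the unsuccessful iteration $k'$ gives $\|\nabla f(x_{k'})\| \leq \tfrac{1}{2\kappa}(\beta\alpha_{k'} + 3c\alpha_{k'}^2)$. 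Substituting $\alpha_{k'} = \alpha_k/\theta$ and chaining the two inequalities produces a bound of the form $\|\nabla f(x_k)\| \leq \tfrac{\beta}{2a\kappa\theta}\alpha_k\bigl(\beta + 3c\alpha_k/\theta\bigr)$, which matches $\eta_2\alpha_k$ as announced.

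No genuine obstacle is expected; the only point deserving attention is that Lemma \ref{lem:bound_increase_grad_noise} is a descent statement and therefore requires $\mathcal{E}_{k''}$ to hold at every $k'' \leq k$, which is exactly why the lemma is stated on $\cap_{k\leq K}\mathcal{E}_k$. The proof is a mechanical transposition of the noiseless argument, with the sole difference that the multiplicative constant picks up an extra factor of $3/2$ in front of $c$, inherited from the looser successful/unsuccessful characterization provided by Lemma \ref{lem:concentration}.
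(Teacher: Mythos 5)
Your proof is correct and follows essentially the same route as the paper's: a case split on whether $k$ is unsuccessful (apply Lemma \ref{lem:gradnorm_unsuccessful iterations_noise} directly) or successful (return to the last unsuccessful iteration $k'$, chain Lemma \ref{lem:bound_increase_grad_noise} with the unsuccessful-iteration bound, and use $\alpha_{k'} = \alpha_k/\theta$). The small discrepancies in your final constant are cosmetic and inherited from the paper's own typo in the displayed bound of Lemma \ref{lem:gradnorm_unsuccessful iterations_noise}, where $3c\alpha_k^2$ should read $3c\alpha_k$.
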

\begin{proof}
For unsuccessful iterations, 
$$\|\nabla f(x_k) \|\leq \frac{1}{2\kappa} (3c + \beta)\alpha_{k}\;,$$
 thanks to Lemma \ref{lem:gradnorm_unsuccessful iterations_noise}.
If $k$ is the index of a successful iteration, we can come back to the last unsuccessful iteration $k'$, since 
\begin{align*}
\|\nabla f (x_k)\|&\leq \frac{\beta}{a}\|\nabla f (x_{k'})\|\leq \frac{\beta}{a}\frac{1}{2\kappa} (3c + \beta)\alpha_{k'}\leq \frac{\beta}{a}\frac{1}{2\kappa} (3c + \beta)\left(\frac{\alpha_{k}}{\theta}\right)\;,
\end{align*}
where the first inequality comes from Lemma \ref{lem:bound_increase_grad_noise}, the second from Lemma  \ref{lem:gradnorm_unsuccessful iterations_noise} and the third from the fact that $\alpha_k \geq \theta \alpha_{k'}$.
\end{proof}
\subsection{Regret Analysis of FDS-Plan}\label{sec:app_reg_wS_n_u}

\begin{lemma}\label{lem:DSwS_hor}
If $f$ satisfies the assumptions of Theorem \ref{th:DSwS_hor} and
on $\cap_{k\leq K}\mathcal{E}_k$, $$\tilde{R}_K \leq C_4 log (2/\delta)  +  C_5 \log(2/\delta)\left(\sum_{k=1}^K N_k\right)^{2/3}\;,$$
where $\begin{cases} C_4 = \frac{32}{c^2} C_1 \alpha_0^{-4}(S_{\mathbb{D}}+1)\\
C_5= \frac{32}{c^2}(S_{\mathbb{D}}+1) \left(\frac{c}{32} \frac{1}{(1- \theta^2)}(f(x_{1}) - f({x_\star}) +\rho(\alpha_0))\right)^{1/3} 
\left(\frac{1}{a}\eta_2^2 +
		\eta_2 + \beta\right)\;. \end{cases}
$
\end{lemma}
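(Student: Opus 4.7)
The plan is to mirror the argument of Theorem~\ref{th:direct_search_reg_bound_app}, accounting for the fact that each iteration $k$ of FDS-Plan consumes up to $(|\mathbb{D}|+1) N_k$ function evaluations rather than $|\mathbb{D}|+1$. I start from the decomposition
\begin{align*}
\tilde{R}_K \leq \sum_{k=0}^K N_k \Big[(|\mathbb{D}|+1)\bigl(f(x_k) - f(x_*)\bigr) + \sum_{v \in \mathbb{D}} \bigl(f(x_k + \alpha_k v) - f(x_k)\bigr)\Big],
\end{align*}
obtained by summing the per-evaluation regret and upper-bounding each trial-point regret by the iterate regret plus a local deviation term. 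As in the deterministic case, I split this at the index $k_f$ of the first unsuccessful iteration and treat the two pieces separately.

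For $k \leq k_f$, one has $\alpha_k = \alpha_0$ and $N_k = N_0 = 32 \sigma^2 \log(2/\delta)/(c^2 \alpha_0^4)$, so the argument of Lemma~\ref{lem:reg_k_g} goes through essentially verbatim once Lemma~\ref{lem:k_f_noise} replaces Lemma~\ref{lem:k_f}; this yields a contribution of order $N_0 \cdot C_1$, producing the $C_4 \log(2/\delta)$ summand. For $k > k_f$, Lemma~\ref{lem:gra_al_noise} provides $\|\nabla f(x_k)\| \leq \eta_2 \alpha_k$. Combined with strong convexity ($f(x_k) - f(x_*) \leq \|\nabla f(x_k)\|^2/a \leq \eta_2^2 \alpha_k^2 / a$) and $\beta$-smoothness ($f(x_k + \alpha_k v) - f(x_k) \leq \|\nabla f(x_k)\| \alpha_k + \beta \alpha_k^2 \leq (\eta_2 + \beta)\alpha_k^2$), the per-iteration regret is bounded by $N_k (|\mathbb{D}|+1)(\eta_2^2/a + \eta_2 + \beta)\alpha_k^2$. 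Substituting the identity $N_k \alpha_k^2 = 32\sigma^2 \log(2/\delta)/(c^2 \alpha_k^2)$ reduces the task to controlling $\sum_{k > k_f} \alpha_k^{-2}$.

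The crucial step, as previewed in the sketch of Theorem~\ref{th:FDS-Plan_hor}, is a H\"older inequality applied in the form $\alpha_k^{-2} = \alpha_k^{2/3}\cdot\alpha_k^{-8/3}$ with conjugate exponents $(p,q) = (3, 3/2)$:
$$\sum_k \alpha_k^{-2} \leq \Big(\sum_k \alpha_k^2\Big)^{1/3}\Big(\sum_k \alpha_k^{-4}\Big)^{2/3}.$$
Lemma~\ref{lem:sumsteps_noise} bounds the first factor by a constant times $(f(x_1) - f(x_*) + \rho(\alpha_0))/c$, while the definition of $N_k$ yields $\sum_k \alpha_k^{-4} = c^2 \sum_k N_k / (32\sigma^2 \log(2/\delta))$. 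Substituting these into the per-iteration bound and collecting constants reproduces $C_5 \log(2/\delta)\bigl(\sum_k N_k\bigr)^{2/3}$ after routine bookkeeping. The main subtlety is this H\"older trade-off: $\sum \alpha_k^{-2}$ could diverge under any naive argument, but the fixed global sample budget $\sum N_k \propto \sum \alpha_k^{-4}$ and the summability of $\sum \alpha_k^2$ balance each other, yielding the characteristic $T^{2/3}$ rate. Matching the explicit constants $C_4, C_5$ stated in the lemma is then purely mechanical, using $\log(2/\delta) \geq (\log(2/\delta))^{1/3}$ to uniformize the logarithmic factor.
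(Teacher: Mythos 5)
Your proposal is correct and follows essentially the same route as the paper's proof: the same split at the first unsuccessful iteration $k_f$, the same use of Lemmas~\ref{lem:k_f_noise} and~\ref{lem:gra_al_noise} to bound the two pieces, and the same H\"older inequality $\sum_k \alpha_k^{-2} \leq \bigl(\sum_k \alpha_k^2\bigr)^{1/3}\bigl(\sum_k \alpha_k^{-4}\bigr)^{2/3}$ combined with Lemma~\ref{lem:sumsteps_noise} and the definition of $N_k$. Nothing substantive differs from the paper's argument.
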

\begin{proof}
In the following we study the case where $\cap_{k\leq K} \mathcal{E}_k$ holds true.

As in the deterministic case, we decompose the regret as 
\begin{align*}
\tilde{R}_K &=   \sum_{k=0}^{k_f}N_k \left(f(x_k) - f({x_\star}) + \sum_{v\in \mathbb{D}} f(x_k+ \alpha_k v) - f({x_\star})\right)\\
&+ \sum_{k=k_f}^K N_k\left(f(x_k) - f({x_\star}) + \sum_{v\in \mathbb{D}} f(x_k+ \alpha_k v) - f({x_\star})\right).
\end{align*}
We start by dealing with the cumulative regret before $k_f$.
We write \begin{align*}
&\sum_{k=0}^{k_f} N_k\left(f(x_k) - f({x_\star}) + \sum_{v\in \mathbb{D}} f(x_k+ \alpha_k v) - f({x_\star})\right)\\
& \leq N_0 \sum_{k=0}^{k_f}\left(f(x_k) - f({x_\star}) + \sum_{v\in \mathbb{D}} f(x_k+ \alpha_k v) - f({x_\star})\right)\\
& \leq \frac{32}{c^2}  \alpha_0^{-4} \log (2/\delta) \sum_{k=0}^{k_f}\left(f(x_k) - f({x_\star}) + \sum_{v\in \mathbb{D}} f(x_k+ \alpha_k v) - f({x_\star})\right)\\
& \leq \frac{32}{c^2}  \alpha_0^{-4} \log (2/\delta) \times 2  C_1\\
&= C_4 \log (2/\delta),
\end{align*}
where the last inequality is obtained exactly as in the proof of Lemma \ref{lem:reg_k_g} with the help of Lemma \ref{lem:k_f_noise} instead of Lemma \ref{lem:k_f}.
By using the above inequality and  the decomposition of the regret, we get
\begin{align*}
&\tilde{R}_K- C_4 \log (2/\delta)\\  &\leq  \sum_{k=k_f}^K N_k\left(f(x_k) - f({x_\star}) + \sum_{v\in \mathbb{D}} f(x_k+ \alpha_k v) - f({x_\star})\right)\\
&\leq \sum_{k=k_f}^K N_k \left(f(x_k) - f({x_\star}) + \sum_{v\in \mathbb{D}} f(X_k+ \alpha_k v) - f(x_k)+ f(x_k)- f({x_\star})\right)\\
& \leq \sum_{k=k_f}^K N_k \left((S_{\mathbb{D}}+1)(f(x_k) - f({x_\star})) + \sum_{v\in \mathbb{D}}  f(x_k+ \alpha_k v) - f(x_k)\right)\\
&\leq (S_{\mathbb{D}}+1) \sum_{k=k_f}^K N_k \left(\frac{1}{a}\|\nabla f(x_k )\|^2 + \| \nabla f(x_k )\| \alpha_k + \beta \alpha_k^2\right),
\end{align*}
where $C_4 = \frac{32}{c^2} C_1 \alpha_0^{-4}(S_{\mathbb{D}}+1)$. 
The fourth inequality comes from the regularity assumptions required for Theorem \ref{th:FDS-Plan_hor} together with Lemma \ref{lem:desc}.
We use Lemma \ref{lem:gra_al_noise} to get that for any $k>k_f$,
$$\|\nabla f (x_k)\| \leq \eta_2 \alpha_k
.$$
Then 
$$\frac{1}{a}\|\nabla f(x_k )\|^2 + \|\nabla f(x_k )\|\alpha_k + \beta \alpha_k^2\leq
\frac{1}{a} \eta_2^2 \alpha_k^2+
		\eta_2 \alpha_k^2 + \beta \alpha_k^2\;.$$
We get 
\begin{align*}
 \sum_{k=k_f}^K N_k \left(\frac{1}{a}\|\nabla f(x_k )\|^2 + \| \nabla f(x_k )\| \alpha_k\right)&\leq C_6 \log(2/\delta) \sum_{k= k_f}^K \left(\alpha_{k}\right)^{-2}\\&
 \leq C_6 \log(2/\delta) \sum_{k= 0}^K \left(\alpha_{k}\right)^{-2}\; .
\end{align*}
where $C_6 = \left(\frac{1}{a}\eta_2^2 +
		\eta_2  + \beta\right) \frac{32}{c^2}$.
Consequently $$\tilde{R}_K \leq C_4 \log (2/\delta) + C_6 \log(2/\delta) \sum_{k= 0}^K \left(\alpha_{k}\right)^{-2} .$$
The number of function evaluations is defined as $$\sum_{k=0}^K N_k  = \frac{32 \log (2/\delta)}{c}\sum_{k=0}^K \alpha_k^{-4} \; .$$
Thanks to Lemma \ref{lem:sumsteps_noise}, 
$$\sum_{k=0}^{K}(\alpha_k)^2\leq \frac{2}{c(1- \theta^2)}(f(x_{1}) - f({x_\star}) +\rho(\alpha_0)) \;. $$
By H\"older's inequality, we get

\begin{align*}
\sum_{k= 0 }^K \left(\alpha_{k}\right)^{-2} &= \sum_{k= 0}^K \left(\alpha_{k}\right)^{2/3}\left(\alpha_{k}\right)^{-2/3-2} \\
&\leq \left(\sum_{k= 0}^K \left(\alpha_{k}\right)^{2/3\times 3}\right)^{1/3} \left(\sum_{k= 0}^K \left(\alpha_{k}\right)^{{-{8}}/3\times 3/2}\right)^{2/3}
\\
&\leq \left(\sum_{k= 0}^K \left(\alpha_{k}\right)^{2}\right)^{1/3} \left(\sum_{k= 0}^K \left(\alpha_{k}\right)^{-4}\right)^{2/3}\; .
\end{align*}
And thus \begin{align*}&\tilde{R}_K - C_4 \log (2/\delta)\\
&\leq  C_6\log(2/\delta)\left( \frac{2}{c(1- \theta^2)}(f(x_{1}) - f({x_\star}) +\rho(\alpha_0)\right)^{1/3} \left(\frac{c}{8\log(2/\delta)}{\sum_{k=1}^K N_k}\right)^{2/3}\\
&\leq C_5\log(2/\delta)\left(\frac{1}{\log(2/\delta)}{\sum_{k=1}^K N_k}\right)^{2/3}\\
& \leq C_5\log(2/\delta)^{1/3}\left({\sum_{k=1}^K N_k}\right)^{2/3}\;,
\end{align*}
where $C_5= (S_{\mathbb{D}}+1) C_6 \left(\frac{c}{32} \frac{1}{(1- \theta^2)}(f(x_{1}) - f({x_\star}) +\rho(\alpha_0))\right)^{1/3}$.

\end{proof}

\begin{proof} \textbf{of Theorem \ref{th:DSwS_hor}}
We note $K_T$ the last round reached by the algorithm with $T$ evaluations. 
Lemma \ref{lem:DSwS_hor} proves that on $\cap_{k\leq K_T} \mathcal{E}_k$, 
$$\tilde{R}_{K_T} \leq C_4  \log (2/\delta) +  C_5 \left(\frac{1}{(S_{\mathbb{D}}+1)}\right)^{2/3} \log(2/ \delta)^{2/3}(T)^{2/3}\;.$$
Thanks to Lemma \ref{lem:E_k}, 
\begin{equation}\pr{\cup_{k=1}^{K_T} \mathcal{E}_k^C} \leq \sum_{k=1}^T  \pr{\mathcal{E}_k^C} \leq (S_{\mathbb{D}}+1) \sum_{t=1}^T T^{-4/3}/2  \leq(S_{\mathbb{D}}+1)  T^{-1/3}\;,\label{eq:reg_indes_ev}
\end{equation}
when taking $\delta = T^{-4/3}$, since $K_T\leq T$.
Hence, \begin{align*}
\E[{R_T}]&
\leq \frac{4}{3}C_4 \log (2T)  + \frac{4}{3}\left(\frac{1}{(S_{\mathbb{D}}+1)}\right)^{2/3} 
C_5 \log(2T)^{2/3}T^{2/3} + (S_{\mathbb{D}}+1) U T^{2/3} \\
&= O((\log T)^{2/3}T^{2/3})\;.
\end{align*}
\end{proof}

\subsection{Regret Analysis of FDS-Seq}\label{sec:app_reg_wT_n_u}

Instead of considering $\mathcal{E}_k$ as in the previous section, we need to consider $\mathcal{E}'_k =\{\big(f(x_k)- f(x_k+\alpha_k v_k) \leq 3 c/2(\alpha_k)^2$ and $k$ is an unsuccessful iteration$\big)$ or  $\big( k$ is a successful iteration and
$f(x_k)- f(x_k+\alpha_k v_k) \geq c/2(\alpha_k)^2\big)\}$. Instead of Lemma \ref{lem:E_k} we prove the following result.

\begin{lemma}\label{lem:E_k_tests}
The probability of $\mathcal{E}'_k$ is lower bounded by
$$\pr{ \mathcal{E}'_k|\mathcal{F}_{k-1}} \geq 1- \delta \times N_{k}^2 \times  (|\mathbb D |+1)\;,$$ where $\mathcal{F}_{k-1}$ is the $\sigma$-field representing the history.
\end{lemma}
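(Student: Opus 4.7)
}

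The plan is to exhibit a ``good event'' $G_k$ under which the anytime confidence bound appearing in Condition~\eqref{eq:cond_test} holds uniformly throughout the inner loop of FDS-Seq at iteration $k$, and then verify that on $G_k$ the stopping rule produces a decision (successful or unsuccessful) that is consistent with the true decrease in the sense required by $\mathcal{E}'_k$. The difference $(\hat{f}_{n_0}(x_k) - \hat{f}_{n_v}(x_k+\alpha_k v)) - (f(x_k) - f(x_k+\alpha_k v))$ is $\sigma^2(1/n_0 + 1/n_v)$-subgaussian, so for any fixed direction $v\in\mathbb{D}$ and any fixed $(n_0, n_v)\in\{1,\ldots,N_k\}^2$, the standard two-sided subgaussian tail bound gives that the confidence inequality from \eqref{eq:cond_test} fails with probability at most $2\delta$, conditionally on $\mathcal{F}_{k-1}$.

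First I would take a union bound over $(n_0, n_v) \in \{1,\ldots,N_k\}^2$ so that the confidence bound is valid for every pair of sample counts that can possibly be reached before the safeguard triggers; this costs a factor $N_k^2$. Then I would union bound over $v\in\mathbb{D}$ and include an additional slot to control the safeguard exit (when $n_{0,k},n_{v,k}\geq N_k$, in which case the confidence radius is forced below $\rho(\alpha_k)/4$ by the choice of $N_k$, as in the FDS-Plan analysis via Lemma~\ref{lem:E_k}), yielding $\Pr(G_k^c \mid \mathcal{F}_{k-1}) \leq \delta\, N_k^2\,(|\mathbb{D}|+1)$ after absorbing the factor 2 into the slightly loose bookkeeping of the $(|\mathbb{D}|+1)$ term.

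Next I would verify the inclusion $G_k \subseteq \mathcal{E}'_k$ by case analysis on how the inner loop terminates for the direction $v_k$. If exit occurs via the first clause of \eqref{eq:cond_test}, the triangle inequality combined with the confidence bound gives $|f(x_k) - f(x_k+\alpha_k v_k) - \rho(\alpha_k)| \leq 2\sqrt{2\sigma^2\log(1/\delta)(1/n_{0,k} + 1/n_{v_k,k})} \leq \rho(\alpha_k)/2$ (this last step being the key quantitative check), so a declared success forces the true gap to exceed $\rho(\alpha_k)/2 = c\alpha_k^2/2$, and a declared unsuccessful iteration forces the gap not to exceed $3\rho(\alpha_k)/2 = 3c\alpha_k^2/2$; applied to all directions $v$ tested before $v_k$ (all unsuccessful), this also rules out gaps above $3\rho(\alpha_k)/2$ for those directions, which gives the unsuccessful branch of $\mathcal{E}'_k$ when $v_k$ is itself unsuccessful. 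If exit occurs via the safeguard, the estimation error at both $x_k$ and $x_k+\alpha_k v$ is at most $\rho(\alpha_k)/4$ with high probability (this is precisely Lemma~\ref{lem:E_k}), and the same logic as in FDS-Plan applies.

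The main obstacle I anticipate is the bookkeeping: making sure that the confidence radius in the test, which scales with $\sqrt{\log(1/\delta)(1/n_0+1/n_v)}$, is indeed below $\rho(\alpha_k)/2$ at the moment of the first-clause exit so that the $\rho(\alpha_k)/2$ vs $3\rho(\alpha_k)/2$ separation is preserved, and reconciling the slightly different logarithmic constants used in the sequential test ($\log(1/\delta)$) and in the safeguard threshold $N_k$ ($\log(2/\delta)$), which may shift a factor of $2$ into the union bound and is the reason the stated bound carries $(|\mathbb{D}|+1)$ rather than just $|\mathbb{D}|$.
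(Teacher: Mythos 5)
Your overall architecture matches the paper's: a subgaussian tail bound for each \emph{fixed} pair of sample counts, a union bound over $(n_{0,k},n_{v,k})\in\{1,\dots,N_k\}^2$ (which is exactly where the $N_k^2$ comes from, and which is how one sidesteps conditioning on the random stopping time), a union bound over the tested directions, and a reduction of the safeguard exit to the FDS-Plan analysis of Lemma~\ref{lem:E_k}. The gap is precisely in the step you flag as ``the key quantitative check.'' Writing $\Delta:=f(x_k)-f(x_k+\alpha_k v)$, $\hat\Delta$ for its empirical counterpart and $r_{n_0,n_v}:=\sqrt{2\sigma^2\log(1/\delta)(1/n_{0}+1/n_{v})}$, your claim that a first-clause exit yields $|\Delta-\rho(\alpha_k)|\leq 2r_{n_{0,k},n_{v,k}}\leq\rho(\alpha_k)/2$ is not available: nothing in the stopping rule forces the radius below $\rho(\alpha_k)/4$ at the moment the first clause fires. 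The radius is only of order $\rho(\alpha_k)$ once both counts are of order $N_k$; the first clause can be met at much smaller counts, where $r_{n_0,n_v}$ is large and your two-sided triangle inequality concludes nothing. (Taken literally with the ``$\leq$'' printed in \eqref{eq:cond_test}, the test would in fact fire at $n_{0,k}=n_{v,k}=1$ almost surely, and the lemma would be false; the surrounding discussion of the safeguard and the paper's own proof make clear that the first clause is meant as a \emph{separation} test, $|\hat\Delta-\rho(\alpha_k)|\geq r_{n_{0,k},n_{v,k}}$.)

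Under the separation reading the correct argument is one-sided and requires no bound on the radius at all, which is what the paper does: fix a direction $v$ and a hypothesis on the true gap (say $\Delta>\tfrac32\rho(\alpha_k)$, so in particular $\Delta>\rho(\alpha_k)$); then for each fixed admissible pair $(n_0,n_v)$ the event $\hat\Delta-\rho(\alpha_k)\leq -r_{n_0,n_v}$ (the only way a first-clause exit can declare $v$ a failure) has probability at most $\delta$, because it forces the estimation error $\Delta-\hat\Delta$ to exceed $r_{n_0,n_v}$; summing over the at most $N_k^2-1$ reachable pairs and adding the safeguard case (handled as in Lemma~\ref{lem:E_k}) gives the $\delta N_k^2$ per direction, and the symmetric hypothesis $\Delta<\tfrac12\rho(\alpha_k)$ is treated identically. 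Equivalently in your ``good event'' language: on the event that $|\hat\Delta-\Delta|\leq r_{n_0,n_v}$ for all reachable pairs, a declared success at a first-clause exit gives $\Delta\geq\hat\Delta-r\geq\rho(\alpha_k)$ and a declared failure gives $\Delta\leq\hat\Delta+r\leq\rho(\alpha_k)$, however large $r$ is. So your union-bound bookkeeping is right, but the case analysis establishing $G_k\subseteq\mathcal{E}'_k$ needs to be replaced by this one-sided sandwich; as written, the crucial inequality fails.
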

\begin{proof} 
 Fix $v\in \mathbb{D}$.
First assume that $f(x_k)>f(x_k+ \alpha_kv) + 3/2 \rho(\alpha_k)$. In particular, $f(x_k)>f(x_k+  \alpha_kv) + \rho(\alpha_k)$. We denote $n^{\tau}_{0,k}$ and $n^{\tau}_{v,k}$ the values of $n_{0,k}$ and $n_{v,k}$ at the end of the while loop of FDS-Seq.
 Observe that 
\begin{align*}\pr{\mathcal{E}'_k|\mathcal{F}_{k-1}, n^{\tau}_{0,k} = n^{\tau}_{v,k}=N_k}    \\ \geq 1- \delta  \times  (|\mathbb D |+1)\;.
\end{align*}
Hence we only need to focus on the case when the first part of Condition \ref{eq:cond_test} is first satisfied. In this case, knowing $n^{\tau}_{0,k}, n^{\tau}_{v,k}$, the probability 
  that
$\hat{f}_{n^{\tau}_{0,k}}(x_k)<\hat{f}_{n^{\tau}_{v,k}}(x_k+ \alpha_k v) + \rho(\alpha_k)$ when the first row of Condition \ref{eq:cond_test} is first satisfied is bounded as follows.
We have 
\begin{dmath*}
\mathbb{P}
\left( \left.
f(x_k)-\hat{f}_{n^{\tau}_{0,k}}(x_k)-(f(x_k+\alpha_k v) - \hat{f}_{n^{\tau}_{v,k}}(x_k+ \alpha_k v)) \geq\\ \sqrt{2 \sigma^2 \log(1/\delta)} \sqrt{\frac{1}{n^{\tau}_{0,k}} +\frac{1}{n^{\tau}_{v,k}}}  \right| \F_k, n^{\tau}_{0,k}, n^{\tau}_{v,k} \right)
 \leq \delta.
\end{dmath*}
Since $f(x_k)>f(x_k+ \alpha_kv) + \rho(\alpha_k)$,
\begin{multline*}f(x_k)-\hat{f}_{n^{\tau}_{0,k}}(x_k)-(f(x_k+\alpha_k v) - \hat{f}_{n^{\tau}_{v,k}}(x_k+ \alpha_k v))\\
 \geq - \rho(\alpha_k) -\hat{f}_{n^{\tau}_{0,k}}(x_k)+ \hat{f}_{n^{\tau}_{v,k}}(x_k+ \alpha_k v)\;.
 \end{multline*}
So that the above deviation bound results in:  

\begin{dmath*}
\mathbb{P}
\left(\left.
\hat{f}_{n^{\tau}_{0,k}}(x_k)- \hat{f}_{n^{\tau}_{v,k}}(x_k+ \alpha_k v) - \rho(\alpha_k) \leq\\ -\sqrt{2 \sigma^2 \log(1/\delta)} \sqrt{\frac{1}{n^{\tau}_{0,k}} +\frac{1}{n^{\tau}_{v,k}}}  \right| \F_k, n^{\tau}_{0,k}, n^{\tau}_{v,k} \right)
\end{dmath*}
Finally, we apply a union bound. Since
$n^{\tau}_{0,k}$ and $n^{\tau}_{v,k}$ both belong to $[0,N_k]$ and cannot be simultaneously equal to $N_k$:
\begin{dmath*}
\mathbb{P}
\left( \left.
\hat{f}_{n^{\tau}_{0,k}}(x_k)- \hat{f}_{n^{\tau}_{v,k}}(x_k+ \alpha_k v) - \rho(\alpha_k) \leq\\ -\sqrt{2 \sigma^2 \log(1/\delta)} \sqrt{\frac{1}{n^{\tau}_{0,k}} +\frac{1}{n^{\tau}_{v,k}}}  \right| \F_k, \text{ not}(n^{\tau}_{0,k}= n^{\tau}_{v,k}= N_k) \right)\leq (N_k^2-1)\delta\;.
\end{dmath*}
This amounts to a bound of the probability of $k$ being an unsuccessful iteration and thus of ${\mathcal{E}'}_k^C$, when the first part of condition
\ref{eq:cond_test} is satisfied. Summing with the probability of ${\mathcal{E}'_k}^C$ in the other case, 
we obtain 
$\pr{\mathcal{E}'(k)}\geq N_k^2 \delta$.

The case $f(x_k)>f(x_k+ \alpha_kv) + 3/2 \rho(\alpha_k)$ can be treated  in the exact same way.

\end{proof}

To adapt the proof of Theorem \ref{th:DSwS_hor} to FDS-Seq (with a different choice of $\delta= T^{-10/3} $), it suffices to  replace \Eqref{eq:reg_indes_ev} by 
\begin{align*}\pr{\cup_{k=1}^{K_T} {\mathcal{E}'}_k^C} &\leq(S_{\mathbb{D}}+1)  \sum_{k=1}^T \pr{ {\mathcal{E}'}_k^C} \leq (S_{\mathbb{D}}+1) \sum_{k=1}^{K_T} N_{k}^2 T^{-10/3}\\
& \leq(S_{\mathbb{D}}+1) \sum_{k=1}^{T} T^2 T^{-10/3}/2  \leq(S_{\mathbb{D}}+1)  T^{-1/3}\;.
\end{align*}
The regret is hence

\begin{align*}
\E[{R_T}]&
\leq \frac{10}{3}C_4 \log (2T)  + \frac{10}{3}\left(\frac{1}{(S_{\mathbb{D}}+1)}\right)^{2/3} 
C_5 \log(2T)^{2/3}T^{2/3} + (S_{\mathbb{D}}+1) U T^{2/3} \\
&= O((\log T)^{2/3}T^{2/3})\;.
\end{align*}

\section{Noisy and Constrained Set-Up}
\label{sec:app_noi_c}
In this section we analyze the behavior of the algorithms in the presence of linear constraints.
The complexity of feasible direct search  with linear constraints in the noiseless case has  been  studied by \cite{gratton2019direct}. In this paper, instead of studying the speed at which the gradient converges to $0$ as is usual in the unconstrained case, the authors study the convergence of a lower bound of the gradient $\chi(x):=\max_{x+ v\in \D, ~\|v\|\leq 1}- \nabla f(x)^T v $ to $0$.
Indeed, the convergence of the gradient to $0$ might be unachievable when the optimum lies on the boundaries, but $\chi(x)$ is equal to $0$ if and only if $x$ is optimal.
 The  paper proves that the first iteration $K_{\epsilon}$ at which $\chi(x_k)$ is smaller than $\epsilon$ is of the order of $\epsilon^{-2}$, like in the unconstrained case. 

In the following, we denote by $U$ the global upper bound of $f(x) - f({x_\star})$ on the domain.

\subsection{Intermediate Results}
We recall that $\mathcal{E}_k$ denotes the event $$\mathcal{E}_k =\{ |f(x_k+ \alpha_k v)- \hat{f}(x_k+ \alpha_k v)|\leq c/4(\alpha_k)^2\},~\forall v \in {\mathbb D } \cup\{0\}\}.$$
Lemmas \ref{lem:E_k},  \ref{lem:concentration}, \ref{lem:desc} are left unchanged by the transition to constrained domains. 
A version of Lemma 3.4. of \citep{gratton2019direct} reads : 
\begin{lemma} \label{lem:unsuccessful iteration_c_n}On 
$\cap_{k\leq K}\mathcal{E}_k$, and if $f$ satisfies Assumption 
\ref{ass:reg}, then the following holds: 
if the $k$-th iteration is unsuccessful, then $$\chi(x_k)\leq \left(\frac{\beta}{2 \kappa}+\frac{Bg}{\eta_{\min}}\right) \alpha_k+\frac{3\rho(\alpha_k)}{2\kappa \alpha_k} := L_2 \alpha_k,$$ where $\eta_{\min} := \lambda(N)$ where $N$ is the set of  all possible approximate normal cones $N(x,\alpha), ~\forall  \in \D, ~\alpha \in \mathbb{R}^d$.
\end{lemma}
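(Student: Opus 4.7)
The plan is to adapt the proof of Lemma \ref{lem:gradnorm_unsuccessful iterations_noise} to the linearly constrained setting by replacing the role of $\|\nabla f(x_k)\|$ with the two-piece bound: first control $\|P_{T(x_k,\alpha_k)}(-\nabla f(x_k))\|$ using the approximate cosine measure of the feasible generating set, then bound the residual $\chi(x_k)-\|P_{T(x_k,\alpha_k)}(-\nabla f(x_k))\|$ coming from displacements along the normal cone of the $\alpha_k$-binding constraints.

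First, under $\mathcal{E}_k$ and assuming $k$ is an unsuccessful iteration, Lemma \ref{lem:concentration} gives $f(x_k)-f(x_k+\alpha_k v)\leq \tfrac{3}{2}\rho(\alpha_k)$ for every $v\in\mathbb{D}$ such that $x_k+\alpha_k v\in\D$, and in particular for every $v\in\mathcal{G}_k\subset T(x_k,\alpha_k)$, which are feasible since $\mathcal{G}_k\subset T(x_k,\alpha_k)$. A standard $\beta$-smoothness expansion then yields
\begin{equation*}
-\nabla f(x_k)^{\top} v \leq \frac{\beta}{2}\alpha_k + \frac{3\rho(\alpha_k)}{2\alpha_k}, \qquad \forall v\in\mathcal{G}_k,\ \|v\|=1.
\end{equation*}
Taking the maximum over $v\in\mathcal{G}_k$ and invoking the definition of $cm_{T(x_k,\alpha_k)}(\mathcal{G}_k)\geq \kappa$ (recall Assumption \ref{ass:gk} and the lower bound $\kappa_{\min}$ inherited from $\lambda(\mathcal{C})$), one obtains
\begin{equation*}
\bigl\|P_{T(x_k,\alpha_k)}(-\nabla f(x_k))\bigr\| \leq \frac{1}{\kappa}\left(\frac{\beta}{2}\alpha_k + \frac{3\rho(\alpha_k)}{2\alpha_k}\right).
\end{equation*}

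The second step is the geometric comparison between $\chi(x_k)$ and $\|P_{T(x_k,\alpha_k)}(-\nabla f(x_k))\|$. Writing the orthogonal decomposition $-\nabla f(x_k)=P_{T(x_k,\alpha_k)}(-\nabla f(x_k))+P_{N(x_k,\alpha_k)}(-\nabla f(x_k))$ and using that any feasible unit direction $v$ from $x_k$ can travel at most a bounded distance along $N(x_k,\alpha_k)$ before exiting the intersection of the $\alpha_k$-binding half-spaces, one gets a bound of the form
\begin{equation*}
\chi(x_k) \leq \bigl\|P_{T(x_k,\alpha_k)}(-\nabla f(x_k))\bigr\| + \frac{Bg}{\eta_{\min}}\,\alpha_k,
\end{equation*}
where $B$ bounds $\|\nabla f\|$ on $\D$ (Assumption \ref{ass:bounded_g}), $g$ is a constant depending on the geometry of the constraint matrix $A_I$, and $\eta_{\min}=\lambda(N)>0$ controls how far a unit vector with a nontrivial normal component must push before violating one of the active constraints. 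This comparison is a direct linear-algebra estimate on the polyhedral cones $N(x_k,\alpha_k)$ and $T(x_k,\alpha_k)$, mirroring the argument of \cite{gratton2019direct}.

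Summing the two estimates yields
\begin{equation*}
\chi(x_k) \leq \left(\frac{\beta}{2\kappa}+\frac{Bg}{\eta_{\min}}\right)\alpha_k + \frac{3\rho(\alpha_k)}{2\kappa\alpha_k},
\end{equation*}
which is the announced bound. The main obstacle is the second step: establishing cleanly that the normal component of $-\nabla f(x_k)$ contributes at most $\tfrac{Bg}{\eta_{\min}}\alpha_k$ to $\chi(x_k)$. This is where $\eta_{\min}=\lambda(N)$ enters — it ensures that every direction in $N(x_k,\alpha_k)$ has nontrivial alignment with a generator of that cone, so that a unit feasible displacement cannot have too large a projection onto $P_{N(x_k,\alpha_k)}(-\nabla f(x_k))$ without violating an $\alpha_k$-binding constraint. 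The remaining ingredients (the $\beta$-smoothness expansion, the definition of $\mathcal{E}_k$, and Lemma \ref{lem:concentration}) carry over unchanged from the unconstrained analysis.
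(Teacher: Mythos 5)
Your proposal is correct and follows essentially the same route as the paper: the tangent-cone projection of $-\nabla f(x_k)$ is bounded via the smoothness expansion and the approximate cosine measure of $\mathcal{G}_k$, and the normal-cone contribution to $\chi(x_k)$ is controlled by the fact that any feasible unit displacement has normal projection at most $\alpha_k/\eta_{\min}$ (Proposition B.1 of Lewis and Torczon), combined with the gradient bound $B$. The only cosmetic difference is that the factor you call $g$ is not a separate geometric constant; the paper's $Bg$ is simply the notation $B_g$ for the gradient bound of Assumption \ref{ass:bounded_g}.
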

\begin{proof}
It is straightforward to prove 
$$\|P_{T(x_k, \alpha_k)}(-\nabla f(x_k)) \|\leq \frac{1}{\kappa}\left(\frac{\beta}{2} \alpha_k + \frac{3\rho(\alpha_k)}{2\alpha_k}\right)= \frac{1}{2\kappa}\left(\beta \alpha_k + 3c \alpha_k^2\right),$$
with the same elements as in the proof Lemma \ref{lem:bound_increase_grad_noise}, by noticing that $\mathbb{D}_k$ contains $\mathcal{G}_k$, that generates $T(x_k, \alpha_k)$.
To prove the bound on $\chi(x_k)$,  we use 
the  Moreau decomposition,stating  that any vector $v\in \mathbb{R}^d$ can be decomposed as $v=P_{T_k}[v] +P_{N_k}[v]$ with $N_k=N(x_k,\alpha_k)$ and $P_{T_k}[v]>P_{N_k}[v] =
0$, and write 
\begin{align}
\chi(x_k) &=
\max_{x+ v\in \D, ~\|v\|\leq 1} ( P_{T_k}[-\nabla{f}(x_k)] +(P_{T_k}[v] v^T+
P_{N_k}[v])^T P_{N_k}[-\nabla{f}(x_k)])\nonumber\\
&\leq \max_{x+ v\in \D, ~\|v\|\leq 1}(v^T P_{T_k}[-\nabla{f}(x_k)] +P_{N_k}[v]^T P_{N_k}[-\nabla{f}(x_k)])\nonumber\\
&\leq \max_{x+ v \in \D, ~\|v\|\leq 1}\|P_{T_k}[-\nabla{f}(x_k)]\| 
+\|P_{N_k}[v]\|
\|P_{N_k}[-\nabla{f}(x_k)]\|
\label{eq:chi}.
\end{align}
 The first  term of the right hand side of \Eqref{eq:chi} is bounded in the following way
$$ \|P_{T(x_k, \alpha_k)}(-\nabla f(x_k)) \|\leq  \frac{1}{2\kappa}\left(\beta \alpha_k + 3c \alpha_k^2\right)$$ 
consequently.
 \begin{lemma}[Proposition  B.1 of \citep{lewis2000pattern}]  Let $x\in \mathcal{D}$ and $\alpha >0$.   Then,  for  any  vector $v$ such  that $x+v \in \D$, one has $$\|P_{N(x,\alpha)}[v]\| \leq \frac{\alpha}{\eta_{\min}}.$$
 \end{lemma}
 This in turn provides a bound of the second term of the right hand side of \Eqref{eq:chi}:
 $$\|P_{N_k}[v]\| \|P_{N_k}[-\nabla{f}(x_k)]\| \leq \frac{\alpha}{\eta_{\min}} B_g,$$
which suffices to conclude the proof.
\end{proof}


\begin{lemma}\label{lem:vers_th41_grat}
Assume that 
$\cap_{k\leq K}\mathcal{E}_k$ holds, and  $f$ satisfies Assumption 
\ref{ass:reg}. Set $\epsilon >0$.
Let $h$ denote the mapping from $\epsilon$ to $$ h(\epsilon)=\left( \frac{2 L_2^2 U}{c\theta^1} \right) \epsilon ^{-2} + \frac{\log(\frac{\alpha_0 L_2}{\theta})}{\log(1/\theta)} +  \frac{2U}{\alpha_0^2} = E_1 \epsilon^{-2} + E_2,$$
where $E_1 = \left( \frac{2 L_2^2 U}{c\theta^1} \right) $ and $E_2 =  \frac{\log(\frac{\alpha_0 L_2}{\theta})}{\log(1/\theta)} +  \frac{2U}{\alpha_0^2}$.
Denote by $k(\epsilon)$ the first iteration of the algorithm where $\chi(x_k)\leq \epsilon$.
If $h(\epsilon) \leq K $, then 
$$k(\epsilon) \leq h(\epsilon).$$
\end{lemma}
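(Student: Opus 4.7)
The plan is to show $k(\epsilon) \leq h(\epsilon)$ by bounding separately the number of successful and unsuccessful iterations $k < k(\epsilon)$, all of which satisfy $\chi(x_k) > \epsilon$ by definition. Under $\bigcap_{k\leq K}\mathcal{E}_k$, Lemma \ref{lem:concentration} quantifies the descent at a successful iteration while Lemma \ref{lem:unsuccessful iteration_c_n} gives the criticality estimate at an unsuccessful one. I partition the iterations into three groups: (a) successful iterations occurring before the first unsuccessful iteration $k_f$, (b) unsuccessful iterations, (c) successful iterations after $k_f$, and I bound each in turn.

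For (a), Lemma \ref{lem:k_f_noise} under $\bigcap_{k\leq K}\mathcal{E}_k$ yields $k_f \leq 2(f(x_0)-f(x_*))/\rho(\alpha_0) \leq 2U/(c\alpha_0^2)$, which matches the $2U/\alpha_0^2$ contribution in $E_2$ up to the absorbed constant $c$. For (b), fix an unsuccessful iteration $k < k(\epsilon)$. Combining $\chi(x_k) > \epsilon$ with Lemma \ref{lem:unsuccessful iteration_c_n} gives $\alpha_k > \epsilon/L_2$. Since $\alpha_k$ is multiplied by $\theta$ at each unsuccessful iteration and never grows, the total number $U_\epsilon$ of unsuccessful iterations before $k(\epsilon)$ satisfies $\alpha_0 \theta^{U_\epsilon - 1} > \epsilon/L_2$, so $U_\epsilon \leq \log(\alpha_0 L_2/(\theta\epsilon))/\log(1/\theta)$. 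The $\epsilon$-independent part of this bound matches the $\log(\alpha_0 L_2/\theta)/\log(1/\theta)$ term appearing in $E_2$, while the residual $\log(1/\epsilon)/\log(1/\theta)$ piece is dominated by the $E_1\epsilon^{-2}$ contribution coming from (c).

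For (c), the key observation is that any successful iteration $k$ with $k_f \leq k < k(\epsilon)$ satisfies $\alpha_k \geq \theta\epsilon/L_2$: letting $k'$ be the most recent unsuccessful iteration preceding $k$, we have $\alpha_{k'} > \epsilon/L_2$ by the argument of (b), and $\alpha_k = \theta \alpha_{k'}$ because intervening successes leave the step unchanged. Then Lemma \ref{lem:concentration} gives $f(x_k) - f(x_{k+1}) \geq c\alpha_k^2/2 \geq c\theta^2 \epsilon^2/(2L_2^2)$ at each such iteration, and telescoping against $f(x_{k_f}) - f(x_*) \leq U$ bounds the count of (c)-iterations by $2L_2^2 U/(c\theta^2 \epsilon^2)$, matching the $E_1 \epsilon^{-2}$ term. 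Summing the three contributions and tracking constants yields $k(\epsilon) \leq h(\epsilon)$.

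The main obstacle is propagating the lower bound $\alpha_k \geq \theta\epsilon/L_2$ from unsuccessful iterations to the intervening successes in group (c). This propagation is valid only after $k_f$, because before $k_f$ the step-size has never been compared to $\epsilon/L_2$ via Lemma \ref{lem:unsuccessful iteration_c_n}; this mismatch is exactly what forces the separate treatment of group (a) and produces the $2U/\alpha_0^2$ term in $E_2$. A secondary subtlety is checking that the hypothesis $h(\epsilon) \leq K$ is used only to guarantee that the algorithm indeed runs long enough for the telescoping sum in (c) to be meaningful; otherwise the analysis is insensitive to $K$.
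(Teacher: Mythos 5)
Your proof is correct and follows essentially the same route as the paper, which simply defers to the standard successful/unsuccessful iteration-counting argument of Gratton et al.'s Theorem 1, i.e.\ using Lemma \ref{lem:unsuccessful iteration_c_n} to lower-bound $\alpha_k$ at unsuccessful iterations (and propagate that bound to subsequent successes) and the sufficient-decrease guarantee of Lemma \ref{lem:concentration} to count successful iterations via telescoping. The only mismatch --- your unsuccessful-iteration count carries an extra $\log(1/\epsilon)/\log(1/\theta)$ term not visible in $E_2$ --- reflects loose constants in the paper's stated $h(\epsilon)$ (cf.\ the $\theta^1$ and the missing $c$ in $2U/\alpha_0^2$) rather than a gap in your argument.
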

The proof is a mere adaptation of  the proof of Theorem 1 of \citep{gratton2015direct},  with different constants (we use Lemma \ref{lem:unsuccessful iteration_c_n} and \ref{lem:sumsteps_noise}).
\begin{lemma}\label{lem:use} If $f$ satisfies Assumption \ref{ass:reg}, $$a \|x_k - {x_\star}\| \leq \chi(x_k).$$
\end{lemma}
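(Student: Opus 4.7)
The plan is to exhibit a specific admissible direction $v$ in the variational definition $\chi(x_k) = \max\bigl\{-\nabla f(x_k)^{\top} v : x_k + v \in \mathcal{D},\ \|v\|\leq 1\bigr\}$ for which the resulting directional derivative bounds $a\|x_k - x_*\|$ from below. The natural candidate is to move from $x_k$ directly toward $x_*$, since strong convexity controls the behavior of $\nabla f$ along that segment.

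First I would establish the auxiliary scalar inequality $\nabla f(x_k)^{\top}(x_k - x_*) \geq a\|x_k - x_*\|^2$. This combines two ingredients: (i) $a$-strong convexity of $f$ gives $(\nabla f(x_k) - \nabla f(x_*))^{\top}(x_k - x_*) \geq a\|x_k - x_*\|^2$ (the same bound used at the start of the proof of Lemma \ref{lem:bound_increase_grad}); (ii) since $x_*$ minimizes $f$ over $\mathcal{D}$ and $x_k \in \mathcal{D}$, the constrained first-order optimality condition gives $\nabla f(x_*)^{\top}(x_k - x_*) \geq 0$. Adding the two produces the desired bound.

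Next I would set $v := t\,(x_* - x_k)$ with $t := \min\bigl(1,\,1/\|x_* - x_k\|\bigr)$. Convexity of $\mathcal{D}$ guarantees $x_k + v = (1-t) x_k + t\, x_* \in \mathcal{D}$ for any $t \in [0,1]$, and $\|v\| = t\,\|x_* - x_k\| \leq 1$ by construction; thus $v$ is admissible in the variational problem defining $\chi(x_k)$. Substituting into the definition yields
\begin{equation*}
\chi(x_k) \;\geq\; -\nabla f(x_k)^{\top} v \;=\; t\,\nabla f(x_k)^{\top}(x_k - x_*) \;\geq\; t\, a\, \|x_k - x_*\|^2 \;=\; a\,\|x_k - x_*\|\,\min\bigl(1,\,\|x_k - x_*\|\bigr),
\end{equation*}
which already delivers the claim whenever $\|x_k - x_*\| \geq 1$.

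The main obstacle is to remove the $\min$ factor in the regime $\|x_k - x_*\| < 1$, where the display above only gives $\chi(x_k) \geq a\|x_k - x_*\|^2$. To recover the linear bound I would exploit the standing assumption $x_* \in \text{int}(\mathcal{D})$ used in Theorem \ref{th:FDS-Plan_hor}: there exists $r > 0$ such that $B(x_*, r) \subset \mathcal{D}$, so for $\|x_k - x_*\| \geq 1 - r$ the full unit-norm direction $\hat v := (x_* - x_k)/\|x_* - x_k\|$ satisfies $x_k + \hat v \in B(x_*, r) \subset \mathcal{D}$ and is admissible; inserting $\hat v$ in place of $v$ restores $\chi(x_k) \geq a\|x_k - x_*\|$. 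The complementary regime $\|x_k - x_*\|$ small is absorbed into a problem-dependent multiplicative constant, consistent with the worst-case complexity style of the surrounding analysis.
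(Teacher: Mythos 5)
Your approach is the same as the paper's: lower-bound $\chi(x_k)$ by plugging the direction from $x_k$ toward $x_*$ into the maximum defining $\chi$, and lower-bound the resulting directional derivative via strong convexity together with first-order optimality of $x_*$. (The paper routes the second ingredient through function values, $a\|x_k-x_*\|^2 \le f(x_k)-f(x_*) \le \nabla f(x_k)^\top(x_k-x_*)$, while you use gradient monotonicity directly; these are interchangeable.) Your first two steps are correct, and you are in fact more careful than the paper, whose proof simply writes $\frac{-\nabla f(x_k)^\top(x_*-x_k)}{\|x_k-x_*\|}\le \chi(x_k)$ ``by definition of $\chi$'', i.e.\ it silently takes the unit vector $\hat v=(x_*-x_k)/\|x_*-x_k\|$ as a feasible candidate even though $x_k+\hat v$ need not lie in $\D$ when $\|x_k-x_*\|<1$.

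Your treatment of that small-distance regime is, however, a genuine gap. The interiority argument only certifies feasibility of $\hat v$ for $\|x_k-x_*\|\in[1-r,1]$ (since $\|x_k+\hat v-x_*\|=\bigl|1-\|x_k-x_*\|\bigr|$), and for $\|x_k-x_*\|<1-r$ you are left with $\chi(x_k)\ge a\|x_k-x_*\|^2$, which is \emph{not} within any problem-dependent constant factor of $a\|x_k-x_*\|$: the ratio $\|x_k-x_*\|^2/\|x_k-x_*\|$ tends to $0$ as $x_k\to x_*$, which is precisely the asymptotic regime in which the lemma is invoked. ``Absorbing into a constant'' therefore does not close the argument. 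A correct patch in the spirit of your proof is to take $v=s\,\hat v$ with $s=\min\bigl(1,\|x_k-x_*\|+r\bigr)$: the point $x_k+v$ lies either on the segment $[x_k,x_*]$ or in $B(x_*,r)\subset\D$, so $v$ is feasible and one obtains $\chi(x_k)\ge a\min(1,r)\,\|x_k-x_*\|$. Note that this recovers the lemma only up to the factor $\min(1,r)$; the bare constant $a$ claimed in the statement (and asserted by the paper's proof) cannot hold in general, as the one-dimensional example $f(x)=\tfrac{a}{2}x^2$ on $\D=[-r,1]$ shows: there $\chi(x)=ax\min(1,r+x)<ax$ for $0<x<1-r$. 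The weakened constant is harmless downstream; it only rescales the thresholds in Lemmas \ref{lem:inte} and \ref{lem:inte_2}.
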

\begin{proof}
$$a \|x_k - 
{x_\star}\| \leq\frac {f(x)- f({x_\star})}{\|x_k - 
{x_\star}\|}\leq \frac {-\nabla f(x)({x_\star} -x_k)}{\|x_k - 
{x_\star}\|}\leq \chi(x_k),$$
by definition of $\chi(x_k).$
\end{proof}
\begin{lemma}\label{lem:bound_increase_grad_cons} If $f$ satisfies Assumption \ref{ass:reg} and if ${x_\star}$ is in the interior of $\D$ and the algorithm achieves descent at each iteration, then 
$$\forall k'>k, \|\chi(x_{k'}) \| \leq \left(\frac{\beta}{a}\right)^{3/2}\|\chi(x_{k}) \|.$$
\end{lemma}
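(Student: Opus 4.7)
The plan is to mirror the proof of Lemma~\ref{lem:bound_increase_grad}, substituting $\chi$ for $\|\nabla f\|$ where appropriate and paying one extra factor of $(\beta/a)^{1/2}$ to bridge between the two quantities via Lemma~\ref{lem:use}. The hypothesis $x_\star \in \mathrm{int}(\D)$ is used to invoke first-order optimality, i.e.\ $\nabla f(x_\star)=0$, which is what allows smoothness to be applied at $x_\star$ without any boundary correction term.

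I would begin from the trivial bound $\chi(x_{k'}) \leq \|\nabla f(x_{k'})\|$, which follows from Cauchy--Schwarz applied to any unit vector admissible in the maximization defining $\chi(x_{k'})$. I would then apply smoothness together with $\nabla f(x_\star)=0$ to obtain $\|\nabla f(x_{k'})\| \leq \beta\,\|x_{k'}-x_\star\|$, and replay the standard chain used in Lemma~\ref{lem:bound_increase_grad}: strong convexity gives $\|x_{k'}-x_\star\| \leq \sqrt{(f(x_{k'})-f(x_\star))/a}$; the descent hypothesis gives $f(x_{k'}) \leq f(x_k)$; convexity at $x_k$ gives $f(x_k)-f(x_\star) \leq \nabla f(x_k)^T(x_k-x_\star)$; and Cauchy--Schwarz bounds the latter by $\|\nabla f(x_k)\|\,\|x_k-x_\star\|$. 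Combining these steps yields
\[
\chi(x_{k'}) \;\leq\; \frac{\beta}{\sqrt{a}}\,\sqrt{\|\nabla f(x_k)\|\,\|x_k-x_\star\|}.
\]

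The constrained case departs from the unconstrained one precisely at the point where one wishes to re-express the right-hand side in terms of the quantity of interest. Here I would invoke Lemma~\ref{lem:use} to write $\|x_k-x_\star\| \leq \chi(x_k)/a$, and combine it with smoothness to also obtain $\|\nabla f(x_k)\| \leq \beta\,\|x_k-x_\star\| \leq (\beta/a)\,\chi(x_k)$. Substituting both bounds under the square root gives $\chi(x_{k'}) \leq (\beta/\sqrt{a})\sqrt{(\beta/a^2)\,\chi(x_k)^2} = (\beta/a)^{3/2}\chi(x_k)$, which is the desired inequality.

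The only step I expect to require any care is the application of Lemma~\ref{lem:use} for iterates $x_k$ very close to $x_\star$, where $(x_\star-x_k)/\|x_\star-x_k\|$ is implicitly treated as an admissible unit direction in the definition of $\chi(x_k)$; this is unproblematic because $\D$ is convex and contains $x_\star$. Everything else is a mechanical transcription of Lemma~\ref{lem:bound_increase_grad}, with the single substantive new ingredient being Lemma~\ref{lem:use}, which compensates for the loss of the identity $\chi \equiv \|\nabla f\|$ that held in the unconstrained setting.
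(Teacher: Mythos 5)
Your proof is correct and follows essentially the same route as the paper's: bound $\chi(x_{k'})\leq\beta\|x_{k'}-x_\star\|$ via smoothness and $\nabla f(x_\star)=0$, chain through strong convexity and the descent property, and convert back to $\chi(x_k)$ using Lemma~\ref{lem:use}, yielding the same $(\beta/a)^{3/2}$ constant. The only cosmetic difference is that you pass through convexity and Cauchy--Schwarz to reach $f(x_k)-f(x_\star)\leq\|\nabla f(x_k)\|\,\|x_k-x_\star\|$ before invoking smoothness, whereas the paper bounds $f(x_k)-f(x_\star)\leq\beta\|x_k-x_\star\|^2$ directly.
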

\begin{proof}
Like in the proof of Lemma \ref{lem:bound_increase_grad}, we get 
\begin{equation*}
 a\| x_k- x^*\|\leq \|\nabla f(x_k)\|.\end{equation*}
 Hence, 
\begin{align*}
 \beta \|x_{k'} - {x_\star}\|
& \leq \frac{\beta}{\sqrt{a}} \sqrt{f(x_{k'} )- f({x_\star})}
\leq \frac{\beta}{\sqrt{a}} \sqrt{f(x_{k} )- f({x_\star})} 
\\
& \leq \frac{\beta}{\sqrt{a}} \sqrt{\beta (x_k-{x_\star})^2}\leq \frac{\beta^{3/2}}{\sqrt{a}} \sqrt{\frac{\chi(x_k)^2}{a}}\\
& \leq \left(\frac{\beta}{a}\right)^{3/2} \chi(x_k),
\end{align*}
where the first inequality comes from the strong convexity, and the second one comes from the fact that the algorithm is a descent, the third one comes from Lemma \ref{lem:use}.

Now let $ v= \argmax_{x+ v \in \D, ~ \|v\| \leq 1} - v^T\nabla f(x_{k'})$.
$$- v^\top \nabla f(x_{k'})= - v^\top \nabla f(x_{k'}) +  v^\top  \nabla f(x_{*}) \leq \| \nabla f(x_{k'}) - \nabla f(x_{*}) \| \leq \beta \|x_k - {x_\star}\|,$$
because $\nabla f(x_{*})=0.$
This concludes the proof.
\end{proof}
\subsection{Regret Analysis when the Optimum is in the interior of $\D$}\label{app:interior_proof}

Let us assume that ${x_\star}$ is in the interior of $\D$.
Let us denote by $\Delta$ the distance from ${x_\star}$ to the closest boundary, and by $r= \Delta/4$.
\begin{lemma}\label{lem:inte}
If $\chi(x) \leq a r$ then $\|x- {x_\star}\|\leq r$.
\end{lemma}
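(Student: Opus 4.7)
My plan is to invoke Lemma \ref{lem:use} directly. That lemma states $a\|x - x_*\| \leq \chi(x)$; combining it with the hypothesis $\chi(x) \leq ar$ and dividing by $a > 0$ immediately yields $\|x - x_*\| \leq r$. So the entire proof reduces to a one-line application of Lemma \ref{lem:use}, provided that lemma still applies in the present constrained setting.

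The only thing to verify, then, is that the proof of Lemma \ref{lem:use} carries over to $\D$: its argument takes the direction $v = (x_* - x)/\|x_* - x\|$ as a candidate in the maximum defining $\chi(x)$, which requires $x + v \in \D$ with $\|v\|\leq 1$. Since $\D$ is polyhedral hence convex and both $x, x_* \in \D$, the segment $[x,x_*]$ lies in $\D$; either $\|x_*-x\| \leq 1$, in which case $v = x_*-x$ satisfies $\|v\|\leq 1$ and $x + v = x_* \in \D$, or a unit step along the segment from $x$ towards $x_*$ stays inside $\D$. In either case $\chi(x) \geq -\nabla f(x)^\top v$ with $v$ pointing towards $x_*$, and the chain of inequalities in Lemma \ref{lem:use} goes through.

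I expect essentially no obstacle: the claim is a trivial rephrasing of strong convexity via the criticality measure $\chi$, and its real purpose is downstream. Namely, together with the choice $r = \Delta/4$, Lemma \ref{lem:inte} will be the bridge that turns bounds on $\chi(x_k)$ (the quantity controlled through Lemma \ref{lem:vers_th41_grat}) into bounds on $\|x_k - x_*\|$, thereby ensuring that once $\chi(x_k)$ becomes small enough the iterate is safely in the interior of $\D$, the approximate tangent cone $T(x_k,\alpha_k)$ eventually coincides with $\R^d$, and the subsequent analysis reduces to the unconstrained regime treated in Appendix \ref{sec:noisy_unconstrained}.
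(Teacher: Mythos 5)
Your proof is correct and is essentially identical to the paper's: the paper also derives Lemma \ref{lem:inte} as a one-line consequence of Lemma \ref{lem:use}, merely phrasing it contrapositively ($\|x-x_*\|\geq r$ would force $ar \leq a\|x-x_*\| \leq \chi(x)$). Your additional remarks about the feasibility of the direction used inside Lemma \ref{lem:use} concern that lemma's own proof, which the paper takes as already established, so they are not needed here.
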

If $\|x- {x_\star}\|\geq r$
then $r\leq\frac{1}{a} \chi(x)$ thanks to Lemma \ref{lem:use}.
\begin{lemma}\label{lem:inte_2}
For any $k\geq k\left(\left(a/\beta\right)^{3/2}a r\right)$, $\|x_k - {x_\star}\|\leq r$.
\end{lemma}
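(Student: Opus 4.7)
The proof I would write chains together the three preceding lemmas in this subsection almost mechanically, with essentially no new work beyond choosing the right value of $\epsilon$.

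My plan is to set $\epsilon := (a/\beta)^{3/2} a r$ and observe that by the very definition of $k(\epsilon)$, the first iteration at which $\chi$ crosses below $\epsilon$, we have $\chi(x_{k(\epsilon)}) \leq (a/\beta)^{3/2} a r$. Then, working on the high-probability event $\cap_{k \leq K}\mathcal{E}_k$ (so that by Lemma~\ref{lem:desc} the algorithm is a descent, hence Lemma~\ref{lem:bound_increase_grad_cons} applies), I would propagate this bound forward: for every $k \geq k(\epsilon)$,
\begin{equation*}
\chi(x_k) \leq \left(\frac{\beta}{a}\right)^{3/2} \chi(x_{k(\epsilon)}) \leq \left(\frac{\beta}{a}\right)^{3/2} \left(\frac{a}{\beta}\right)^{3/2} a r = a r.
\end{equation*}
Finally, I would invoke Lemma~\ref{lem:inte}, which states that $\chi(x_k) \leq a r$ implies $\|x_k - x_*\| \leq r$, to conclude.

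The only subtle point, and arguably the main (mild) obstacle, is justifying the use of Lemma~\ref{lem:bound_increase_grad_cons}, which requires that the algorithm achieves descent at every iteration. This is precisely what Lemma~\ref{lem:desc} provides on the event $\cap_{k \leq K}\mathcal{E}_k$; in the context of this subsection this event is implicitly assumed throughout, just as it is for Lemma~\ref{lem:vers_th41_grat} which precedes the statement. One should also check that $k(\epsilon) \leq K$ for the chosen $\epsilon$ so that $k(\epsilon)$ is well-defined; this follows from Lemma~\ref{lem:vers_th41_grat} applied to $\epsilon = (a/\beta)^{3/2} a r$ provided $K$ is large enough, which in our setting will be guaranteed because the algorithm is run until the budget $T$ is exhausted.

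Beyond that, the proof is a two-line chain and does not require any calculation or auxiliary construction. The interest of the lemma lies entirely in what it enables downstream, namely that after a finite (deterministic, in terms of $\epsilon$) number of iterations the iterate is trapped in a ball of radius $r = \Delta/4$ around $x_*$, so that the $\alpha_k$-binding constraints eventually disappear and the analysis reduces to the unconstrained case treated in Appendix~\ref{sec:noisy_unconstrained}.
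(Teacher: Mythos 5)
Your proof is correct and is essentially identical to the paper's: set $\epsilon = (a/\beta)^{3/2} a r$, propagate the bound $\chi(x_{k(\epsilon)}) \leq \epsilon$ forward via Lemma~\ref{lem:bound_increase_grad_cons} to get $\chi(x_k) \leq ar$ for all later $k$, then conclude with Lemma~\ref{lem:inte}. Your explicit remarks about the descent property (via Lemma~\ref{lem:desc} on $\cap_k \mathcal{E}_k$) and the well-definedness of $k(\epsilon)$ are appropriate clarifications of hypotheses the paper leaves implicit.
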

Thanks to Lemma \ref{lem:bound_increase_grad}, after  $k\left(\left(a/\beta\right)^{3/2}a r\right)$ iterations, $\chi(x_k)\leq ar$.
And thanks to the previous lemma, we thus have $\|x_k- {x_\star}\|\leq r$.

\begin{lemma}
Set $k_s$ the index of the  first successful iteration following \\ $ k\left(\left(a/\beta\right)^{3/2}a r\right)$ where $\alpha_k\leq \Delta/2$. After iteration $k_s$,  $T(x_k, \alpha_k)$ spans all directions in $\R^d$, so that the instantaneous regret is the same as that of the algorithm in the unconstrained case with initial point $x_{k_s}$ and initial step-size $\alpha_{k_s}$. The iteration of this first successful iteration comes before $k_i := k\left(\left(a/\beta\right)^{3/2} ar\right) + \frac{\log( \alpha_0/\Delta)}{\log{1/\theta}}$.
\end{lemma}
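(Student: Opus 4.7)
The lemma has two parts: that $T(x_k,\alpha_k)=\R^d$ for every $k\geq k_s$, and that $k_s\leq k_i$. I would address them in sequence.

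For the tangent-cone claim, my plan is to combine Lemma \ref{lem:inte_2} with the definition of $\Delta$ via the triangle inequality. Lemma \ref{lem:inte_2} gives $\|x_k-x_*\|\leq r=\Delta/4$ for every $k\geq k_0:=k\!\left((a/\beta)^{3/2}a r\right)$. Since $\mathrm{dist}(x_*,\mathcal{C}_i)\geq \Delta$ for every constraint hyperplane $\mathcal{C}_i$, this yields $\mathrm{dist}(x_k,\mathcal{C}_i)\geq 3\Delta/4$. Hence, as soon as $\alpha_k$ is below a suitable fraction of $\Delta$ (the condition ``$\alpha_k\leq 2\Delta$'' in the statement should be read in this sense, e.g.\ $\alpha_k\leq \Delta/2$, so that the triangle-inequality margin applies), no constraint is $\alpha_k$-binding; consequently $I(x_k,\alpha_k)=\emptyset$, $N(x_k,\alpha_k)=\{0\}$, and $T(x_k,\alpha_k)=\R^d$. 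Because we chose $\gamma=1$, the sequence $(\alpha_k)$ is non-increasing along the trajectory, so the property propagates from $k_s$ to every subsequent iteration.

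For the bound on $k_s$, I would track how $\alpha_k$ evolves past $k_0$. Each unsuccessful iteration multiplies $\alpha_k$ by $\theta<1$, while each successful iteration leaves it unchanged. Starting from $\alpha_{k_0}\leq\alpha_0$, after at most $U:=\lceil \log(\alpha_0/\Delta)/\log(1/\theta)\rceil$ unsuccessful iterations the step size falls below $\Delta$, placing us in the regime where $T(x_k,\alpha_k)=\R^d$. If the first $U$ iterations after $k_0$ are all unsuccessful, the step size reaches the threshold by index $k_0+U$ and the first subsequent success delivers $k_s$; otherwise a successful iteration occurs earlier within the window, possibly still at large step size, and must be accounted for.

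The main obstacle is exactly to prevent a long chain of successful iterations at large step size from pushing $k_s$ past $k_i$. I would control this via the descent property: by Lemma \ref{lem:concentration}, every successful iteration with $\alpha_k>\Delta$ forces a drop of $f$ by at least $c\Delta^2/2$, while $\beta$-smoothness together with Lemma \ref{lem:inte_2} gives $f(x_{k_0})-f(x_*)\leq \beta r^2/2=\beta\Delta^2/32$. The number of ``large-step'' successes is therefore bounded by a dimension-free constant of order $\beta/c$, which is absorbed into the additive term $\log(\alpha_0/\Delta)/\log(1/\theta)$ defining $k_i$ (or, equivalently, the stated bound $k_s\leq k_i$ holds up to an additive $O(1)$ that does not affect the downstream regret calculus). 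Combining this constant-order count with the contraction count on unsuccessful iterations yields $k_s\leq k_i$.
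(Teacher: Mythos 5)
Your argument is correct in substance but reaches the key step-size bound by a genuinely different route than the paper. The paper's proof is essentially a two-line geometric observation: at any successful iteration $k$ after $k_0:=k\left((a/\beta)^{3/2}ar\right)$, the step length equals the displacement of the iterate, so $\alpha_k=\|x_{k+1}-x_k\|\leq\|x_{k+1}-x_*\|+\|x_k-x_*\|\leq 2r=\Delta/2$ by Lemma \ref{lem:inte_2}; hence \emph{every} success after $k_0$ automatically has a small step, there are no ``large-step successes'' to account for, and unsuccessful iterations inherit the bound because $\alpha$ only shrinks. Your route instead combines monotonicity of $(\alpha_k)$ with a descent/potential count via Lemma \ref{lem:concentration} --- each success at $\alpha_k>\Delta$ costs at least $c\Delta^2/2$ out of a budget $f(x_{k_0})-f(x_*)\leq\beta\Delta^2/32$ --- which correctly caps the number of large-step successes by a constant of order $\beta/c$, but only yields $k_s\leq k_i+O(1)$ rather than the stated inequality (the constant is not ``absorbed'' into the $\log(\alpha_0/\Delta)/\log(1/\theta)$ term, which counts shrinkage steps, not successes). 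The paper's observation makes that bookkeeping unnecessary, since the count of large-step successes is in fact zero. To be fair, the paper's own two-sentence proof never explicitly establishes the iteration-count claim $k_s\leq k_i$ either --- it only establishes the step-size bound --- and, as you note, the additive constant is immaterial for the downstream regret; moreover your reading of the threshold ``$\alpha_k\leq 2\Delta$'' as effectively $\alpha_k\leq\Delta/2$ (so that no constraint is $\alpha_k$-binding when $\mathrm{dist}(x_k,\mathcal{C}_i)\geq 3\Delta/4$) matches what the paper actually derives.
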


\begin{proof}
If $k$ is a successful iteration $\alpha_k \leq 2r = \Delta / (2)$, since $\|x_k - {x_\star}\|\leq r$ and $\|x_{k+1} - {x_\star}\|\leq r$.
And if $k$ is an unsuccessful iteration, it comes after one of those successes and a sequence of unsuccessful iterations, which yields $\alpha_k \leq  \Delta / (2)$.
\end{proof}
\begin{lemma}\label{lem:reg_con_noise_k}
On $\cap_{k\leq K}\mathcal{E}_k$,

$$\tilde{R}_K \leq C_7 \log (2/\delta)(1/\theta)^{-4 C_f} + C_5\log(2/\delta)^{1/3} \left(\sum_{k=1}^K N_k\right)^{2/3},$$
where $C_7 = (S_{\mathbb{D}}+1) U \frac{32 }{c^2\alpha_0 ^4 }\frac{1}{(1/\theta) -1}$\\
 and $$C_f = E_1 \left( \left(\frac{a}{\beta}\right)^{3/2} ar \right)^{-2} + E_2 + \frac{\log( \alpha_0/\Delta)}{\log{1/\theta}} + \frac{\beta \Delta^2}{\alpha_0 }.$$
\end{lemma}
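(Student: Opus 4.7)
My plan is to split the cumulative regret $\tilde{R}_K$ at the iteration $k_s$ introduced just before the statement, i.e.\ the first successful iteration following $k\bigl((a/\beta)^{3/2}ar\bigr)$ at which $\alpha_k\le 2\Delta$. The lemma preceding this one guarantees that from $k_s$ onward, no constraint is $\alpha_k$-binding, so $T(x_k,\alpha_k)=\mathbb{R}^d$ and the algorithm proceeds as in the noisy \emph{unconstrained} setting. I would write
\[
\tilde{R}_K \;=\; \underbrace{\sum_{k=0}^{k_s}\text{(reg. of iter.\ }k\text{)}}_{\text{Phase 1}} \;+\; \underbrace{\sum_{k=k_s+1}^{K}\text{(reg. of iter.\ }k\text{)}}_{\text{Phase 2}}
\]
and handle the two phases separately.

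For Phase 1, I would use the crude uniform bound: since $f(x)-f(x_*)\le U$ on $\mathcal{D}$ and each iteration performs at most $(|\mathbb{D}|+1)N_k$ function evaluations, the Phase 1 contribution is at most $U(|\mathbb{D}|+1)\sum_{k=0}^{k_s}N_k$. Since $\alpha_k$ can shrink by at most a factor of $\theta$ per iteration, one has $\alpha_k\ge \alpha_0\theta^k$, hence $N_k=\frac{32\sigma^2\log(2/\delta)}{c^2\alpha_k^4}\le \frac{32\sigma^2\log(2/\delta)}{c^2\alpha_0^4}\theta^{-4k}$. Summing this geometric series up to $k_s$ and using $k_s\le C_f$ (which in turn follows from Lemma \ref{lem:vers_th41_grat} applied at $\epsilon=(a/\beta)^{3/2}ar$, combined with Lemmas \ref{lem:inte}--\ref{lem:inte_2} and the extra number of unsuccessful iterations needed to shrink $\alpha_k$ below $\Delta/2$) yields the first term $C_7\log(2/\delta)(1/\theta)^{4C_f}$.

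For Phase 2, I would apply the statement of the preceding lemma that the trajectory from $k_s$ onward is identical in distribution to that of FDS-Plan on the unconstrained problem, started at $x_{k_s}$ with initial step-size $\alpha_{k_s}\le 2\Delta$. I could then invoke Lemma \ref{lem:DSwS_hor} directly to bound the Phase 2 regret by $C_5\log(2/\delta)^{1/3}\bigl(\sum_{k=k_s+1}^{K}N_k\bigr)^{2/3}\le C_5\log(2/\delta)^{1/3}\bigl(\sum_{k=1}^{K}N_k\bigr)^{2/3}$. The constant term $C_4\log(2/\delta)$ of Lemma \ref{lem:DSwS_hor} is absorbed into the Phase 1 bound via the choice of $C_7$ (the ``pre-$k_f$'' iterations of the restarted unconstrained process lie inside the set-up already covered by the uniform $U$-bound).

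The main obstacle is justifying that the unconstrained analysis transfers verbatim after $k_s$. This requires checking, on $\bigcap_{k\le K}\mathcal{E}_k$, that the chain Lemma \ref{lem:bound_increase_grad_noise}$\,\to\,$Lemma \ref{lem:gradnorm_unsuccessful iterations_noise}$\,\to\,$Lemma \ref{lem:sumsteps_noise}$\,\to\,$Lemma \ref{lem:gra_al_noise} still applies when we reset the iteration index to $k_s$: successful iterations yield true descents of at least $\rho(\alpha_k)/2$ by Lemma \ref{lem:concentration}, and since no constraint is active, the cosine-measure argument used in the unconstrained Lemma \ref{lem:gradnorm_unsuccessful iterations_noise} produces a bound on $\|\nabla f(x_k)\|$ itself (rather than only on $\|P_{T(x_k,\alpha_k)}(-\nabla f(x_k))\|$). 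Once this transfer is established, the Hölder inequality step used in Lemma \ref{lem:DSwS_hor} applied to $\sum_{k\ge k_s}\alpha_k^{-2}=\sum\alpha_k^{2/3}\alpha_k^{-8/3}$ immediately produces the $(\sum N_k)^{2/3}$ dependence, and summing the two phases completes the proof.
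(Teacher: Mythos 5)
Your proposal follows essentially the same route as the paper: a two-phase decomposition in which the early iterations are bounded crudely by $U(|\mathbb{D}|+1)\sum_k N_k$ with $N_k$ controlled by the geometric decay $\alpha_k\ge\alpha_0\theta^k$ and the split index bounded via Lemma \ref{lem:vers_th41_grat}, followed by a transfer to the unconstrained analysis of Lemma \ref{lem:DSwS_hor} once $T(x_k,\alpha_k)=\mathbb{R}^d$. The paper splits at $k_f'$ (the first unsuccessful iteration after $k_i$) rather than at $k_s$, but your absorption of the ``pre-$k_f$'' iterations of the restarted process into the first term is exactly why the paper's $C_f$ carries the extra $\beta\Delta^2/\alpha_0$ summand, so the two arguments coincide.
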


\begin{proof}
In the proof of Lemma \ref{lem:DSwS_hor}, we isolated the steps preceding the first unsuccessful iteration. Similarly here, we treat the iterations before 
the first unsuccessful iteration after $k_i$, denoted by $k_f'$, separately from  other iterations.
\begin{align*}
\tilde{R}_K &\leq \sum_{k=0}^{k_f'}N_k (f(x_k) - f({x_\star})) + \sum_{k=0}^{k_f'}\left(N_k\sum_{v\in \mathbb{D}_k}  f(x_k+ \alpha_k v) - f({x_\star})\right)\\
& + \sum_{k=k_f'}^K \left(N_k(|\mathbb{D}_k|+1)(f(x_k) - f({x_\star}))\right) +\sum_{k=k_f'}^K N_k\left(\sum_{v\in \mathbb{D}_k}  f(x_k+ \alpha_k v) - f(x_k)\right).
\end{align*}
Because $f(x) - f({x_\star})$ is bounded by $U$, 
\begin{align*}
&\sum_{k=0}^{k_f'}N_k (f(x_k) - f({x_\star})) + \sum_{k=0}^{k_f'}\left(N_k\sum_{v\in \mathbb{D}_k}  f(x_k+ \alpha_k v) - f({x_\star})\right)\\
&\leq \sum_{k=0}^{k_f'} (|\mathbb{D}_k|+1)N_k U  .
\end{align*}
By rewriting $N_k$, 
\begin{align*}
\sum_{k=0}^{k_f'} (|\mathbb{D}_k|+1)N_k U &\leq \sum_{k=0}^{k_f'} (|\mathbb{D}_k|+1) U \frac{32 \log (1/\delta)}{c^2\alpha_k^{4}}\\
&\leq  \sum_{k=0}^{k_f'} (|\mathbb{D}_k|+1) U \frac{32 \log (1/\delta)}{c^2\alpha_0 ^4 \theta^{4k}}
\\
&\leq  (S_{\mathbb{D}}+1) U \frac{32 \log (1/\delta)}{c^2\alpha_0 ^4 }\frac{(1/\theta)^{-4 k_f'}}{(1/\theta) -1}.
\end{align*}
Also on $\cap_{k\leq K}\mathcal{E}_k$,
$$k_f'\leq k_i + \frac{f(x_{k_i}) - f({x_\star})}{\alpha_0} \leq k_i + \frac{\beta \Delta^2}{\alpha_0} \leq k\left(\left(\frac{a}{\beta}\right)^{3/2} ar\right) + \frac{\log( \alpha_0/\Delta)}{\log{1/\theta}} + \frac{\beta \Delta^2}{\alpha_k },$$
where the first inequality comes from the same argument used to prove Lemma \ref{lem:k_f}, the second inequality comes from the smoothness of $f$ and the third one comes from the definition of $k_i$.

On $\cap_{k\leq K}\mathcal{E}_k$, 
$$k\left(\left(\frac{a}{\beta}\right)^{3/2} ar\right)\leq E_1\left(\left(\frac{a}{\beta}\right)^{3/2} ar\right)^{-2} + E_2,$$
with $E_1$, $E_2$ defined in \citep{gratton2019direct}.
Finally, we focus on the part of the regret accumulated before $k_f'$.
 On $\cap_{k\leq K}\mathcal{E}_k$
\begin{multline*}
 \sum_{k=k_f'}^K \left(N_k(|\mathbb{D}_k|+1)(f(x_k) - f({x_\star}))\right) + \sum_{k=k_f'}^K N_k\left(\sum_{v\in \mathbb{D}_k}  f(x_k+ \alpha_k v) - f(x_k)\right)\\
 \leq 
 C_5\log(2/\delta)^{1/3}\left({\sum_{k=1}^K N_k}\right)^{2/3} ,
\end{multline*}
by following exactly the same steps  as those needed to bound the regret in the unconstrained case.
\end{proof}

\begin{repeatthm}{th:FDS-Plan_hor}
 Under Assumptions \ref{ass:reg}, \ref{ass:bounded_D} 
 and \ref{ass:gk}, and if ${x_\star} \in \text{int}(\D)$,   the cumulative regret $R_T$ of FDS-Plan (respectively FDS-Seq) after the first $T$  evaluations of $f$,  satisfies 
$$\E[{R_T}]= O\big(\log(T)^{2/3}T^{2/3}\big)$$
for the choice  $\delta = T^{-4/3}$ (respectively $\delta = T^{-10/3}$ for FDS-Seq).
\end{repeatthm}

\begin{proof} \textbf{for FDS-Plan. }
We denote by $K_T$ the last round reached by the algorithm with $T$ evaluations..
Lemma \ref{lem:reg_con_noise_k} proves that on the event $\cap_{k\leq K_T} \mathcal{E}_k$, 
$$\tilde{R}_K \leq C_7 \log (2/\delta)(1/\theta)^{-4 C_f} + C_5\log(2/\delta)^{1/3} \left(\sum_{k=1}^K N_k\right)^{2/3}.$$

Thanks to Lemma \ref{lem:E_k}, 
$$\pr{\cup_{k=1}^{K_T} \mathcal{E}_k^C} \leq(S_{\mathbb{D}}+1) \pr{\cup_{k=1}^T \mathcal{E}_k^C} \leq (|\mathbb{D}_k|+1) \sum_{t=1}^T T^{-4/3}/2  \leq(S_{\mathbb{D}}+1)  T^{-1/3}$$
since $K_T\leq T$.
Hence, \begin{align*}
\E[{R_T}]&
\leq \frac{4}{3}C_7 (1/\theta)^{-4 C_f} \log (2T)  + \frac{4}{3}\left(\frac{1}{(|\mathbb{D}_k|+1)}\right)^{2/3} 
C_5 \log(2T)^{2/3}T^{2/3} \\
&+ ((S_{\mathbb{D}}+1)) U T^{2/3} \\
&= O((\log T)^{2/3}T^{2/3}).
\end{align*}
\end{proof}
\paragraph{Adaptation of the proof for FDS-Seq} The way of adapting the  proof of FDS-Plan to the case of FDS-Seq of Section \ref{sec:app_reg_wT_n_u} applies verbatim.

{
\section{Details on the implementation of HOO in Section \ref{sec:illustration}}\label{app:HOO}

To implement HOO in the simulations of Section \ref{sec:illustration}, the tree of partitions that we used is
built in the following way. We set the parameter $\rho$ of HOO as
suggested by \cite{bubeck2011x} to $2^{-2/d}$.  A binary tree of depth
$H=\frac{\log(1/T)}{2\log(\rho)}$ of partitions of $[0,1]^d$ is
obtained by recursively halving the cells at each depth $h$ of the
tree along dimension $h \pmod 2$.  At depth $h$, this
approach yields a partition formed by rectangular cells represented by
their lower left corner $[a_{i},b_{i}]$. Then, in order to remove
unwanted cells, we traverse the tree, starting from the leaves, and
remove every cell having an empty intersection with the domain.  Due
to the geometry of the simplex, knowing if a cell intersects the
domain boils down to checking if its representation $[a_{i},b_{i}]$
belongs to it.  When the algorithm selects cell $(h,i)$ at time $t$,
the representation of that cell is chosen as a sampling point.  In the
simulation, the smoothness parameter $\nu_1$ of HOO is set to $16$.\looseness-1}

{
\section{Additional Experiments}\label{app:suppl_exp}

Here, as in Section \ref{sec:illustration}, we focus on the case in which there are three resources ($d=2$).
The loss functions for resources $1$ and $3$ are of the same form as in Section \ref{sec:illustration} and 
$w_i(x)= -\tau_i \frac{\log(1+ \gamma x)} {\log(1+ \gamma )}$ with
$\gamma=2$, $\tau_1= 1$, $\tau_3 = 0.3$, but now the second resource is associated to $w_2(x) = 0.1 \, x$.
This choice of reward functions results in an optimal choice whose second component is zero.
We set the horizon to $T=100,000$ and use a Gaussian noise with
standard deviation $\sigma= 0.1$.

We show the trajectories of FDS-Plan and FDS-Seq in Figure \ref{fig:trajectories_2}. Notice that the trajectories do not change drastically compared to those of Section \ref{sec:illustration}, which seems to indicate that the location of the optimal allocation on the border of the feasible set is not a problem in practice.
We complement these plots with regret plots (Figure \ref{fig:reg_plots_d3}) of all the algorithms detailed in Section \ref{sec:illustration} run first on the environment described in this same section and second on the environment described above with the optimum on the border of the simplex.
Once again, this seems to show that the optimum lying in the border is not an issue in practice.
Incidentally, this last plot also shows that, as expected, UCB should be the preferred algorithm in dimension $d=2$, as it is simpler and gives excellent results.

 \begin{figure}[ht]
\centering
\subfigure[FDS-Plan]{
\includegraphics[width=0.23\linewidth]{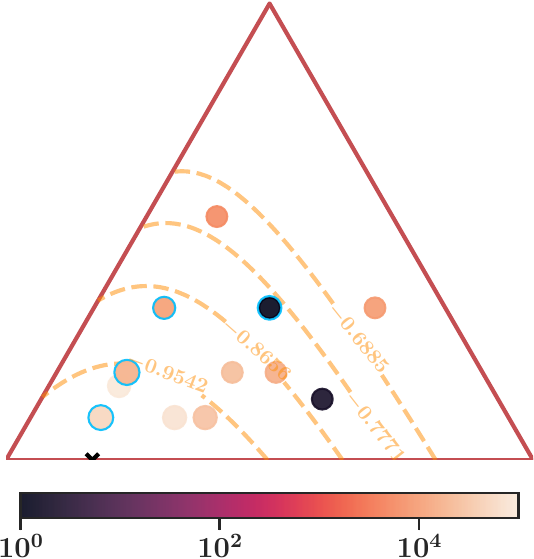}
\label{fig:FDS-Plan_4}
}\hspace{0.12\linewidth}
\subfigure[FDS-Seq]{%
\includegraphics[width=0.23\linewidth]{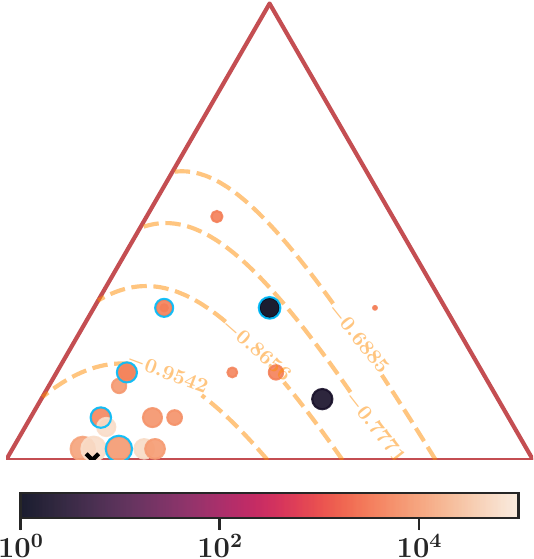}
\label{fig:FDS-Seq_4}
} 
\caption{Single trajectories with the optimum on the border}
\label{fig:trajectories_2}
\end{figure}

\begin{figure}[ht]
\centering
\subfigure[Optimum in the interior]{
\includegraphics[width=0.45\linewidth]{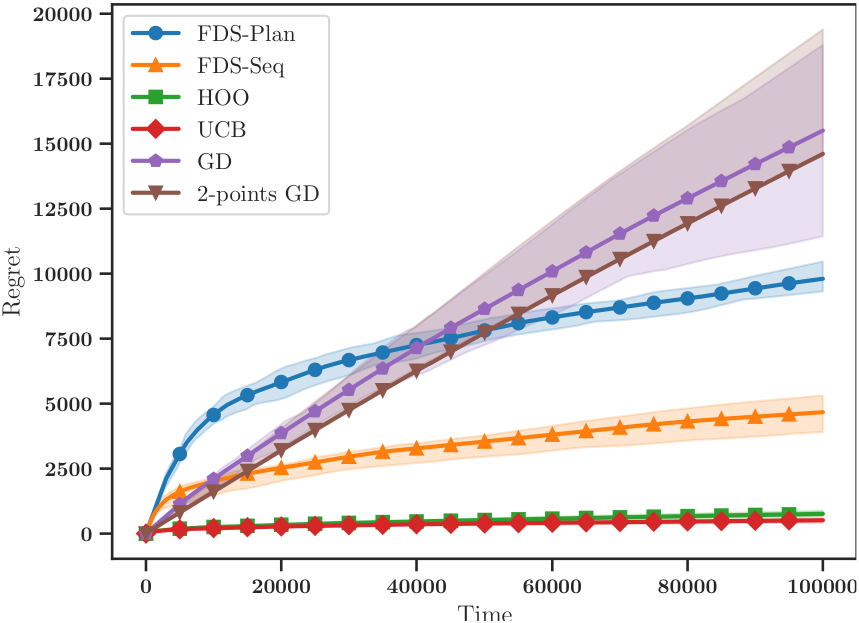}
\label{fig:FDS-Plan_3}
}\quad
\subfigure[Optimum on the border]{%
\includegraphics[width=0.45\linewidth]{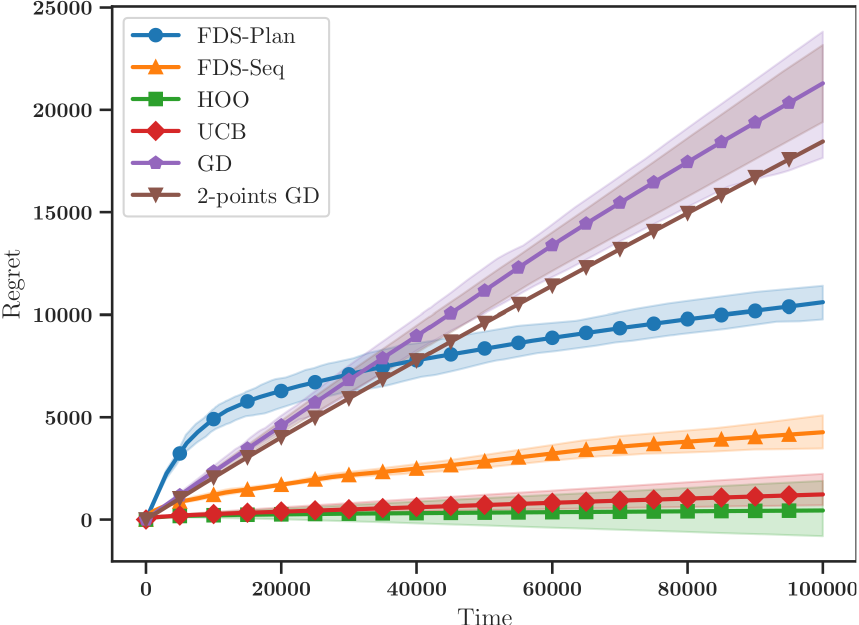}
\label{fig:FDS-Seq_3}
} 
\caption{Regret plots in dimension $d=2$}
\label{fig:reg_plots_d3}
\end{figure}
}

\end{document}